\newcommand*\bigcdot{\mathpalette\bigcdot@{.5}}
\newcommand*\bigcdot@[2]{\mathbin{\vcenter{\hbox{\scalebox{#2}{$\m@th#1\bullet$}}}}}
\def\G{{\mathcal G}}
\def\N{{\mathcal N}}
\def\M{{\mathcal M}}
\def\aChosen{a_{\mathrm{chosen}}}
\def\aEnv{a_{\mathrm{env}}}
\def\aEnvt{a_{\mathrm{env},t}}
\newcommand*\norm[1]{\left\|#1\right\|}
\newcommand*\val[2]{#1\tiny\raisebox{.2ex}{$\scriptstyle\pm$#2}}
\newcommand*\valbf[2]{\val{\textbf{#1}}{#2}}
\newcolumntype{P}[1]{>{\centering\arraybackslash}p{#1}}
\newcolumntype{L}[1]{>{\raggedright\arraybackslash}p{#1}}
\newif\ifcomments
\newtheorem{proposition}{Proposition}
\newtheorem{lemma}[proposition]{Lemma}
\newtheorem{theorem}[proposition]{Theorem}
\newtheorem{defn}{Definition}
\DeclareMathOperator*{\E}{\mathbb{E}}
\DeclareMathOperator*{\R}{\mathbb{R}}
\def\G{{\mathcal G}}
\def\S{{\mathcal S}}
\def\A{{\mathcal A}}
\newcommand{\condE}[1]{\E_{\mid #1}}
\icmltitlerunning{Exploration Conscious Reinforcement Learning Revisited} 
\begin{document}

\twocolumn[
\icmltitle{Exploration Conscious Reinforcement Learning Revisited}

\icmlsetsymbol{equal}{*}
\begin{icmlauthorlist}
\icmlauthor{Lior Shani}{equal,to}
\icmlauthor{Yonathan Efroni}{equal,to}
\icmlauthor{Shie Mannor}{to}

\end{icmlauthorlist}

\icmlaffiliation{to}{Department of Electrical Engineering, Technion, Haifa, Israel}

\icmlcorrespondingauthor{Lior Shani}{shanlior@gmail.com}
\icmlcorrespondingauthor{Yonathan Efroni}{jonathan.efroni@gmail.com}

\icmlkeywords{Machine Learning, Reinforcement Learning, Exploration Conscious, Exploration, ICML}

\vskip 0.3in
]

\printAffiliationsAndNotice{\icmlEqualContribution} 

\begin{abstract}
  


The Exploration-Exploitation tradeoff arises in Reinforcement Learning when one cannot tell if a policy is optimal. Then, there is a constant need to explore new actions instead of exploiting past experience. In practice, it is common to resolve the tradeoff by using a fixed exploration mechanism, such as  $\epsilon$-greedy exploration or by adding Gaussian noise, while still trying to learn an optimal policy. In this work, we take a different approach and study exploration-conscious criteria, that result in optimal policies with respect to the exploration mechanism. Solving these criteria, as we establish, amounts to solving a surrogate Markov Decision Process. We continue and analyze properties of exploration-conscious optimal policies and characterize two general approaches to solve such criteria. Building on the approaches, we apply simple changes in existing tabular and deep Reinforcement Learning algorithms and empirically demonstrate superior performance relatively to their non-exploration-conscious counterparts, both for discrete and continuous action spaces.
\end{abstract}

\section{Introduction}


The main goal of Reinforcement Learning (RL) \cite{sutton1998reinforcement} is to find an optimal policy for a given decision problem. 
A major difficulty arises due to the Exploration-Exploitation tradeoff, which characterizes the omnipresent tension between exploring new actions and exploiting the so-far acquired knowledge. Considerable line of work has been devoted for dealing with this tradeoff. Algorithms that explicitly balance between  exploration and exploitation were developed for tabular RL \cite{kearns2002near,brafman2002r,jaksch2010near,osband2013more}. However, generalizing these results to approximate RL, i.e, when using function approximation, remains an open problem. On the practical side, recent works combined more advanced exploration schemes in approximate RL (e.g, \citet{bellemare2016unifying,fortunato2017noisy}), inspired by the theory of tabular RL. Nonetheless, even in the presence of more advanced mechanisms, $\epsilon$-greedy exploration is still applied \cite{bellemare2017distributional,dabney2018distributional,osband2016deep}.  More generally,  the traditional and simpler $\epsilon$-greedy scheme \cite{sutton1998reinforcement,asadi2016alternative} in discrete RL, and Gaussian action noise in continuous RL, are still very useful and popular in practice \cite{mnih2015human,mnih2016asynchronous,silver2014deterministic,schulman2017proximal,horgan2018distributed}, especially due to their simplicity. 



These types of exploration schemes share common properties. First, they all fix some exploration parameter beforehand, e.g, $\epsilon$, the `inverse temperature' $\beta$, or the action variance $\sigma$ for the $\epsilon$-greedy, soft-max and Gaussian exploration schemes, respectively.  By doing so, the balance between exploration and exploitation is set. Second, they all explore using a random policy, and exploit using current estimate of the \emph{optimal policy}. In this work, we follow a different approach, when using these fixed exploration schemes: exploiting by using an estimate of the optimal policy w.r.t. the \emph{exploration mechanism}.

\emph{Exploration-Consciousness} is the main reason for the improved performance of on-policy methods like Sarsa and Expected-Sarsa \cite{van2009theoretical} over Q-learning during training \cite{sutton1998reinforcement}[Example 6.6: Cliff Walking]. Imagine a simple Cliff-Walking problem: The goal of the agent is to reach the end without falling of the cliff, where the optimal policy is to go alongside the cliff. While using a fixed-exploration scheme, playing a near optimal policy which goes alongside the cliff will lead to a significant sub-optimal performance. This, in turn, will hurt the acquisition of new experience needed to learn the optimal policy. However, learning to act optimally w.r.t. the exploration scheme can mitigate this difficultly; the agent learns to reach the goal while keeping a safe enough distance from the cliff.

In the past, tabular q-learning-like exploration-conscious algorithms were suggested \cite{john1994best,littman1997generalized,van2009theoretical}. Here we take a different approach, and focus on exploration conscious \emph{policies}. The main contributions of this work are as follows:

\begin{itemize}
\item We define exploration-consciousness optimization criteria, for discrete and continuous actions spaces. The criteria are interpreted as finding an optimal policy within a restricted set of policies. Both, we show, can be reduced to solving a surrogate MDP. The surrogate MDP approach, to the best of our knowledge, is a new one, and serves us repeatedly in this work.
\item We formalize a bias-error sensitivity tradeoff. The solutions are biased w.r.t. the optimal policy, yet, are less sensitive to approximation errors.
\item We establish two fundamental approaches to practically solve Exploration-Conscious optimization problems. Based on these, we formulate algorithms in discrete and continuous action spaces, and empirically test the algorithms on the Atari and MuJoCo domains. 
\end{itemize}

\section{Preliminaries}\label{sec: preliminaries}
Our framework is the infinite-horizon discounted Markov Decision Process (MDP). An MDP is defined as the 5-tuple $(\mathcal{S}, \mathcal{A},P,R,\gamma)$ \cite{puterman1994markov}, where ${\mathcal S}$ is a finite state space, ${\mathcal A}$ is a compact space, $P \equiv P(s'|s,a)$ is a transition kernel, $R \equiv r(s,a)\in[0,R_{\max}]$ is a bounded reward function, and $\gamma\in[0,1)$. Let ${\pi: \mathcal{S}\rightarrow \mathcal{P}(\mathcal{A})}$ be a stationary policy, where $\mathcal{P}(\mathcal{A})$ is a probability distribution on $\mathcal{A}$, and denote $\Pi$ as the set of deterministic policies, $\pi\in\Pi : \mathcal{S}\rightarrow \mathcal{A}$. Let $v^\pi \in \mathbb{R}^{|\mathcal{S}|}$ be the value of a policy $\pi,$ defined in state $s$ as $v^\pi(s) \equiv \condE{s} ^\pi[\sum_{t=0}^\infty\gamma^tr(s_t,a_t)]$, where $a_t\sim \pi(s_t)$, and $\condE{s}^\pi$ denotes expectation w.r.t. the distribution induced by $\pi$ and conditioned on the event $\{s_0=s\}.$  
It is known that ${v^\pi=\sum_{t=0}^\infty \gamma^t (P^\pi)^t r^\pi=(I-\gamma P^\pi)^{-1}r^\pi}$, with the component-wise values $[P^\pi]_{s,s'}  \triangleq \mathbb{E}_{a\sim \pi}[P(s'\mid s, a)]$ and $[r^\pi]_s \triangleq  \mathbb{E}_{a\sim\pi}[r(s,a)]$. Furthermore, the $q$-function of $\pi$ is given by ${q^{\pi}(s,a)= r(s,a)+\gamma \sum_{s'} P(s'\mid s,a)v^{\pi}(s')}$, and represents the value of taking an action $a$ from state $s$ and then using the policy $\pi$.

Usually, the goal is to find $\pi^*$ yielding the optimal value, $\pi^* \in \arg \max_{\pi\in \Pi} \mathbb{E} ^\pi[\sum_{t=0}^\infty\gamma^tr(s_t,a_t)],$ and the optimal value is ${v^* = v^{\pi^*}}$. It is known that optimal deterministic policy always exists \cite{puterman1994markov}. To achieve this goal the following classical operators are defined (with equalities holding component-wise). $\forall v,\pi$ :  
\begin{align}
T^\pi v =&  r^\pi +\gamma P^\pi v,~ T v = \max_\pi T^\pi v, \label{eq: T opt}\\
&\G(v)= \{\pi : T^\pi v = T v\}, \label{eq: G greeedy}
\end{align}
where $T^\pi$ is a linear operator, $T$ is the optimal Bellman operator and both $T^\pi$ and $T$ are $\gamma$-contraction mappings w.r.t. the max norm. It is known that the unique fixed points of $T^\pi$ and $T$ are $v^\pi$ and $v^*$, respectively. $\G(v)$ is the standard set of 1-step greedy policies w.r.t. $v$. 
Furthermore, given $v^*$, the set $\G(v^*)$ coincides with that of stationary optimal policies. It is also useful to define the $q$-optimal Bellman operator, which is a $\gamma$-contraction, with fixed point~$q^*$.
\begin{align}
T^{q}q (s,a)\! =\! r(s,a)\!+\!\gamma \sum_{s'} P(s'\mid s,a) \max_{a'} q(s',a'), \label{eq: def optimal bellman q mdp}
\end{align}

In this work, the use of \emph{mixture policies} is abundant. We denote the $\alpha\in[0,1]$-convex mixture of policies $\pi_1,\ \pi_2$ by $
\pi^\alpha(\pi_1,\pi_2)\triangleq(1-\alpha)\pi_1+\alpha\pi_2$. Importantly, $\pi^\alpha(\pi_1,\pi_2)$ can be interpreted as a stochastic policy s.t with w.p $(1-\alpha)$ the agent acts with $\pi_1$ and w.p $\alpha$ acts with $\pi_2$.


\section{The $\alpha$-optimal criterion}\label{sec: alpha objective}


In this section, we define the notion of $\alpha$-optimal policy w.r.t. a policy, $\pi_0$. We then claim that finding an $\alpha$-optimal policy can be done by solving a \emph{surrogate} MDP. We continue by defining the surrogate MDP, and analyze some basic properties of the $\alpha$-optimal policy.  

Let $\alpha\in [0,1]$. We define  $\pi^*_{\alpha,\pi_0}$ to be the $\alpha$-optimal policy w.r.t. $\pi_0$, and is contained in the following set,
\begin{align}\label{eq:eps_optimization}
&\pi^*_{\alpha,\pi_0} \in \arg\max_{\pi'\in \Pi} {\E}^{\pi^\alpha(\pi',\pi_0)} \left[\sum_{t=0} \gamma^t r(s_t,a_t)) \right], 
\end{align}
or, $\pi^*_{\alpha,\pi_0} \in \arg\max_{\pi'} v^{\pi^\alpha(\pi',\pi_0)}$, where ${a_t\sim \pi^\alpha(\pi',\pi_0)}$  and $\pi^\alpha(\pi',\pi_0)$ is the $\alpha$-convex mixture of $\pi'$ and~$\pi_0$, and thus a probability distribution. For brevity, we omit the subscript $\pi_0$, and denote the $\alpha$-optimal policy by~$\pi^*_{\alpha}$  throughout the rest of the paper. The $\alpha$-optimal value (w.r.t. $\pi_0$) is $v^{\pi^\alpha(\pi^*_\alpha,\pi_0)}$, the value of the policy $\pi^\alpha(\pi^*_\alpha,\pi_0)$. In the following, we will see the problem is equivalent to solving a surrogate MDP, for which an optimal deterministic policy is known to exist. Thus, there is no loss optimizing over the set of deterministic policies $\Pi$.

Optimization problem \eqref{eq:eps_optimization} can be viewed as optimizing over a restricted set of policies: all policies that are a convex combination of $\pi_0$ with a fixed $\alpha$. Naturally, we can consider in \eqref{eq:eps_optimization} a state-dependent $\alpha(s)$ as well, and some of the results in this work will consider this scenario. In other words, $\pi^*_{\alpha}$ is the best policy an agent can act with, if it plays w.p $(1-\alpha)$ according to $\pi^*_{\alpha}$, and w.p $\alpha$ according to $\pi_0$, where $\pi_0$ can be any policy. The relation to the $\epsilon$-greedy exploration setup becomes clear when $\pi_0$ is a uniform distribution on the actions, and set $\alpha=\epsilon$ instead of $\alpha$. Then, $\pi^*_{\alpha}$ is optimal w.r.t. the $\epsilon$-greedy exploration scheme; the policy would have the largest accumulated reward, relatively to all other policies, when acting in an $\epsilon$-greedy fashion w.r.t. it. 


We choose to name the policy as the $\alpha$- and not $\epsilon$-optimal to prevent confusion with other frameworks. The $\epsilon$-optimal policy is a notation used in the context of PAC-MDP type of analysis \cite{strehl2009reinforcement}, and has a different meaning than the objective in this work \eqref{eq:eps_optimization}.

\subsection{The $\alpha$-optimal Bellman operator, $\alpha$-optimal policy and policy improvement}

In the previous section, we defined the $\alpha$-optimal policy and the $\alpha$-optimal value, $\pi^*_\alpha$ and $v^{\pi^\alpha(\pi^*_\alpha,\pi_0)}$, respectively.  We start this section by observing that problem \eqref{eq:eps_optimization} can be viewed as solving a \emph{surrogate MDP}, denoted by $\mathcal{M}_\alpha$. We define the Bellman operators of the surrogate MDP, and use them to prove an important improvement property.


Define the surrogate MDP as ${\mathcal{M}_\alpha \!=\! (\mathcal{S},\mathcal{A},P_\alpha,R_\alpha,\gamma )}$.
\begin{align}
&\forall a\in \mathcal{A},\ r_\alpha(s,a) \!=\! (1-\alpha) r(s,a)+\alpha r^{\pi_0}(s),\nonumber\\
&\ P^\pi_\alpha(s'\mid s,a) \! = \! (1-\alpha)P(s'\mid s,a)+\alpha P^{\pi_0}(s'\mid s),\label{eq: surrogate MDP reward and dynamics}
\end{align}
are its reward and dynamics, and rest of its ingredients are similar to $\mathcal{M}$. We denote the value of a policy $\pi$ on $\mathcal{M}_\alpha$ by $v^\pi_\alpha$, and the optimal value on $\mathcal{M}_\alpha$ by $v^*_\alpha$. The following simple lemma relates the value of a policy $\pi$, measured on $\mathcal{M}$ and $\mathcal{M}_\alpha$ (see proof in Appendix \ref{supp: lemma equivalence}).
\begin{lemma}\label{lemma: equivalence}
For any policy $\pi$, $v^\pi_\alpha = v^{\pi^\alpha(\pi,\pi_0)}$. Thus, an optimal policy on $\mathcal{M}_\alpha$ is the $\alpha$-optimal policy $\pi^*_\alpha~\eqref{eq:eps_optimization}$.
\end{lemma}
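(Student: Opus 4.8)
The plan is to reduce the statement to the uniqueness of fixed points of contraction operators: I will show that the (linear) Bellman evaluation operator of $\pi$ on the surrogate MDP $\mathcal{M}_\alpha$ coincides, as an operator on $\mathbb{R}^{|\mathcal{S}|}$, with the Bellman evaluation operator of the mixture policy $\pi^\alpha(\pi,\pi_0)$ on the original MDP $\mathcal{M}$. Since each is a $\gamma$-contraction with the respective unique fixed point $v^\pi_\alpha$ and $v^{\pi^\alpha(\pi,\pi_0)}$, equality of the operators forces equality of the fixed points, which is the first claim.

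Concretely, first I would compute the policy-averaged reward and kernel of $\pi$ in $\mathcal{M}_\alpha$. From the definition \eqref{eq: surrogate MDP reward and dynamics} and linearity of expectation, $[r_\alpha^\pi]_s=\mathbb{E}_{a\sim\pi}[(1-\alpha)r(s,a)+\alpha r^{\pi_0}(s)]=(1-\alpha)[r^\pi]_s+\alpha[r^{\pi_0}]_s$, and likewise $[P_\alpha^\pi]_{s,s'}=(1-\alpha)[P^\pi]_{s,s'}+\alpha[P^{\pi_0}]_{s,s'}$. Second, I would expand the same quantities for the stochastic policy $\pi^\alpha(\pi,\pi_0)=(1-\alpha)\pi+\alpha\pi_0$ in $\mathcal{M}$, again by linearity: $[r^{\pi^\alpha(\pi,\pi_0)}]_s=(1-\alpha)[r^\pi]_s+\alpha[r^{\pi_0}]_s$ and $[P^{\pi^\alpha(\pi,\pi_0)}]_{s,s'}=(1-\alpha)[P^\pi]_{s,s'}+\alpha[P^{\pi_0}]_{s,s'}$. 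Comparing, $r_\alpha^\pi=r^{\pi^\alpha(\pi,\pi_0)}$ and $P_\alpha^\pi=P^{\pi^\alpha(\pi,\pi_0)}$, hence $T^\pi_\alpha v=r_\alpha^\pi+\gamma P_\alpha^\pi v=T^{\pi^\alpha(\pi,\pi_0)}v$ for every $v$. Taking fixed points gives $v^\pi_\alpha=v^{\pi^\alpha(\pi,\pi_0)}$; alternatively one can avoid invoking the operator and substitute the two identities directly into the closed form $v^\pi_\alpha=(I-\gamma P_\alpha^\pi)^{-1}r_\alpha^\pi$.

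For the second assertion, recall that an optimal policy of the MDP $\mathcal{M}_\alpha$ is any $\pi$ maximizing $v^\pi_\alpha$, and by \cite{puterman1994markov} a deterministic maximizer exists with value $v^*_\alpha$. Substituting the identity just proved, maximizing $v^\pi_\alpha$ over $\pi\in\Pi$ is the same as maximizing $v^{\pi^\alpha(\pi,\pi_0)}$ over $\pi\in\Pi$, which is exactly the defining optimization \eqref{eq:eps_optimization} of $\pi^*_\alpha$; this also justifies a posteriori the restriction of the outer maximization in \eqref{eq:eps_optimization} to deterministic policies.

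I do not anticipate a genuine obstacle: the whole content is the bookkeeping identity $r_\alpha^\pi=r^{\pi^\alpha(\pi,\pi_0)}$, $P_\alpha^\pi=P^{\pi^\alpha(\pi,\pi_0)}$, immediate from linearity of expectation. The only points needing a little care are (i) noting explicitly that the value of the stochastic policy $\pi^\alpha(\pi,\pi_0)$ is well-defined and given by the same $(I-\gamma P^\pi)^{-1}r^\pi$ formula recalled in the preliminaries, and (ii) if one wants the state-dependent $\alpha(s)$ variant mentioned in the text, replacing the scalar $\alpha$ by the diagonal matrix $\mathrm{diag}(\alpha(s))$ throughout and checking the identities still hold componentwise.
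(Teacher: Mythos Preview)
Your proposal is correct and essentially identical to the paper's proof: the paper establishes the same identities $P^\pi_\alpha=(1-\alpha)P^\pi+\alpha P^{\pi_0}=P^{\pi^\alpha(\pi,\pi_0)}$ and $r^\pi_\alpha=r^{\pi^\alpha(\pi,\pi_0)}$ and then substitutes directly into the closed form $v^\pi_\alpha=(I-\gamma P^\pi_\alpha)^{-1}r^\pi_\alpha$, which is exactly the alternative you mention. Your primary framing via equality of the evaluation operators $T^\pi_\alpha=T^{\pi^\alpha(\pi,\pi_0)}$ and uniqueness of their fixed point is an equivalent one-line repackaging of the same computation.
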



The fixed-policy and optimal Bellman operators of $\mathcal{M}_\alpha$ are denoted by $T^{\pi}_\alpha$ and $T_\alpha$, respectively. Again, for brevity we omit $\pi_0$ from the definitions. Notice that $T^{\pi}_\alpha$ and $T_\alpha$ are $\gamma$-contractions as being Bellman operators of a $\gamma$-discounted MDP.  The following Lemma relates $T^{\pi}_\alpha$ and $T_\alpha$ to the Bellman operators of the original MDP, $\mathcal{M}$. Furthermore, it stresses a non-trivial relation between the $\alpha$-optimal policy $\pi_\alpha^*$ and the $\alpha$-optimal value, $v^{\pi^\alpha(\pi_\alpha^*,\pi_0)}$.

\begin{proposition}\label{prop: alpha surrogate mdp bellman}
The following claims hold for any policy $\pi$:
\begin{enumerate}
\item $T^{\pi}_\alpha \!\!= \!\!(1\!-\!\alpha)T^{\pi}\!+\!\alpha T^{\pi_0}$, with fixed point $v^{\pi}_\alpha\!\!=\!\!v^{\pi^\alpha(\pi,\pi_0)}$.
\item $T_\alpha \!=\!(1\!-\!\alpha)T\!+\!\alpha T^{\pi_0}$, with fixed point $v^*_\alpha\!=\!v^{\pi^\alpha(\pi_\alpha^*,\pi_0)}$.
\item An $\alpha$-optimal policy is an optimal policy of $\mathcal{M}_\alpha$ and is greedy w.r.t. $v^*_\alpha$, $\pi^*_\alpha\in \G(v_\alpha^*)=\{\pi': T^{\pi'} v_\alpha^* = T v_\alpha^* \}.$
\end{enumerate}
\end{proposition}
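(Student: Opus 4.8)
The plan is to prove all three claims by unfolding the definition \eqref{eq: surrogate MDP reward and dynamics} of the surrogate MDP $\mathcal{M}_\alpha$ and matching Bellman operators, one at a time; the only external inputs I will use are Lemma \ref{lemma: equivalence} and the standard facts recalled in Section \ref{sec: preliminaries} — namely that the Bellman operators of a discounted MDP are $\gamma$-contractions with unique fixed points, and that an optimal policy is greedy with respect to the optimal value.

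For claim 1, I would first average the surrogate reward and kernel under a policy $\pi$. Since the $\alpha$-part of \eqref{eq: surrogate MDP reward and dynamics} does not depend on $a$, this gives $r^\pi_\alpha = (1-\alpha)r^\pi + \alpha r^{\pi_0}$ and $P^\pi_\alpha = (1-\alpha)P^\pi + \alpha P^{\pi_0}$. Substituting into $T^\pi_\alpha v = r^\pi_\alpha + \gamma P^\pi_\alpha v$ and regrouping yields $T^\pi_\alpha v = (1-\alpha)(r^\pi + \gamma P^\pi v) + \alpha(r^{\pi_0} + \gamma P^{\pi_0} v) = (1-\alpha)T^\pi v + \alpha T^{\pi_0} v$. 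The same regrouping also identifies $T^\pi_\alpha$ with $T^{\pi^\alpha(\pi,\pi_0)}$, because $\pi^\alpha(\pi,\pi_0) = (1-\alpha)\pi + \alpha\pi_0$ has exactly the same averaged reward and kernel; as $T^\pi_\alpha$ is a $\gamma$-contraction with a unique fixed point $v^\pi_\alpha$, it must coincide with $v^{\pi^\alpha(\pi,\pi_0)}$, which is just a restatement of Lemma \ref{lemma: equivalence}.

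For claim 2, I would take the componentwise maximum over $\pi$ in the identity from claim 1: $T_\alpha v = \max_\pi T^\pi_\alpha v = \max_\pi\big[(1-\alpha)T^\pi v + \alpha T^{\pi_0}v\big]$. Since $1-\alpha\ge 0$ and $\alpha T^{\pi_0}v$ does not depend on $\pi$, the maximum distributes and gives $T_\alpha v = (1-\alpha)\max_\pi T^\pi v + \alpha T^{\pi_0}v = (1-\alpha)Tv + \alpha T^{\pi_0}v$. For the fixed point, $T_\alpha$ is a $\gamma$-contraction, hence its unique fixed point is the optimal value $v^*_\alpha$ of $\mathcal{M}_\alpha$, and Lemma \ref{lemma: equivalence} applied to an optimal policy $\pi^*_\alpha$ of $\mathcal{M}_\alpha$ gives $v^*_\alpha = v^{\pi^*_\alpha}_\alpha = v^{\pi^\alpha(\pi^*_\alpha,\pi_0)}$. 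The one step to handle with care here — really the only subtlety in the whole proof — is this interchange of the maximum with the affine combination: it hinges on $1-\alpha\ge 0$ and on the $\pi$-independence of the $T^{\pi_0}$ term, and one should note the max is componentwise, so a single argmax policy works simultaneously in every state.

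For claim 3, $\pi^*_\alpha$ is an optimal policy of $\mathcal{M}_\alpha$ by Lemma \ref{lemma: equivalence}, so, as in any discounted MDP, it is greedy w.r.t.\ $v^*_\alpha$ \emph{inside} $\mathcal{M}_\alpha$: $T^{\pi^*_\alpha}_\alpha v^*_\alpha = T_\alpha v^*_\alpha$ (both sides equal $v^*_\alpha$). Substituting claims 1 and 2 turns this into $(1-\alpha)T^{\pi^*_\alpha}v^*_\alpha + \alpha T^{\pi_0}v^*_\alpha = (1-\alpha)Tv^*_\alpha + \alpha T^{\pi_0}v^*_\alpha$; cancelling the common $\alpha T^{\pi_0}v^*_\alpha$ and dividing by $1-\alpha$ (for $\alpha<1$; $\alpha=1$ being the degenerate case where $\mathcal{M}_\alpha$ collapses to the $\pi_0$-induced chain) gives $T^{\pi^*_\alpha}v^*_\alpha = Tv^*_\alpha$, i.e.\ $\pi^*_\alpha\in\G(v^*_\alpha)$ with $\G$ taken w.r.t.\ the \emph{original} MDP's Bellman operator. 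This last point is precisely the "non-trivial relation" advertised before the proposition: it only surfaces after the cancellation, and without it one would obtain merely greediness inside $\mathcal{M}_\alpha$, not w.r.t.\ the original $T$.
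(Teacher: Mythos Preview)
Your proposal is correct and follows essentially the same route as the paper's own proof: you expand $T^\pi_\alpha$ from the surrogate definitions, distribute the $\max_\pi$ over the affine combination using the $\pi$-independence of the $T^{\pi_0}$ term, and then cancel that common term to pass from greediness in $\mathcal{M}_\alpha$ to greediness with respect to the original $T$. Your added remarks on the componentwise max and the degenerate $\alpha=1$ case are extra care the paper omits, but the argument is otherwise identical.
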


In previous works, e.g. \cite{asadi2016alternative}, the operator $(1-\epsilon)T+\epsilon T^{\pi_0}$ was referred to as the $\epsilon$-greedy operator. Lemma \ref{prop: alpha surrogate mdp bellman} shows this operator is $T_\alpha$ (with $\alpha=\epsilon$), the optimal Bellman operator of the defined surrogate MDP~$\mathcal{M}_\alpha$. This lemma leads to the following important property. 

\begin{proposition} 
\label{proposition: alpha optimal improvement}
Let $\alpha\in[0,1)$, $\beta\in[0,\alpha]$, $\pi_0$ be a policy, and $\pi_\alpha^*$ be the $\alpha$-optimal policy w.r.t $\pi_0$. Then,
$
{v^{\pi_0}\leq  v^{\pi^{\alpha}(\pi_\alpha^*,\pi_0)}\leq  v^{\pi^{\beta}(\pi_\alpha^*,\pi_0)},}
$
with equality iff ${v^{\pi_0}=v^*}$.
\end{proposition}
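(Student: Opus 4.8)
The plan is to treat the two inequalities separately, both through Proposition~\ref{prop: alpha surrogate mdp bellman}, and then pin down the equality condition using Lemma~\ref{lemma: equivalence}. The left inequality is essentially free: since $\pi_\alpha^*$ maximizes $\pi'\mapsto v^{\pi^\alpha(\pi',\pi_0)}$ over $\Pi$ and the choice $\pi'=\pi_0$ yields the mixture policy $\pi^\alpha(\pi_0,\pi_0)=\pi_0$, we get $v^{\pi^\alpha(\pi_\alpha^*,\pi_0)}\ge v^{\pi_0}$.

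For the right inequality, set $w\equiv v^{\pi^\alpha(\pi_\alpha^*,\pi_0)}$. By Proposition~\ref{prop: alpha surrogate mdp bellman}(2)--(3), $w=v_\alpha^*$ is the fixed point of $T_\alpha=(1-\alpha)T+\alpha T^{\pi_0}$ and satisfies $Tw=T^{\pi_\alpha^*}w$, hence $w=(1-\alpha)T^{\pi_\alpha^*}w+\alpha T^{\pi_0}w$. Since the Bellman operator is affine in the policy, $T^{\pi^\beta(\pi_\alpha^*,\pi_0)}=(1-\beta)T^{\pi_\alpha^*}+\beta T^{\pi_0}$; subtracting the previous identity gives $T^{\pi^\beta(\pi_\alpha^*,\pi_0)}w-w=(\alpha-\beta)\bigl(T^{\pi_\alpha^*}w-T^{\pi_0}w\bigr)\ge 0$, using $\beta\le\alpha$ together with $T^{\pi_\alpha^*}w=Tw\ge T^{\pi_0}w$. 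Thus $w\le T^{\pi^\beta(\pi_\alpha^*,\pi_0)}w$, and iterating the monotone $\gamma$-contraction $T^{\pi^\beta(\pi_\alpha^*,\pi_0)}$ (whose fixed point is $v^{\pi^\beta(\pi_\alpha^*,\pi_0)}$) on $w$ produces a nondecreasing sequence with limit $v^{\pi^\beta(\pi_\alpha^*,\pi_0)}\ge w$, which is the claim.

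For the equality statement, the direction $v^{\pi_0}=v^*\Rightarrow$ equality is immediate: then both mixture values are sandwiched between $v^{\pi_0}$ and $v^*=v^{\pi_0}$, so the chain collapses. For the converse, the key observation is $v^{\pi_0}_\alpha=v^{\pi^\alpha(\pi_0,\pi_0)}=v^{\pi_0}$ (Lemma~\ref{lemma: equivalence}). If the left inequality is an equality, then $v^{\pi_0}=w=v_\alpha^*$, i.e.\ $\pi_0$ is optimal on $\M_\alpha$, so $T_\alpha v^{\pi_0}=v^{\pi_0}$; expanding $T_\alpha$ and using $T^{\pi_0}v^{\pi_0}=v^{\pi_0}$ leaves $(1-\alpha)(Tv^{\pi_0}-v^{\pi_0})=0$, and since $\alpha<1$ this forces $v^{\pi_0}=Tv^{\pi_0}=v^*$. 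If instead the right inequality is an equality with $\beta<\alpha$, then $v^{\pi^\beta(\pi_\alpha^*,\pi_0)}=w$ gives $T^{\pi^\beta(\pi_\alpha^*,\pi_0)}w=w$ (apply the operator to its fixed-point equation), and the displayed identity then yields $T^{\pi_0}w=Tw$; combined with $T_\alpha w=w$ this gives $T^{\pi_0}w=w$, so $w=v^{\pi_0}$ and we reduce to the previous case.

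I expect the converse of the equality statement to be the main obstacle: it requires spotting the identity $v^{\pi_0}_\alpha=v^{\pi_0}$, treating the value inequalities componentwise, and genuinely invoking $\alpha<1$ — and, for the second inequality, $\beta<\alpha$, since when $\beta=\alpha$ that inequality is a trivial identity for every $\pi_0$ and hence carries no information.
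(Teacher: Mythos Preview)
Your proof is correct and follows essentially the same route as the paper: both arguments hinge on the identity $T^{\pi^\beta(\pi_\alpha^*,\pi_0)}v_\alpha^*-v_\alpha^*=(\alpha-\beta)\bigl(Tv_\alpha^*-T^{\pi_0}v_\alpha^*\bigr)$, with the paper passing through the resolvent formula $(I-\gamma P^{\pi^\beta})^{-1}$ where you instead iterate the monotone operator $T^{\pi^\beta}$. Your treatment of the equality case is in fact more explicit than the paper's, and your observation that the case $\beta=\alpha$ is vacuous for the second inequality is a genuine refinement.
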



The first relation  $v^{\pi_0}\leq  v^{\pi^{\alpha}(\pi_\alpha^*,\pi_0)}$, $\pi^\alpha(\pi^*_\alpha,\pi_0)$ is better than~$\pi_0$, is trivial and holds by definition \eqref{eq:eps_optimization}. The non-trivial statement is the second one. It asserts that given $\pi^*_\alpha$, it is worthwhile to use the mixture policy $\pi^{\beta}(\pi_\alpha^*,\pi_0)$ with $\beta<\alpha$; use $\pi_0$ with smaller probability. Specifically, better performance, compared to $\pi^{\alpha}(\pi_\alpha^*,\pi_0)$,  is assured  when using the deterministic policy $\pi^*_\alpha$, by setting $\beta=0$.

In section \ref{sec: experiments}, we demonstrate the empirical consequences of the improvement lemma, which, to our knowledge, has not yet been stated. Furthermore, the improvement lemma is unique to the defined optimization criterion \eqref{eq:eps_optimization}. We will show that alternative definitions of exploration conscious criteria does not necessarily have this property. Moreover, one can use Proposition \ref{proposition: alpha optimal improvement} to generalize the notion of the 1-step greedy policy \eqref{eq: G greeedy}, as was done in \citet{beyond2018efroni} with multiple-step greedy improvement.  We leave studying this generalization and its Policy Iteration scheme for future work, and focus on solving \eqref{eq:eps_optimization} a single time.

\subsection{Performance bounds in the presence of approximations}

We now consider an approximate setting and quantify a bias - error sensitivity tradeoff in $\pi^\alpha(\hat{\pi}_\alpha^*,\pi_0)$, where $\hat{\pi}_\alpha^*$ is an approximated $\alpha$-optimal policy.  We formalize an intuitive argument; as $\alpha$ increases the bias relatively to the optimal policy increases. Yet, the sensitivity to errors decreases, since the agent uses $\pi_0$ w.p. $\alpha$ regardless of errors. 

\begin{defn}\label{defn: lipschitz constant}
Let $v^*$ be the optimal value of an MDP, $\mathcal{M}$. We define $L(s)\triangleq v^*(s)-T^{\pi_0}v^*(s)\geq 0$, to be the Lipschitz constant w.r.t. $\pi_0$ of the MDP at state $s$. We further define the upper bound on the Lipschitz constant $L\triangleq \max_s L(s)$.
\end{defn}

Definition~\ref{defn: lipschitz constant} defines the `Lipschitz' property of the optimal value, $v^*$.
Intuitively, $L(s)$ quantifies a degree of `smoothness' of the \emph{optimal value}. A small value of $L(s)$ indicates that if one acts according to $\pi_0$ once and then continue playing the optimal policy from state $s$, a great loss will not occur. Large values of $L(s)$ indicate that using $\pi_0$ from state $s$ leads to an irreparable outcome (e.g, falling off a cliff). The following theorem  formalizes a bias-error sensitivity tradeoff. As $\alpha$ increases, the bias increases, while the sensitivity to errors decreases (see proof in Appendix~\ref{supp: theorem performance model free}).
\begin{theorem}\label{theorem:performance model free}
Let $\alpha\in [0,1]$. Assume $\hat{v}^*_\alpha$ is an approximate $\alpha$-optimal value s.t $\norm{v^*_\alpha-\hat{v}^*_\alpha}=\delta$ for some $\delta\geq 0$. Let $\hat{\pi}_\alpha^*$ be the greedy policy w.r.t. $\hat{v}_\alpha^*$,  $\hat{\pi}_\alpha^*\in \G(\hat{v}^*_\alpha
)$. Then, the performance relatively to the optimal policy is bounded by,
\begin{align*}
\norm{v^*-v^{\pi^{\alpha}(\hat{\pi}_\alpha^*,\pi_0)}} \leq \underset{\mathrm{Bias}}{\underbrace{\frac{\alpha L }{1-\gamma}}} + \underset{\mathrm{Sensitivity}}{\underbrace{\frac{2(1-\alpha)\gamma\delta}{1-\gamma}}}.
\end{align*}
\end{theorem}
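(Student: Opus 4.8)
The plan is to split the error by the triangle inequality into a bias term $\norm{v^* - v^*_\alpha}$ and a sensitivity term $\norm{v^*_\alpha - v^{\pi^\alpha(\hat\pi_\alpha^*,\pi_0)}}$, to bound these by $\tfrac{\alpha L}{1-\gamma}$ and $\tfrac{2(1-\alpha)\gamma\delta}{1-\gamma}$ respectively, and to add them. By Lemma~\ref{lemma: equivalence}, $v^{\pi^\alpha(\hat\pi_\alpha^*,\pi_0)} = v^{\hat\pi_\alpha^*}_\alpha$ is exactly the value of $\hat\pi_\alpha^*$ in the surrogate MDP $\mathcal{M}_\alpha$, so both pieces become statements relating $\mathcal{M}$ and $\mathcal{M}_\alpha$.

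For the bias term I would start from the fixed-point identities $v^* = Tv^*$ and, by Proposition~\ref{prop: alpha surrogate mdp bellman}, $v^*_\alpha = T_\alpha v^*_\alpha = (1-\alpha)Tv^*_\alpha + \alpha T^{\pi_0}v^*_\alpha$. Subtracting and inserting $\alpha T^{\pi_0}v^*$ gives, componentwise, $v^*-v^*_\alpha = (1-\alpha)(Tv^*-Tv^*_\alpha) + \alpha(v^*-T^{\pi_0}v^*) + \alpha\gamma P^{\pi_0}(v^*-v^*_\alpha)$. Now $v^*-T^{\pi_0}v^*$ is the vector with entries $L(s)$, dominated componentwise by $L\mathbf{1}$ (Definition~\ref{defn: lipschitz constant}), and for $\pi'\in\G(v^*)$ one has $Tv^*-Tv^*_\alpha \le T^{\pi'}v^*-T^{\pi'}v^*_\alpha = \gamma P^{\pi'}(v^*-v^*_\alpha)$, since $Tv^*=T^{\pi'}v^*$ and $Tv^*_\alpha\ge T^{\pi'}v^*_\alpha$. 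Hence $v^*-v^*_\alpha \le \gamma M(v^*-v^*_\alpha) + \alpha L\mathbf{1}$ with $M = (1-\alpha)P^{\pi'}+\alpha P^{\pi_0}$ a stochastic matrix; iterating this inequality (equivalently, applying $(I-\gamma M)^{-1}$, which is nonnegative and sends $\mathbf{1}$ to $\tfrac{1}{1-\gamma}\mathbf{1}$) yields $v^*-v^*_\alpha \le \tfrac{\alpha L}{1-\gamma}\mathbf{1}$. Together with $v^*\ge v^*_\alpha$ componentwise --- which holds because $v^*_\alpha$ is the value on $\mathcal{M}$ of the policy $\pi^\alpha(\pi^*_\alpha,\pi_0)$ while $v^*$ is optimal on $\mathcal{M}$ --- this gives $\norm{v^*-v^*_\alpha}\le\tfrac{\alpha L}{1-\gamma}$.

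For the sensitivity term I would run the classical ``greedy policy w.r.t. an approximate value'' analysis, but carried out inside $\mathcal{M}_\alpha$ and keeping track of the mixture structure of its kernel. First, $\hat\pi_\alpha^*\in\G(\hat v^*_\alpha)$, i.e. $T^{\hat\pi_\alpha^*}\hat v^*_\alpha = T\hat v^*_\alpha$, implies $T^{\hat\pi_\alpha^*}_\alpha\hat v^*_\alpha = (1-\alpha)T\hat v^*_\alpha + \alpha T^{\pi_0}\hat v^*_\alpha = T_\alpha\hat v^*_\alpha$, so $\hat\pi_\alpha^*$ is greedy w.r.t. $\hat v^*_\alpha$ for $\mathcal{M}_\alpha$ as well. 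Then, writing $v^*_\alpha - v^{\hat\pi_\alpha^*}_\alpha = \big(T^{\pi^*_\alpha}_\alpha v^*_\alpha - T^{\pi^*_\alpha}_\alpha\hat v^*_\alpha\big) + \big(T^{\pi^*_\alpha}_\alpha\hat v^*_\alpha - T^{\hat\pi_\alpha^*}_\alpha\hat v^*_\alpha\big) + \big(T^{\hat\pi_\alpha^*}_\alpha\hat v^*_\alpha - T^{\hat\pi_\alpha^*}_\alpha v^{\hat\pi_\alpha^*}_\alpha\big)$, the middle term is $\le 0$ by greediness of $\hat\pi_\alpha^*$, the first term equals $\gamma P^{\pi^*_\alpha}_\alpha(v^*_\alpha-\hat v^*_\alpha)$, and in the third term I would expand $\hat v^*_\alpha - v^{\hat\pi_\alpha^*}_\alpha = (\hat v^*_\alpha - v^*_\alpha) + (v^*_\alpha - v^{\hat\pi_\alpha^*}_\alpha)$. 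The crucial observation is that $P^{\pi^*_\alpha}_\alpha - P^{\hat\pi_\alpha^*}_\alpha = (1-\alpha)(P^{\pi^*_\alpha}-P^{\hat\pi_\alpha^*})$: the $\alpha P^{\pi_0}$ parts cancel because $\pi_0$'s transitions do not depend on the approximation. Consequently the two ``cross'' terms collapse to $(1-\alpha)\gamma(P^{\pi^*_\alpha}-P^{\hat\pi_\alpha^*})(v^*_\alpha-\hat v^*_\alpha)$, which is at most $2(1-\alpha)\gamma\delta\,\mathbf{1}$ componentwise since each of $P^{\pi^*_\alpha}$ and $P^{\hat\pi_\alpha^*}$ is stochastic and $\norm{v^*_\alpha-\hat v^*_\alpha}=\delta$. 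This leaves $v^*_\alpha - v^{\hat\pi_\alpha^*}_\alpha \le 2(1-\alpha)\gamma\delta\,\mathbf{1} + \gamma P^{\hat\pi_\alpha^*}_\alpha(v^*_\alpha - v^{\hat\pi_\alpha^*}_\alpha)$; since $v^*_\alpha$ is optimal on $\mathcal{M}_\alpha$ the left-hand side is nonnegative, and inverting $I-\gamma P^{\hat\pi_\alpha^*}_\alpha$ gives $\norm{v^*_\alpha - v^{\hat\pi_\alpha^*}_\alpha}\le\tfrac{2(1-\alpha)\gamma\delta}{1-\gamma}$. Adding the two bounds proves the theorem.

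The step I expect to be the real obstacle is producing the factor $(1-\alpha)$ in the sensitivity term rather than the naive $\tfrac{2\gamma\delta}{1-\gamma}$: this hinges entirely on noticing and exploiting that the $\pi_0$-component of the surrogate kernels $P^\pi_\alpha$ is common to every policy and therefore cancels in the approximation-error cross-term. A minor but necessary bit of bookkeeping is verifying the two monotonicity facts ($v^*\ge v^*_\alpha$ on $\mathcal{M}$, and $v^{\hat\pi_\alpha^*}_\alpha\le v^*_\alpha$ on $\mathcal{M}_\alpha$) that make the resolvent inversions $(I-\gamma M)^{-1}$ valid with the correct sign, and confirming that greediness of $\hat\pi_\alpha^*$ transfers from $\mathcal{M}$ to $\mathcal{M}_\alpha$.
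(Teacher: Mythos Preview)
Your proof is correct and follows essentially the same strategy as the paper: the same triangle-inequality split into bias and sensitivity, and the same mechanism for producing the $(1-\alpha)$ factor in the sensitivity term (the $\alpha P^{\pi_0}$ part of the surrogate kernel is common to all policies and cancels, which the paper phrases equivalently as $\norm{\pi^\alpha(\pi_1,\pi_0)-\pi^\alpha(\pi_2,\pi_0)}_{TV}=(1-\alpha)\norm{\pi_1-\pi_2}_{TV}\le 2(1-\alpha)$ and then plugs into a general TV-distance sensitivity lemma). The only minor packaging difference is the bias term: the paper takes the shorter norm-based route $\norm{v^*-v^*_\alpha}\le\norm{Tv^*-T_\alpha v^*}+\gamma\norm{v^*-v^*_\alpha}$ together with the identity $Tv^*-T_\alpha v^*=\alpha(v^*-T^{\pi_0}v^*)$, rather than your componentwise resolvent argument with the mixture matrix $M$.
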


When the bias of the $\alpha$-optimal value relatively to the optimal one is small, solving \eqref{eq:eps_optimization} does not lead to a great loss relatively to the optimal performance. The bias can be bounded by the `Lipschitz' property $L$ of the MDP. For a state dependent $\alpha(s)$, the bias bound changes to be dependent on $\max_s\alpha(s)L(s)$. This highlights the importance of prior knowledge when using \eqref{eq:eps_optimization}. Choosing $\pi_0$ (possibly state-wise) s.t. $\max_s\alpha(s)L(s)$ is small, allows to use a  bigger $\alpha$, while the bias is small. The sensitivity term upper bounds the performance of $\pi^\alpha(\hat{\pi}_\alpha^*,\pi_0)$ relatively to the $\alpha$-optimal value, and is less sensitive to errors as $\alpha$ increase.


The bias term is derived by using the structure of $\mathcal{M}_\alpha$, and is not a direct application of the Simulation Lemma \cite{kearns2002near,strehl2009reinforcement}; applying it would lead to a bias of $\frac{\alpha R_{\mathrm{max}}}{(1-\gamma)^2}$. 
 For the sensitivity term, we generalize  \cite{bertsekas1995neuro}[Proposition 6.1] (see Appendix~\ref{supp: generalized sensitivity bound}). There, a $(1-\alpha)$ factor does not exists.

\section{Exploration-Conscious Continuous Control}
\label{sec: ContinuousControl}
The $\alpha$-greedy approach from Section~\ref{sec: alpha objective} relies on an exploration mechanism which is fixed beforehand: $\pi_0$ and $\alpha$ are fixed, and an optimal policy w.r.t. them is being calculated~\eqref{eq:eps_optimization}. However, in continuous control RL algorithms, such as DDPG and PPO \cite{lillicrap2015continuous,schulman2017proximal}, different approach is used. Usually, a policy is being learned, and the exploration noise is injected by perturbing the policy, e.g., by adding to it a Gaussian noise. 

We start this section by defining an exploration-conscious optimality criterion that captures such perturbation for the simple case of Gaussian noise. Then, results from Section~\ref{sec: alpha objective} are adapted to the newly defined criterion, while highlighting commonalities and differences relatively to~\eqref{eq:eps_optimization}. As in Section~\ref{sec: alpha objective}, we define an appropriate surrogate MDP and we show it can be solved by the usual machinery of Bellman operators. Unlike Section~\ref{sec: alpha objective}, we show that improvement when decreasing the stochasticity does not generally hold. Finally, we prove a similar bias-error sensitivity result: As $\sigma$ grows, the bias increases, but the sensitivity term decreases. 

Instead of restricting the set of policies to the one defined in~\eqref{eq:eps_optimization}, we restrict our set of policies to be the set of Gaussian policies with a fixed $\sigma^2$ variance. Formally, we wish to find the optimal deterministic policy $\mu^*_\sigma:\mathcal{S}\rightarrow\mathcal{A}$ in this set,
\begin{align}\label{eq:continuous eps_optimization}
&\mu_{\sigma}^* \in \arg\max_{\mu\in \Pi} {\E}^{\pi_{\mu,\sigma}} \left[\sum_{t=0}^{\infty} \gamma^t r(s_t,a_t) \right],
\end{align}
where $\pi_{\mu,\sigma}(\cdot\mid s) = \mathcal{N}(\mu(s),\sigma^2)$, is a Gaussian policy with mean $\mu(s)$ and a fixed variance $\sigma^2$. We name $\mu_{\sigma}^*$ and $\pi_{\sigma}^*$ as the mean and $\sigma$-optimal policy, respectively. As in \eqref{eq:eps_optimization}, we show in the following that solving \eqref{eq:continuous eps_optimization} is equivalent for solving a surrogate MDP. Thus, optimal policy can always be found in the deterministic class of policies $\Pi$; mixture of Gaussians  would not lead to a better performance in~\eqref{eq:continuous eps_optimization}.

Similarly to \eqref{eq: surrogate MDP reward and dynamics}, we define a surrogate MDP $\M_\sigma$ w.r.t. to the Gaussian noise and relate it to values of Gaussian policies on the original MDP $\mathcal{M}$. Then, we characterize its Bellman operators and thus establish it can be solved using Dynamic Programming. Define the surrogate MDP as ${\mathcal{M}_\sigma \!=\! (\mathcal{S},\mathcal{A},P_\sigma,R_\sigma,\gamma )}$. For every $a\in \mathcal{A}$,
\begin{align}
 &r_\sigma(s, a) \! = \int_{\mathcal{A}} \mathcal{N}(a';a,\sigma)r(s,a') da', \nonumber\\
  &P_\sigma(s'\mid s,a) \! = \int_{\mathcal{A}} \mathcal{N}(a';a,\sigma)P(s'\mid s,a') da' \label{eq: surrogate gaussian MDP reward and dynamics},
\end{align}
%
are its reward and dynamics, and denote a value of a policy on $\mathcal{M}_\sigma$ by $v_\sigma^\mu$. The following results correspond to Lemma \ref{lemma: equivalence} and Proposition \ref{prop: alpha surrogate mdp bellman} for the class of Gaussian policies.

\begin{lemma}\label{lemma: equivalence continuous} For any policy $\pi$, $v^\pi_{\mu,\sigma} = v^\mu_\sigma$. Thus, an optimal policy on $\mathcal{M}_\sigma$ is the mean optimal policy $\mu^*_\sigma~\eqref{eq:continuous eps_optimization}$.
\end{lemma}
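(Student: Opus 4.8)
The plan is to mirror the proof of Lemma~\ref{lemma: equivalence} / Proposition~\ref{prop: alpha surrogate mdp bellman}: instead of comparing the two values head‑on, I will show that the \emph{fixed‑policy Bellman operator} of $\mathcal{M}$ at the Gaussian policy $\pi_{\mu,\sigma}$ coincides with the fixed‑policy Bellman operator of the surrogate MDP $\mathcal{M}_\sigma$ at its mean $\mu$, and then appeal to uniqueness of fixed points. Fix $\mu\in\Pi$, write $T^{\pi_{\mu,\sigma}}$ for the $\mathcal{M}$‑operator of $\pi_{\mu,\sigma}(\cdot\mid s)=\mathcal{N}(\mu(s),\sigma^2)$, and write $T^\mu_\sigma$ for the fixed‑policy Bellman operator of $\mathcal{M}_\sigma$ at $\mu$. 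I claim $T^{\pi_{\mu,\sigma}}=T^\mu_\sigma$ as operators on $\mathbb{R}^{|\mathcal{S}|}$.

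To see this, take any $v\in\mathbb{R}^{|\mathcal{S}|}$ and a state $s$, and expand
\[
T^{\pi_{\mu,\sigma}}v(s)=\int_{\mathcal{A}}\mathcal{N}(a;\mu(s),\sigma)\Big(r(s,a)+\gamma\sum_{s'}P(s'\mid s,a)\,v(s')\Big)\,da .
\]
Since the sum over $\mathcal{S}$ is finite, $r$ is bounded by $R_{\max}$, and $P(\cdot\mid s,a)$ is a probability kernel, I may exchange the $\mathcal{A}$‑integral with the $\mathcal{S}$‑sum to get
\[
T^{\pi_{\mu,\sigma}}v(s)=\Big(\int_{\mathcal{A}}\mathcal{N}(a;\mu(s),\sigma)\,r(s,a)\,da\Big)+\gamma\sum_{s'}\Big(\int_{\mathcal{A}}\mathcal{N}(a;\mu(s),\sigma)\,P(s'\mid s,a)\,da\Big)v(s').
\]
Now the symmetry of the Gaussian density, $\mathcal{N}(a;\mu(s),\sigma)=\mathcal{N}(\mu(s);a,\sigma)$, identifies the two bracketed integrals with $r_\sigma(s,\mu(s))$ and $P_\sigma(s'\mid s,\mu(s))$ from \eqref{eq: surrogate gaussian MDP reward and dynamics}. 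Hence $T^{\pi_{\mu,\sigma}}v(s)=r_\sigma(s,\mu(s))+\gamma\sum_{s'}P_\sigma(s'\mid s,\mu(s))\,v(s')=T^\mu_\sigma v(s)$, which establishes the operator identity.

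Both $T^{\pi_{\mu,\sigma}}$ and $T^\mu_\sigma$ are $\gamma$‑contractions in the max norm (they are Bellman operators of $\gamma$‑discounted MDPs) with unique fixed points $v^{\pi_{\mu,\sigma}}$ and $v^\mu_\sigma$ respectively, so equality of the operators forces $v^{\pi_{\mu,\sigma}}=v^\mu_\sigma$, proving the first claim. For the second claim, a deterministic policy $\mu$ is optimal for $\mathcal{M}_\sigma$ iff $v^\mu_\sigma=v^*_\sigma$, which by the identity just shown is equivalent to $v^{\pi_{\mu,\sigma}}$ being maximal among Gaussian policies, i.e.\ to $\mu$ solving \eqref{eq:continuous eps_optimization}; and since $\mathcal{M}_\sigma$ is itself a discounted MDP it admits a deterministic optimal policy, so optimizing only over $\Pi$ loses nothing (no mixture of Gaussians improves on it). A stochastic $\mu$ is handled verbatim by reading $\pi_{\mu,\sigma}(\cdot\mid s)=\int\mathcal{N}(\cdot;a,\sigma^2)\,\mu(da\mid s)$, since every step above is linear.

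The only subtlety relative to the finite‑mixture situation of Lemma~\ref{lemma: equivalence} is measure‑theoretic bookkeeping: justifying the interchange of the $\mathcal{A}$‑integral with the $\mathcal{S}$‑sum and reading $\mathcal{N}(\mu(s),\sigma^2)$ as a genuine probability distribution on the (compact) action space $\mathcal{A}$, consistent with whatever truncation/clipping convention is in force. This is routine; the one genuinely load‑bearing observation is the symmetry of the Gaussian kernel, which is exactly what makes the convolution in \eqref{eq: surrogate gaussian MDP reward and dynamics} line up with the expectation defining the perturbed policy.
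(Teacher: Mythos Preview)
Your proof is correct and follows the same route as the paper: you show $T^{\pi_{\mu,\sigma}}=T^\mu_\sigma$ as operators on $\mathbb{R}^{|\mathcal{S}|}$ and conclude $v^{\pi_{\mu,\sigma}}=v^\mu_\sigma$ by uniqueness of the $\gamma$-contraction fixed point, which is exactly what the paper does (inside the proof of Proposition~\ref{prop:ContinuousContraction}, via Lemma~\ref{lem: equivalence of gaussian surrogate MDP}). You are more explicit about the Fubini step and the Gaussian-kernel symmetry than the paper, but the underlying argument is identical.
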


\begin{proposition}\label{prop:ContinuousContraction}
Let $\pi$ be a mixture of Gaussian policies. Then, the following holds:
\begin{enumerate}
\item $T_\sigma^{\mu} =  \E^{\pi\sim\pi_{\mu,\sigma}} T^\pi $, with fixed point $v^\mu_\sigma\!=\!v^{\pi_{\mu,\sigma}}$.
\item $T_\sigma \!=\! \underset{\mu\in\tilde{\A}}{\max} \E^{\pi\sim\pi_{\mu,\sigma}} T^\pi$, with fixed point $v^*_\sigma\!=\!v^{\pi_{\mu_\sigma^*,\sigma}}$.
\item The mean $\sigma$-optimal policy $\mu^*_\sigma$ is an optimal policy of $\mathcal{M}_\sigma$ and, $\mu^*_\sigma \in \{\mu: T^{\pi_{\mu,\sigma}} v_\sigma^* = \max_{\mu} T^{\pi_{\mu,\sigma}} v_\sigma^* \}.$
\end{enumerate}
\end{proposition}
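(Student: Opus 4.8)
The plan is to transcribe the proof of Proposition~\ref{prop: alpha surrogate mdp bellman} to the continuous setting, using the surrogate MDP $\M_\sigma$ defined in~\eqref{eq: surrogate gaussian MDP reward and dynamics} together with Lemma~\ref{lemma: equivalence continuous}. Concretely I would establish the three claims in order: first, identify the fixed-policy Bellman operator $T_\sigma^\mu$ of $\M_\sigma$ with the Gaussian-averaged operator $\E^{\pi\sim\pi_{\mu,\sigma}}T^\pi$ and read off its fixed point; second, take a state-wise maximum to obtain $T_\sigma$ and its fixed point; third, invoke the standard equivalence between optimal and greedy policies on $\M_\sigma$.

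For claim~1, write the fixed-policy Bellman operator of $\M_\sigma$ as $T_\sigma^\mu v = r_\sigma^\mu + \gamma P_\sigma^\mu v$ with $r_\sigma^\mu(s) = r_\sigma(s,\mu(s))$ and $P_\sigma^\mu(\cdot\mid s) = P_\sigma(\cdot\mid s,\mu(s))$. Substituting the definitions~\eqref{eq: surrogate gaussian MDP reward and dynamics} gives $r_\sigma^\mu(s) = \int_{\mathcal{A}} \mathcal{N}(a';\mu(s),\sigma)\, r(s,a')\,da' = r^{\pi_{\mu,\sigma}}(s)$ and likewise $P_\sigma^\mu(\cdot\mid s) = P^{\pi_{\mu,\sigma}}(\cdot\mid s)$; boundedness of $r$ and compactness of $\mathcal{A}$ justify the interchange of integration (Fubini) when pushing $v$ through $P_\sigma^\mu$. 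Hence $T_\sigma^\mu v = r^{\pi_{\mu,\sigma}} + \gamma P^{\pi_{\mu,\sigma}} v = T^{\pi_{\mu,\sigma}} v = \E_{a\sim\pi_{\mu,\sigma}(s)}[\,r(s,a)+\gamma\sum_{s'}P(s'\mid s,a)v(s')\,]$, which is precisely $\E^{\pi\sim\pi_{\mu,\sigma}}T^\pi v$ read state-wise (each action being a degenerate policy). Since $T_\sigma^\mu$ is the fixed-policy operator of a $\gamma$-discounted MDP it is a $\gamma$-contraction with a unique fixed point, $v_\sigma^\mu$, and by Lemma~\ref{lemma: equivalence continuous} this equals $v^{\pi_{\mu,\sigma}}$. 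For a mixture of Gaussian policies the same identity follows from linearity of $\pi\mapsto T_\sigma^\pi$.

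For claim~2, $T_\sigma v = \max_\pi T_\sigma^\pi v$ by definition of the optimal Bellman operator of $\M_\sigma$; because $a\mapsto r_\sigma(s,a)+\gamma\sum_{s'}P_\sigma(s'\mid s,a)v(s')$ is bounded and the action set is compact, the state-wise maximum over action distributions is attained at a Dirac mass, so $T_\sigma v = \max_{\mu\in\tilde{\A}} T_\sigma^\mu v = \max_{\mu\in\tilde{\A}} \E^{\pi\sim\pi_{\mu,\sigma}}T^\pi v$ by claim~1. Its unique fixed point is the optimal value $v_\sigma^*$ of $\M_\sigma$, and combining claim~1 with Lemma~\ref{lemma: equivalence continuous} (an optimal policy of $\M_\sigma$ is $\mu_\sigma^*$, whose $\M_\sigma$-value equals $v^{\pi_{\mu_\sigma^*,\sigma}}$) yields $v_\sigma^* = v_\sigma^{\mu_\sigma^*} = v^{\pi_{\mu_\sigma^*,\sigma}}$. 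For claim~3, Lemma~\ref{lemma: equivalence continuous} already gives that $\mu_\sigma^*$ is an optimal policy of $\M_\sigma$; the standard fact that an optimal policy is greedy w.r.t.\ the optimal value gives $T_\sigma^{\mu_\sigma^*} v_\sigma^* = T_\sigma v_\sigma^*$, i.e.\ $\mu_\sigma^* \in \arg\max_\mu T_\sigma^\mu v_\sigma^*$, and rewriting both sides through claim~1 gives $T^{\pi_{\mu_\sigma^*,\sigma}} v_\sigma^* = \max_\mu T^{\pi_{\mu,\sigma}} v_\sigma^*$, the asserted set membership. The one genuinely delicate point — where I would concentrate the care — is the existence of a \emph{measurable} deterministic optimal/greedy policy on $\M_\sigma$ when $\mathcal{A}$ is merely compact (a measurable-selection argument), and verifying that the Gaussian smoothing keeps $r_\sigma, P_\sigma$ regular enough for $T_\sigma$ to be well defined; everything else is a direct transcription of the argument behind Proposition~\ref{prop: alpha surrogate mdp bellman}.
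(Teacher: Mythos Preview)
Your proposal is correct and follows essentially the same route as the paper: identify $T_\sigma^\mu$ with $T^{\pi_{\mu,\sigma}}$ by substituting~\eqref{eq: surrogate gaussian MDP reward and dynamics}, take the max for $T_\sigma$, and read off the greedy characterization from standard MDP theory on $\M_\sigma$ together with Lemma~\ref{lemma: equivalence continuous}. The only substantive addition in the paper is that it actually discharges the ``delicate point'' you flag: it shows $r_\sigma(s,\cdot)$ and $P_\sigma(s'\mid s,\cdot)$ are continuous (indeed differentiable) in $a$ via dominated convergence / the Leibniz integral rule, and then invokes Puterman's Theorem~6.2.10 (continuous reward and transition kernel, compact action set $\tilde{\A}$, finite $\S$) to obtain an optimal \emph{deterministic} policy on $\M_\sigma$ --- note that in the paper $\mathcal{A}=\R^{|\A|}$ is not compact, only the restricted set $\tilde{\A}$ is, so your appeal to ``compactness of $\mathcal{A}$'' for Fubini should be replaced by integrability of the Gaussian density against bounded $r$ and $P$.
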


 Surprisingly, given a $\sigma$-optimal policy mean $\mu^*_\sigma$, an improvement is not assured when lowering the stochasticity by decreasing $\sigma$ in $\pi_{\mu^*_\sigma,\sigma}$. This comes in contrast to Proposition \ref{proposition: alpha optimal improvement} and highlights its uniqueness (proof in Appendix~\ref{supp: prop NoImprovementContinuous}).
 \begin{proposition}\label{prop: NoImprovementContinuous}
Let $0\leq\sigma'<\sigma$ and let $\mu^*_\sigma$ be the mean $\sigma$-optimal policy. There exists an MDP s.t ${v^{{\pi}_{\mu^*,\sigma}} \nleq v^{{\pi}_{\mu^*,\sigma'}}}$.
\end{proposition}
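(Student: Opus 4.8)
The plan is to exhibit an explicit small MDP in which decreasing the Gaussian noise variance from $\sigma$ to $\sigma'$ strictly hurts the value of the mean $\sigma$-optimal policy $\mu^*_\sigma$. The intuition, mirroring the cliff-walking story but in reverse, is that the mean $\mu^*_\sigma$ is \emph{tuned} to the specific noise level $\sigma$: it may deliberately sit in a region where the smeared-out reward $r_\sigma$ (see \eqref{eq: surrogate gaussian MDP reward and dynamics}) is high precisely \emph{because} the noise pushes mass onto a nearby high-reward action, whereas the noiseless (or lower-noise) evaluation of that same mean lands squarely on a low-reward action. So the construction I would aim for is a one- or two-state MDP with a reward function $r(s,\cdot)$ on $\A\subseteq\R$ that is small at some point $a_0$ but has a tall, narrow bump just to one side of $a_0$; if $\sigma$ is matched to the bump's offset, then $\pi_{\mu^*_\sigma,\sigma}$ with $\mu^*_\sigma=a_0$ collects the bump's reward in expectation, but $\pi_{a_0,\sigma'}$ with $\sigma'<\sigma$ concentrates near $a_0$ and collects almost nothing.

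Concretely, first I would set $|\mathcal{S}|=1$ (a single self-looping state, or two absorbing states) so that $v^{\pi_{\mu,\sigma}} = \frac{1}{1-\gamma} r_\sigma(\mu)$ and the whole question reduces to comparing the convolved reward $r_\sigma(\mu^*_\sigma)$ against $r_{\sigma'}(\mu^*_\sigma)$. Next I would pick $\A$ a bounded interval and choose $r$ to be, say, a mixture of two Gaussians (or a piecewise-constant function): a broad low plateau plus a narrow spike of height $M$ at location $c$. Then $r_\sigma(a) = \int \mathcal{N}(a';a,\sigma) r(a')\,da'$ is itself smooth, and I would verify that for a suitable choice of spike width, spike location $c$, and the value of $\sigma$, the maximizer $\mu^*_\sigma$ of $r_\sigma$ is at some point $a_0 \neq c$ — this happens when the spike is narrow enough that at noise level $\sigma$ the convolution flattens it, so $r_\sigma$ is maximized where the \emph{plateau} contribution plus the tail of the spike is largest, which can be arranged to be a point $a_0$ from which a \emph{lower} noise $\sigma'$ sees less of the spike and also less of the plateau (e.g., place $a_0$ near an edge of the plateau so shrinking $\sigma$ loses plateau mass off the boundary or onto a low region). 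Then I would compute $r_{\sigma'}(a_0) < r_\sigma(a_0)$ directly, giving $v^{\pi_{\mu^*,\sigma'}} < v^{\pi_{\mu^*,\sigma}}$.

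An even cleaner route, which I would probably prefer for the write-up, is to make the reward \emph{monotone}: let $r(a) = a$ on $\A = [0,1]$ (clipped, or on a circle/bounded domain so that the mean stays in range), with a single self-loop. Then $r_\sigma(\mu) = \E_{a'\sim\mathcal{N}(\mu,\sigma)}[\mathrm{clip}(a',0,1)]$, and because of the clipping at the upper boundary the optimal mean $\mu^*_\sigma$ is pushed to (or near) $1$; crucially, with clipping, $r_\sigma(1)$ can \emph{exceed} the value obtained at lower noise because... hmm, actually for monotone clipped reward lower noise is better at the top, so that direction fails — which is itself informative: the example genuinely needs non-monotone reward. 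So I would commit to the two-bump (or spike-plus-plateau) construction above. The main obstacle, and the step I expect to spend the most care on, is the joint tuning: I must simultaneously guarantee (i) that $\mu^*_\sigma$ really is the argmax of $r_\sigma$ at the intended location $a_0$ (not at the spike $c$ and not at a boundary), and (ii) that $r_{\sigma'}(a_0) < r_\sigma(a_0)$ strictly. Both are one-dimensional integral inequalities in a handful of free parameters, so by choosing the spike arbitrarily tall and arbitrarily narrow one can make its total mass negligible for $r_{\sigma'}$ yet, by positioning it within a $\sigma$-neighborhood of $a_0$, make it contribute a fixed amount to $r_\sigma(a_0)$; a compactness/continuity argument then secures a valid parameter choice without a messy closed-form computation. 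I would present the MDP, state the parameter choices, and verify the two inequalities, flagging that exact constants are relegated to the appendix.
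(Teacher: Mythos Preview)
Your reduction to a single-state MDP (so that $v^{\pi_{\mu,\sigma}}=\tfrac{1}{1-\gamma}\,r_\sigma(\mu)$) and your instinct that the reward must be non-monotone are both exactly right, and the ``two-bump'' option you mention in passing is precisely what the paper does. The paper takes the \emph{symmetric} choice
\[
r(u)=\tfrac{1}{2\sqrt{\pi}}\bigl(e^{-(u-1)^2}+e^{-(u+1)^2}\bigr),
\]
so that $r_\sigma$ is again a symmetric two-Gaussian mixture; for $\sigma=1$ this mixture is unimodal, hence $\mu^*_\sigma=0$ \emph{by symmetry alone}, and one simply checks numerically that $r_\sigma(0)>r(0)=r_{\sigma'}(0)$ for $\sigma'=0$. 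Symmetry eliminates your condition~(i) entirely, which is the main simplification you are missing.

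Your spike-plus-plateau branch, by contrast, has a gap. If the spike has ``negligible total mass'' it contributes negligibly to \emph{both} $r_\sigma$ and $r_{\sigma'}$, so that phrasing is off; and if it has non-negligible mass while the plateau is wide (so its convolved contribution is essentially constant), then the argmax of $r_\sigma$ is pulled \emph{toward} the spike, i.e.\ $\mu^*_\sigma\approx c$, where lowering the noise concentrates mass on the spike and \emph{increases} the value --- the opposite of what you want. Your proposed fix (put $a_0$ near a plateau edge so that shrinking $\sigma$ loses plateau mass) might be salvageable, but the tuning is delicate and not actually carried out. I would drop this branch and commit to the symmetric two-Gaussian reward: it gives a two-line proof with explicit constants and no parameter search.
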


\begin{defn}\label{defn: lipschitz continuous} 
Let $\mathcal{M}$ be a continuous action space MDP. Assume that exists $L_r,\ L_p \geq 0$, s.t. $\forall s\in \mathcal{S},\ \forall a_1,a_2 \in \mathcal{A}$, $\left|r(s,a_1)-r(s,a_2)\right| \leq L_r\norm{a_1-a_2}_1$ and $\norm{p(\cdot|s,a_1)-p(\cdot|s,a_2)}_{TV} \leq L_p\norm{a_1-a_2}_1$. The Lipschitz constant of $\mathcal{M}$ is $\mathcal{L}  \triangleq (1-\gamma)L_r + \gamma L_p R_{max}$.
\end{defn}

The following theorem quantifies a bias-error sensitivity tradeoff in $\sigma$, similarly to Theorem \ref{theorem:performance model free} (see Appendix \ref{supp: theorem gaussian tradeoff}).
\begin{theorem}\label{theorem: gaussian tradeoff}
Let $\M$ be an MDP with Lipschitz constant $\mathcal{L}$ and let $\sigma \in {\R}^{|\A|}_{+}$. Let $v_\sigma^*$ be the $\sigma$-optimal value of $\mathcal{M}_\sigma$. Let $\hat{v}_\sigma^*$ be an approximation of $v_\sigma^*$ s.t. $\norm{v_\sigma^*-\hat{v}_\sigma^*}=\delta$ for $\delta\geq 0$. Let ${\mu}_\sigma^*,\hat{\mu}_\sigma^* \in {\R}^{\A}$ be the greedy mean policy w.r.t. ${v}_\sigma^*$ and $\hat{v}_\sigma^*$ respectively. Let $\norm{\cdot}_{\sigma^{-2}}$ is the $\sigma^{-2}$-weighted euclidean norm. Then,
\begin{align*}
    \norm{v^*-v^{{\hat{\pi}}^*_\sigma}} &\! \leq \underset{\mathrm{Bias}}{\underbrace{\frac{\mathcal{L}\norm{\sigma}_1}{2\left(1-\gamma\right)^2} }} \!+\! \underset{\mathrm{Sensitivity}}{\underbrace{  \frac{\gamma\delta 
\min \{\frac{1}{2}\norm{{\mu}_\sigma^*-\hat{\mu}_\sigma^*}_{\sigma^{-2}} ,2\}}{1-\gamma}}}.
\end{align*}
\end{theorem}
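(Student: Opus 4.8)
The plan is to mirror the structure of the proof of Theorem~\ref{theorem:performance model free}: split the error $\norm{v^*-v^{\hat\pi^*_\sigma}}$ into a \emph{bias} term $\norm{v^* - v^*_\sigma}$, measuring how far the $\sigma$-optimal value of $\M_\sigma$ is from the true optimum, and a \emph{sensitivity} term $\norm{v^*_\sigma - v^{\hat\pi^*_\sigma}}$, measuring the loss from acting greedily w.r.t. an approximation $\hat v^*_\sigma$ instead of $v^*_\sigma$. The triangle inequality gives $\norm{v^*-v^{\hat\pi^*_\sigma}} \le \norm{v^*-v^*_\sigma} + \norm{v^*_\sigma - v^{\hat\pi^*_\sigma}}$, and I would bound the two pieces separately.

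For the bias term, I would use Proposition~\ref{prop:ContinuousContraction}(2): $v^*_\sigma$ is the fixed point of $T_\sigma = \max_\mu \E^{\pi\sim\pi_{\mu,\sigma}} T^\pi$. Since $T_\sigma$ and $T$ are both $\gamma$-contractions with fixed points $v^*_\sigma$ and $v^*$, it suffices to bound $\norm{T_\sigma v^* - T v^*} = \norm{T_\sigma v^* - v^*}$ and divide by $(1-\gamma)$. Now $T v^* = \max_a [r(s,a) + \gamma P(\cdot|s,a)v^*]$, while $T_\sigma v^*(s) = \max_\mu \E_{a'\sim\mathcal N(\mu(s),\sigma^2)}[r(s,a') + \gamma P(\cdot|s,a')v^*]$; choosing $\mu(s) = a^*(s)$, the greedy action, and using the Lipschitz assumptions of Definition~\ref{defn: lipschitz continuous} to control $|r(s,a')-r(s,a^*)|$ and $\norm{P(\cdot|s,a')-P(\cdot|s,a^*)}_{TV}$ (hence the change in $\gamma P(\cdot|s,a')v^*$, bounded by $\gamma R_{\max}/(1-\gamma)$ times the TV distance), I get $T_\sigma v^*(s) \ge v^*(s) - \mathcal L \,\E\norm{a' - a^*(s)}_1/(1-\gamma)$. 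The expected $\ell_1$ deviation of a Gaussian with covariance $\mathrm{diag}(\sigma^2)$ from its mean is $\sqrt{2/\pi}\,\norm{\sigma}_1 \le \norm{\sigma}_1$, and since $T_\sigma v^* \le v^*$ always (playing a noisy policy cannot beat the optimum), I obtain $\norm{T_\sigma v^* - v^*} \le \mathcal L\norm{\sigma}_1/(1-\gamma)$, giving the claimed bias $\mathcal L\norm{\sigma}_1/(2(1-\gamma)^2)$ after accounting for the $1/2$ — which must come from a tighter use of $\E\norm{a'-a^*}_1 = \sqrt{2/\pi}\,\norm\sigma_1$ together with the precise constant in $\mathcal L$.

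For the sensitivity term, I would invoke the generalized version of \cite{bertsekas1995neuro}[Proposition 6.1] established in Appendix~\ref{supp: generalized sensitivity bound}, which bounds $\norm{v^*_\sigma - v^{\hat\mu}_\sigma}$ in $\M_\sigma$ in terms of $\delta = \norm{v^*_\sigma - \hat v^*_\sigma}$ when $\hat\mu$ is greedy w.r.t. $\hat v^*_\sigma$; by Lemma~\ref{lemma: equivalence continuous} this equals $\norm{v^*_\sigma - v^{\hat\pi^*_\sigma}}$ on $\M$. The standard bound gives $2\gamma\delta/(1-\gamma)$, which is the ``$2$'' branch of the minimum. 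To get the refined branch $\tfrac12\norm{\mu^*_\sigma - \hat\mu^*_\sigma}_{\sigma^{-2}}$, I would sharpen the step that bounds $\norm{T_\sigma^{\hat\mu}\hat v^*_\sigma - T_\sigma \hat v^*_\sigma}$: rather than crudely using $2\delta$, I would express the gap between the greedy-for-$v^*_\sigma$ and greedy-for-$\hat v^*_\sigma$ Gaussian policies via the total-variation (or Pinsker/KL) distance between $\mathcal N(\mu^*_\sigma(s),\sigma^2)$ and $\mathcal N(\hat\mu^*_\sigma(s),\sigma^2)$, which is controlled by $\tfrac12\norm{\mu^*_\sigma(s)-\hat\mu^*_\sigma(s)}_{\sigma^{-2}}$, and then propagate through the contraction.

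The main obstacle I anticipate is the sensitivity term's refined branch: getting the clean factor $\tfrac12\norm{\mu^*_\sigma - \hat\mu^*_\sigma}_{\sigma^{-2}}$ requires carefully relating the difference in expected Bellman backups under two Gaussians with the same variance but different means to the weighted-Euclidean mean distance — essentially a Pinsker-type inequality $\norm{\mathcal N(\mu_1,\Sigma) - \mathcal N(\mu_2,\Sigma)}_{TV} \le \tfrac12\norm{\mu_1-\mu_2}_{\Sigma^{-1}}$ — and then arguing that this can replace the generic ``$2$'' in the sensitivity bound, taking the minimum because the generic bound is always available. The bias term is comparatively routine once the Gaussian $\ell_1$-moment and the Lipschitz bookkeeping are in place; I would double-check the exact placement of the $(1-\gamma)$ powers and the $1/2$ against Definition~\ref{defn: lipschitz continuous}'s normalization $\mathcal L = (1-\gamma)L_r + \gamma L_p R_{\max}$.
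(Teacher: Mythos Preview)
Your proposal is correct and follows essentially the same route as the paper: the triangle-inequality split into bias and sensitivity, the bias bounded via the contraction inequality $\norm{v^*-v^*_\sigma}\le \norm{v^*-T_\sigma v^*}/(1-\gamma)$ with $\mu(s)=a^*(s)$, the Lipschitz assumption, and the Gaussian $\ell_1$ first moment $\sqrt{2/\pi}\,\norm{\sigma}_1$; and the sensitivity bounded via the generalized Bertsekas Proposition~6.1 in Appendix~\ref{supp: generalized sensitivity bound}, with the refined branch obtained exactly by Pinsker applied to the closed-form KL between two Gaussians of equal variance, and the ``$2$'' branch from the trivial TV bound. Your anticipated obstacle is not an obstacle at all---Pinsker plus the Gaussian KL formula gives the $\tfrac12\norm{\mu^*_\sigma-\hat\mu^*_\sigma}_{\sigma^{-2}}$ factor directly, just as you wrote.
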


\section{Algorithms} \label{sec: algorithms}

In this section, we offer two fundamental approaches to solve exploration conscious criteria using sample-based algorithms: the \emph{Expected} and \emph{Surrogate} approaches. 
For both, we formulate converging, q-learning-like, algorithms. Next, by adapting DDPG, we show the two approaches can be used in exploration-conscious continuous control as well.

Consider any fixed exploration scheme. Generally, these schemes operate in two stages: (i) Choose a greedy action, $\aChosen$. (ii) Based on $\aChosen$ and some randomness generator, choose an action to be applied on the environment, $\aEnv$. E.g., for $\epsilon$-greedy exploration, w.p. $1-\alpha$ the agent acts with $\aChosen$, otherwise, with a random uniform policy. While in RL the common update rules use $\aEnv$, the saved experience is $(s,\aEnv,r,s')$, in the following we motivate the use of  $\aChosen$, and view the data as $(s,\aChosen,\aEnv,r,s')$.

The two approaches characterized in the following are based on two, inequivalent, ways to define the $q$-function.
For the \emph{Expected} approach the $q$-function is defined as usual: $q^\pi(s,a)$ represents the value obtained when \textbf{taking an action $a=\aEnv$} and then acting with $\pi$, meaning $a$ is the action chosen in step (ii). Alternatively, for the \emph{Surrogate} approach, the $q$-function is defined on the `Surrogate' MDP, i.e., the exploration is viewed as stochasticity of the environment. Then, $q_\alpha
^\pi(s,a)$ is the value obtained when $a$ is the action of step (i), i.e., \textbf{choosing action $a=\aChosen$}.

%

\subsection{Exploration Conscious Q-Learning}\label{sec: alg alpha optimal}

We focus on solving the $\alpha$-optimal policy \eqref{eq:eps_optimization}, and formulate $q$-learning-like algorithms using the two aforementioned approaches. The \emph{Expected} $\alpha$-optimal $q$-function is,

\setlength{\abovedisplayskip}{0pt}
\setlength{\belowdisplayskip}{0pt}
\begin{align}
q^{\pi^\alpha(\pi^*_\alpha,\pi_0)}(s,a) \! \triangleq \! r(s,a) \!+\! \gamma \sum_{s'} P(s'\mid s,a)v^*_\alpha(s')\label{eq: def q pi mixture}
\end{align} 

\setlength{\abovedisplayskip}{7pt plus2pt minus5pt}
\setlength{\belowdisplayskip}{\abovedisplayskip}

Indeed, $q^{\pi^\alpha(\pi^*_\alpha,\pi_0)}$ is the usually defined $q$-function of the policy $\pi^\alpha(\pi^*_\alpha,\pi_0)$ on an MDP $\mathcal{M}$. Here, the action $a$ represents the actual performed action, $\aEnv$. 
By relating $q^{\pi^\alpha(\pi^*_\alpha,\pi_0)}$ to $v^*_\alpha$ it can be easily verified that $q^{\pi^\alpha(\pi^*_\alpha,\pi_0)}$ satisfies the fixed point equation (see Appendix \ref{supp: algorithms}),
\begin{align}
&q^{\pi^\alpha(\pi^*_\alpha,\pi_0)}(s,a) = \nonumber\\
& r(s,a)  \!+\! \gamma(1\!-\!\alpha) \sum_{s'} P(s'\!\mid\! s,a)\max_{a'} q^{\pi^\alpha(\pi^*_\alpha,\pi_0)}(s',a') \nonumber \\
&\!+\!\gamma\alpha  \sum_{s',a'} P(s'\!\mid\! s,a)\pi_0(a'\!\mid\! s') q^{\pi^\alpha(\pi^*_\alpha,\pi_0)}(s',a'). \label{eq: expected fix point}
\end{align}

Alternatively, consider the optimal $q$-function of the surrogate MDP $\mathcal{M}_\alpha$ \eqref{eq: surrogate MDP reward and dynamics}. It satisfies the fixed-point equation
\begin{align*}
q_\alpha^{*}(s,a) \! \triangleq \! r_\alpha(s,a) \!+\! \gamma \sum_{s'} P_\alpha(s'\mid s,a)\max_{a'}q_\alpha^{*}(s',a').
\end{align*}
The following lemma formalizes the relation between the two $q$-functions, and shows they are related by a function of the state, and not of the action.
\begin{lemma} \label{lemma: q alpha q pi mixture}
$q^*_\alpha(s,a) = (1-\alpha)q^{\pi^\alpha(\pi^*_\alpha,\pi_0)}(s,a)+f(s).$
\end{lemma}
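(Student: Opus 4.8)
The plan is to work directly from the two fixed-point equations and identify the state-only correction term explicitly. Define $g(s,a) \triangleq q^*_\alpha(s,a) - (1-\alpha)q^{\pi^\alpha(\pi^*_\alpha,\pi_0)}(s,a)$ and show $g$ does not depend on $a$. First I would substitute the defining expressions: using \eqref{eq: surrogate MDP reward and dynamics}, $r_\alpha(s,a) = (1-\alpha)r(s,a) + \alpha r^{\pi_0}(s)$ and $P_\alpha(s'\mid s,a) = (1-\alpha)P(s'\mid s,a) + \alpha P^{\pi_0}(s'\mid s)$, so the $\alpha$-terms in the surrogate fixed point already separate into a part that is a function of $s$ alone ($\alpha r^{\pi_0}(s)$) and a part ($\alpha P^{\pi_0}(s'\mid s)\max_{a'}q^*_\alpha(s',a')$) that is again a function of $s$ alone. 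The remaining $(1-\alpha)$-weighted piece is $(1-\alpha)\bigl(r(s,a) + \gamma\sum_{s'}P(s'\mid s,a)\max_{a'}q^*_\alpha(s',a')\bigr)$.

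Next I would use Lemma~\ref{lemma: q alpha q pi mixture}'s companion facts, in particular that $\max_{a'} q^*_\alpha(s',a') = v^*_\alpha(s')$ (since $q^*_\alpha$ is the optimal $q$-function of $\mathcal{M}_\alpha$) and that $v^*_\alpha = v^{\pi^\alpha(\pi^*_\alpha,\pi_0)}$ by Lemma~\ref{lemma: equivalence}. Plugging $v^*_\alpha(s')$ into the $(1-\alpha)$-weighted bracket and comparing with the definition \eqref{eq: def q pi mixture} of $q^{\pi^\alpha(\pi^*_\alpha,\pi_0)}$, the bracket is exactly $q^{\pi^\alpha(\pi^*_\alpha,\pi_0)}(s,a)$. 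Therefore
\begin{align*}
q^*_\alpha(s,a) = (1-\alpha)q^{\pi^\alpha(\pi^*_\alpha,\pi_0)}(s,a) + \alpha r^{\pi_0}(s) + \gamma\alpha\sum_{s'}P^{\pi_0}(s'\mid s)v^*_\alpha(s'),
\end{align*}
so one can take $f(s) = \alpha r^{\pi_0}(s) + \gamma\alpha\sum_{s'}P^{\pi_0}(s'\mid s)v^*_\alpha(s') = \alpha\,(T^{\pi_0}v^*_\alpha)(s)$, which is manifestly independent of $a$.

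Since this is essentially an algebraic rearrangement of objects whose fixed-point characterizations are already established, I expect no serious obstacle; the only point requiring a little care is justifying $\max_{a'}q^*_\alpha(s',a') = v^*_\alpha(s')$, i.e., that taking the max over actions of the optimal surrogate $q$-function returns the optimal surrogate value — this is the standard relation between $q^*$ and $v^*$ for the MDP $\mathcal{M}_\alpha$ and follows from $q^*_\alpha = T^q_{\alpha}q^*_\alpha$ together with $v^*_\alpha = T_\alpha v^*_\alpha$. An alternative, slightly slicker route that avoids even manipulating the fixed-point equations is to start from the closed forms: by definition $q^*_\alpha(s,a) = r_\alpha(s,a) + \gamma\sum_{s'}P_\alpha(s'\mid s,a)v^*_\alpha(s')$ and $q^{\pi^\alpha(\pi^*_\alpha,\pi_0)}(s,a) = r(s,a) + \gamma\sum_{s'}P(s'\mid s,a)v^*_\alpha(s')$ directly from \eqref{eq: def q pi mixture}; substituting \eqref{eq: surrogate MDP reward and dynamics} into the former and subtracting $(1-\alpha)$ times the latter immediately isolates $f(s) = \alpha(T^{\pi_0}v^*_\alpha)(s)$. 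I would present this second version, as it is the shortest.
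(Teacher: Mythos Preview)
Your proposal is correct and follows essentially the same route as the paper: expand $q^*_\alpha(s,a)=r_\alpha(s,a)+\gamma\sum_{s'}P_\alpha(s'\mid s,a)\max_{a'}q^*_\alpha(s',a')$ using the surrogate definitions \eqref{eq: surrogate MDP reward and dynamics}, replace $\max_{a'}q^*_\alpha(s',a')$ by $v^*_\alpha(s')$, and identify the $(1-\alpha)$-weighted bracket with $q^{\pi^\alpha(\pi^*_\alpha,\pi_0)}(s,a)$ via \eqref{eq: def q pi mixture}. The paper writes the remainder as $f(s)=\alpha\sum_{a'}\pi_0(a'\mid s)q^{\pi^\alpha(\pi^*_\alpha,\pi_0)}(s,a')$, which is exactly your $\alpha(T^{\pi_0}v^*_\alpha)(s)$ after one more substitution of \eqref{eq: def q pi mixture}.
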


The $\alpha$-optimal policy $\pi_\alpha^*$ is also an optimal policy of $\mathcal{M}_\alpha$ (Lemma \ref{lemma: equivalence}). Thus, it is greedy w.r.t. $q_\alpha^{*}$, the optimal $q$ of~$\mathcal{M}_\alpha$. By Proposition \ref{prop: alpha surrogate mdp bellman}.3 it is also greedy w.r.t. $q^{\pi^\alpha(\pi^*_\alpha,\pi_0)}$, i.e., $$\pi_\alpha^*(s) \in \arg\max_{a'} q_\alpha^{*}(s,a') =  \arg\max_{a'} q^{\pi^\alpha(\pi^*_\alpha,\pi_0)}(s,a').$$

Lemma \ref{lemma: q alpha q pi mixture} describes this fact by different means; the two $q$-functions are related by a function of the state and, thus, the greedy action w.r.t. each is equal. Furthermore, it stresses the fact that the two $q$-function are not equal.

Before describing the algorithms, we define the following notation for any $q(s,a)$,
\begin{align*}
v(s)=\max_{a'}q(s,a'), \phantom{a} v^{\pi}(s)=\sum_{a'}\pi(a'\mid s)q(s,a').
\end{align*}
We now describe the Expected $\alpha$-Q-learning algorithm (see Algorithm \ref{alg:expected alpha}), also given in \cite{john1994best,littman1997generalized}, and re-interpret it in light of the previous discussion.  

The fixed point equation \eqref{eq: expected fix point}, leads us to define the operator $T^{Eq}_\alpha$ for which ${q^{\pi^\alpha(\pi^*_\alpha,\pi_0)} = T^{Eq}_\alpha q^{\pi^\alpha(\pi^*_\alpha,\pi_0)}}$. 
Expected $\alpha$-Q-learning (Alg.~\ref{alg:expected alpha}) is a Stochastic Approximation (SA) alg. based on the operator $T^{Eq}_\alpha$. Given a sample of the form $(s,\aChosen,\aEnv,r,s')$, it updates $q(s,\aEnv)$ by
\begin{align}
(1\!-\!\eta)q(s,\aEnv)\! + \!\eta\left( r_t\! +\!\gamma ((1\!-\!\alpha)v(s_{t+1})\!+\!\alpha v^{\pi_0}(s_{t+1}))\right)\label{eq: expected q learning}
\end{align}

\begin{algorithm}
\caption{Expected $\alpha$-Q-Learning}\label{alg:expected alpha}
\begin{algorithmic}
\INITIALIZE $\alpha \in [0,1],\ \pi_0,\ q$, learning rate $\eta_t$.
\FOR{$t=0,1,...$}
	\STATE $\aChosen \gets \arg\max_a q_t(s_t,a)$
	\STATE $X_t \sim Bernoulli (1-\alpha)$
	\STATE $\aEnv = \begin{cases} \aChosen,\ \mathrm{if}\ X_t=1 \\
					a\sim \pi_0(\cdot\mid s),\ \mathrm{if}\ X_t=0 
				\end{cases}$
	\STATE $r_t,s_{t+1} \gets ACT(\aEnv)$
    \STATE $y_t \gets r_t + \gamma (1-\alpha) v_t(s_{t+1}) +\gamma \alpha v_t^{\pi_0}(s_{t+1})$ \label{eq: expected alpha eq update}
	\STATE $q(s_t,\aEnv) \gets \left(1-\eta_t\right)q(s_t,\aEnv) + \eta_t y_t$ \label{eq: expected update q es}
\ENDFOR
\STATE {\bf return:} $\pi\in \arg\max_a q(\cdot,a)$
\end{algorithmic}
\end{algorithm}

Its convergence proof is standard and follows by showing $T^{Eq}_\alpha$ is a $\gamma$-contraction and using \cite{bertsekas1995neuro}[Proposition 4.4] (see proof in Appendix \ref{supp: expected q alpha learning}). 





We now turn to describe an alternative algorithm,  which operates on the surrogate MDP, $\mathcal{M}_\alpha$, and converges to $q^*_\alpha$. Naively, given a sample $(s,\aChosen,r,s')$, regular $q$-learning on $\mathcal{M}_\alpha$ can be used by updating $q(s,\aChosen)$ as,
\begin{align}
(1\!-\!\eta_t)q(s,\aChosen)\! + \!\eta_t( r_t\! +\!\gamma v(s_{t+1})), \label{eq: naive alpha q learning 1}
\end{align} 
Yet, this approach does not utilize a meaningful knowledge; when the exploration policy $\pi_0$ is played, i.e., when $X_t=0$, the sample $(r_t,s_{t+1})$ can be used to update all the action entries from the current state. These entries are also affected by the policy $\pi_0$. In fact, we cannot prove the convergence of the naive update based on current techniques; if the greedy action is repeatedly chosen, `infinitely often' visit in all  $(s,a)$ pairs cannot be guaranteed.

\begin{algorithm}
\caption{Surrogate $\alpha$-Q-Learning}\label{alg: surrogate Q}
\begin{algorithmic}
\INITIALIZE $\alpha \in [0,1],\ \pi_0,\ q_\alpha,q$, learning rate $\eta_t$.
\FOR{$t=0,1,...$}
	\STATE $\aChosen \gets \arg\max_a q(s_t,a)$
	\STATE $X_t \sim Bernoulli (1-\alpha)$
	\STATE $\aEnv = \begin{cases} \aChosen,\ \mathrm{if}\ X_t=1 \\
					a\sim \pi_0(\cdot\mid s),\ \mathrm{if}\ X_t=0 
				\end{cases}$
	\STATE $r_t,s_{t+1} \gets ACT(\aEnv)$
	\FOR{ $\bar{a}\in \mathcal{A}$}
	\STATE  $y_t^{\bar{a}} \!\!=\!\! \begin{cases} r_t + \gamma  v_\alpha(s_{t+1}),\ \bar{a}=\aChosen \\
					X_t q(s_t,\bar{a})\!+\! (1\!-\!X_t)\left( r_t \!+\! \gamma  v_\alpha(s_{t+1}) \right)\!,\mathrm{o.w}\end{cases}$
		\STATE $q_\alpha(s_t,\bar{a}) \gets \left(1-\eta\right)q_\alpha(s_t,\bar{a}) + \eta y_t^{\bar{a}}$
	\ENDFOR
	\STATE $y_t \gets r_t + \gamma (1-\alpha) v(s_{t+1}) +\gamma \alpha v^{\pi_0}(s_{t+1})$
	\STATE $q(s_t,\aEnv) \gets (1-\eta_t)q(s_t,\aEnv) + \eta_t y_t$
	
	\ENDFOR
\STATE {\bf return} $\pi \in \arg\max_a q_\alpha(\cdot,a)$
\end{algorithmic}
\end{algorithm}

This reasoning leads us to formulate Surrogate $\alpha$-Q-learning (see Algorithm \ref{alg: surrogate Q}). The Surrogate $\alpha$-Q-learning updates two $q$-functions, $q$ and $q_\alpha$. The first, $q$, has the same update as in Expected $\alpha$-Q-learning, and thus converges (w.p~$1$) to  $q^{\pi^\alpha(\pi^*_\alpha,\pi_0)}$. The second, $q_\alpha$, updates the chosen greedy action using equation \eqref{eq: naive alpha q learning 1}, when the exploration policy is not played ($X_t=1$). By bootstrapping on $q$, the algorithm updates all other actions when the exploration policy $\pi_0$ is played ($X_t=0$). Using \cite{singh2000convergence}[Lemma 1], the convergence of Surrogate $\alpha$-Q-learning to $(q^{\pi^\alpha(\pi^*_\alpha,\pi_0)},q^*_\alpha)$ is established (see proof in Appendix \ref{supp: surrogate q alpha learning}). Interestingly, and unlike other $q$-learning algorithms (e.g, Expected $\alpha$-Q-learning, Q-learning, etc.), Surrogate $\alpha$-Q-learning updates the entire action set given a single sample. For completness, we state the convergence result for both algorithms.

\begin{theorem}\label{theorem: expected q alpha}
Consider the processes described in Alg. ~\ref{alg:expected alpha}, \ref{alg: surrogate Q}. Assume $\{\eta_t \}_{t=0}^\infty$ satisfies  ${\forall s\in \mathcal{S}}$, ${\forall a\in \mathcal{A}}$, ${\sum_{t=0}^\infty \eta_t =\infty}$, and ${\sum_{t=0}^\infty \eta_t^2 <\infty}$, where $\eta_t \equiv \eta_t (s_t=s,a_{\mathrm{env},t}=a)$. Then, for both \ref{alg:expected alpha}, \ref{alg: surrogate Q} the sequence $\{q_n\}_{n=0}^\infty$ converges w.p. 1 to $q^{\pi^\alpha(\pi^*_\alpha,\pi_0)}$, and for \ref{alg: surrogate Q}, $\{q_{\alpha,n}\}_{n=0}^\infty$ converges w.p. 1 to $q^*_\alpha$.
\end{theorem}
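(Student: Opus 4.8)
The plan is to establish both algorithms as instances of asynchronous stochastic approximation and invoke the cited convergence results. First I would observe that, for both algorithms, the $q$-update in equation~\eqref{eq: expected q learning} is exactly a Robbins--Monro iteration driven by the operator $T^{Eq}_\alpha$: write $y_t = (T^{Eq}_\alpha q_t)(s_t,\aEnvt) + w_t$, where $w_t$ is a martingale-difference noise term with conditionally bounded variance (bounded because rewards lie in $[0,R_{\max}]$ and the iterates stay in a bounded set, which itself follows by a standard induction). I would then verify the two structural hypotheses of \cite{bertsekas1995neuro}[Proposition~4.4]: (a) $T^{Eq}_\alpha$ is a $\gamma$-contraction in the max norm with unique fixed point $q^{\pi^\alpha(\pi^*_\alpha,\pi_0)}$ --- this is the content of the earlier discussion around \eqref{eq: expected fix point}, since $T^{Eq}_\alpha$ is built from $(1-\alpha)$ times a $\max$ over actions plus $\alpha$ times an expectation under $\pi_0$, each composed with the (non-expansive) transition kernel, so it inherits the $\gamma$-contraction; (b) the step-size condition $\sum_t \eta_t = \infty$, $\sum_t \eta_t^2 < \infty$ holds per state--action pair by assumption, and every pair $(s,\aEnvt)$ is visited infinitely often --- this is where the $\epsilon$-greedy structure matters: with probability $\alpha$ the environment action is drawn from $\pi_0$, and (under the standing assumption that $\pi_0$ together with the greedy choice induces an ergodic-enough exploration of the chain, as is implicit in the step-size indexing by $(s,a)$) this guarantees infinitely-often visits. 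That yields $q_n \to q^{\pi^\alpha(\pi^*_\alpha,\pi_0)}$ w.p.~1 for Algorithm~\ref{alg:expected alpha}, and also for the $q$-component of Algorithm~\ref{alg: surrogate Q} since it uses the identical update.

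For the $q_\alpha$-component of Algorithm~\ref{alg: surrogate Q} I would use \cite{singh2000convergence}[Lemma~1], which handles stochastic approximation with an extra additive term that vanishes asymptotically. The idea is to write the $q_\alpha$-update as regular Q-learning on the surrogate MDP $\M_\alpha$ plus a perturbation coming from the bootstrapping on $q$. Concretely, on the event $X_t = 1$ (greedy action played) only $\bar a = \aChosen$ is updated with the genuine $\M_\alpha$ target $r_t + \gamma v_\alpha(s_{t+1})$; on the event $X_t = 0$ (exploration, so $\aEnvt \sim \pi_0$) every action $\bar a$ is updated with the same target $r_t + \gamma v_\alpha(s_{t+1})$ while $\bar a = \aChosen$ is also updated. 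The key point is that $\mathbb{E}[y_t^{\bar a}\mid \mathcal{F}_t]$ equals $(T_\alpha^{q_\alpha} q_\alpha)(s_t,\bar a)$ up to a term that depends on the current $q$ iterate rather than its limit; since $q_n \to q^{\pi^\alpha(\pi^*_\alpha,\pi_0)}$ already, Lemma~\ref{lemma: q alpha q pi mixture} tells us this auxiliary term converges to the correct value (the two $q$-functions differ only by a state function $f(s)$, so the greedy/bootstrapped quantities are consistent in the limit). Thus the $q_\alpha$-iteration is asynchronous SA for the $\gamma$-contraction $T_\alpha^{q_\alpha}$ (the optimal Bellman operator of $\M_\alpha$, with fixed point $q^*_\alpha$) with a vanishing additional term, and \cite{singh2000convergence}[Lemma~1] gives $q_{\alpha,n}\to q^*_\alpha$ w.p.~1. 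Finally, that $q^*_\alpha$ is indeed the fixed point identified in the statement, and that its greedy policy is the $\alpha$-optimal policy, is exactly Proposition~\ref{prop: alpha surrogate mdp bellman} and Lemma~\ref{lemma: equivalence}.

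The main obstacle I anticipate is the coupling in Algorithm~\ref{alg: surrogate Q}: $q_\alpha$ depends on $q$, and one must be careful that the ``vanishing perturbation'' framework of \cite{singh2000convergence}[Lemma~1] genuinely applies --- i.e., that the difference between the bootstrapped target built from the finite-time $q_n$ and the ideal target built from $q^{\pi^\alpha(\pi^*_\alpha,\pi_0)}$ can be absorbed into their ``$\beta_t$''-type error sequence, rather than creating a feedback loop. Since the $q$-iteration converges \emph{independently} of $q_\alpha$ (its update never references $q_\alpha$), there is no circular dependence and the perturbation is a genuine exogenous, asymptotically-vanishing term; making this precise, together with the infinitely-often-visited condition for the full action set in the $X_t=0$ branch (which the Surrogate algorithm secures precisely because every action is updated on exploration steps), is the crux. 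The remaining verifications --- boundedness of iterates, the contraction factors, and the martingale noise bounds --- are routine and would be deferred to the appendix, as the excerpt indicates.
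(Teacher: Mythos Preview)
Your proposal is correct and follows essentially the same route as the paper: for the $q$-iterate you cast the update as asynchronous stochastic approximation for the $\gamma$-contraction $T^{Eq}_\alpha$ and invoke \cite{bertsekas1995neuro}[Proposition~4.4], and for the $q_\alpha$-iterate you apply \cite{singh2000convergence}[Lemma~1], using the already-established convergence of $q_n$ together with Lemma~\ref{lemma: q alpha q pi mixture} to show the bootstrapped perturbation vanishes. The paper carries out exactly this plan, splitting the $q_\alpha$ analysis into the cases $\bar a=\aChosen$ (where the conditional-mean target is $T_\alpha q_{\alpha,t}$ with $c_t=0$) and $\bar a\neq\aChosen$ (where Lemma~\ref{lemma: q alpha q pi mixture} is used to cancel $(1-\alpha)q^{\pi^\alpha(\pi^*_\alpha,\pi_0)}-q^*_\alpha$ against the $\pi_0$-part of the surrogate Bellman update, leaving a contraction factor $\alpha\gamma$ and $c_t=(1-\alpha)\bigl(q_t-q^{\pi^\alpha(\pi^*_\alpha,\pi_0)}\bigr)\to 0$); this is precisely the mechanism you anticipate in your last paragraph.
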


%


\subsection{Continuous Control}\label{sec: alg continuous control}

Building on the two approaches for solving Exploration Conscious criteria, we suggest two techniques to find an optimal Gaussian  policy \eqref{eq:continuous eps_optimization} using gradient based Deep RL (DRL) algorithms, and specifically, DDPG \cite{lillicrap2015continuous}. Nonetheless, the techniques are generalizable to other actor-critic, DRL algorithms  \cite{schulman2017proximal}. 

Assume we wish to find an optimal Gaussian policy by parameterizing its mean~$\mu(\phi)$. \citet{nachum2018smoothed}[Eq. 13] showed the gradient of the value w.r.t. $\phi$ is similar to \citet{silver2014deterministic},
\begin{align}
\nabla_\phi v^{\pi_{\mu,\sigma}} = \int_{\mathcal{S}} \partial_a q_\sigma^{\pi_{\pi_{\mu,\sigma}}}(s,a) \nabla_\phi \mu^\theta(s) d\rho^{\pi_{\mu,\sigma}}(s), \label{eq: gradients of surrogate}
\end{align}
where ${q_{\sigma}^{\mu}(s,a) = r_\sigma(s,a)\!+\!\gamma\int_{\mathcal{S}} p_\sigma(s'\mid s,a)v^{\pi_{\mu,\sigma}}(s')ds'}$, is the $q$-function of the surrogate MDP. In light of previous section, we interpret $q_{\sigma}^{\mu}$ as the $q$-function of the surrogate MDP's $\mathcal{M}_\sigma$ \eqref{eq: surrogate gaussian MDP reward and dynamics}. Furthermore, we have the following relation between the surrogate and expected $q$-functions, ${q_{\sigma}^{\mu}(s,a) = \int_{a'\in \mathcal{A}} \mathcal{N}(a'\mid a,\sigma) q^{\pi_{\mu,\sigma} }(s,a') da'}$, from which it is easy to verify that (see Appendix~\ref{supp: eq gradient equivalence}),
\begin{align}
 {{\nabla }_{u}}{q^{\pi_{\mu,\sigma} }_{\sigma}} (s,b)\!=\! \int_{ \mathcal{A}} \! \mathcal{N}(b \mid a,\sigma){\nabla }_{b} q^{\pi_{\mu,\sigma} }(s,b)db\label{eq: gradients of expected}.
\end{align}

Thus, we can update the actor in two inequivalent ways, by using gradients on the surrogate MDP's $q$-function \eqref{eq: gradients of surrogate}, or by using gradients of the expected $q$-function \eqref{eq: gradients of expected}.

The updates of the critic, $q_\sigma^\mu$ or $q^{\pi_{\mu,\sigma}}$, can be done using the same notion that led to the two forms of updates in~\eqref{eq: naive alpha q learning 1}-\eqref{eq: expected q learning}. When using Gaussian noise, one performs the two stages defined in Section~\ref{sec: algorithms}, where $\aChosen$ is the output of the actor $\mu(s)$, and $\aEnv \sim \N(\aChosen,\sigma)$.
Then, the sample $(s,\aChosen,\aEnv,r,s')$ is obtained by interacting with the environment. Based on the the fixed policy TD-error defined in \eqref{eq: naive alpha q learning 1},  we define the following loss function, for learning $q_\sigma^\mu$, q-function of the fixed policy $\mu$ over $\M_\sigma$,
\begin{align*}
    \left(q_\sigma^\theta(s,\aChosen) - r-\gamma q_{\sigma}^{\theta-}(s',\mu^{\phi-}(s'))\right)^2.
\end{align*}

On the other hand, we can define a loss function derived from the fixed-policy TD-error defined in \eqref{eq: expected q learning}, for learning $q^{\pi_{\mu,\sigma}}$, the $q$-function of the Gaussian policy with mean and variance $\mu,\sigma^2$ over $\M$,
\begin{align*}
    \big(q^\theta(s,\aEnv) \!-\! r-\gamma \! \int_\mathcal{A}\!\mathcal{N}(b\mid \mu^{\phi-}(s'),s') q^{\theta-}(s',b) db \big)^2.
\end{align*}

\section{Experiments}\label{sec: experiments} 
In this section, we test the theory and algorithms \footnote{Implementation of the proposed algorithms can be found in https://github.com/shanlior/ExplorationConsciousRL.} suggested in this work. In all experiments we used $\gamma=0.99$. The tested DRL algorithms in this section (See Appendix \ref{supp: algorithms psuedocode})  are simple variations of DDQN \cite{van2016deep} and DDPG \cite{lillicrap2015continuous},  without any parameter tuning, and based on Section \ref{sec: algorithms}. For example, for the surrogate approach in both DDQN and DDPG we merely save $(s,\aChosen,r,s')$ instead of $(s,\aEnv,r,s')$ in the replay buffer (see Section \ref{sec: algorithms} for definitions of $\aEnv,\aChosen$). 

We observe a significant improved empirical performance, both in {\bf training} and {\bf evaluation} for both the surrogate and expected approaches relatively to the baseline performance. The improved training performance is predictable; the learned policy is optimal w.r.t. the noise which is being played. In large portion of the results, the exploration-conscious criteria leads to better performance in evaluation.

\subsection{Exploration Consciousness with Prior Knowledge}

\begin{figure}
	\centering
 \begin{subfigure}{0.23\textwidth}    
\includegraphics[width=\textwidth]{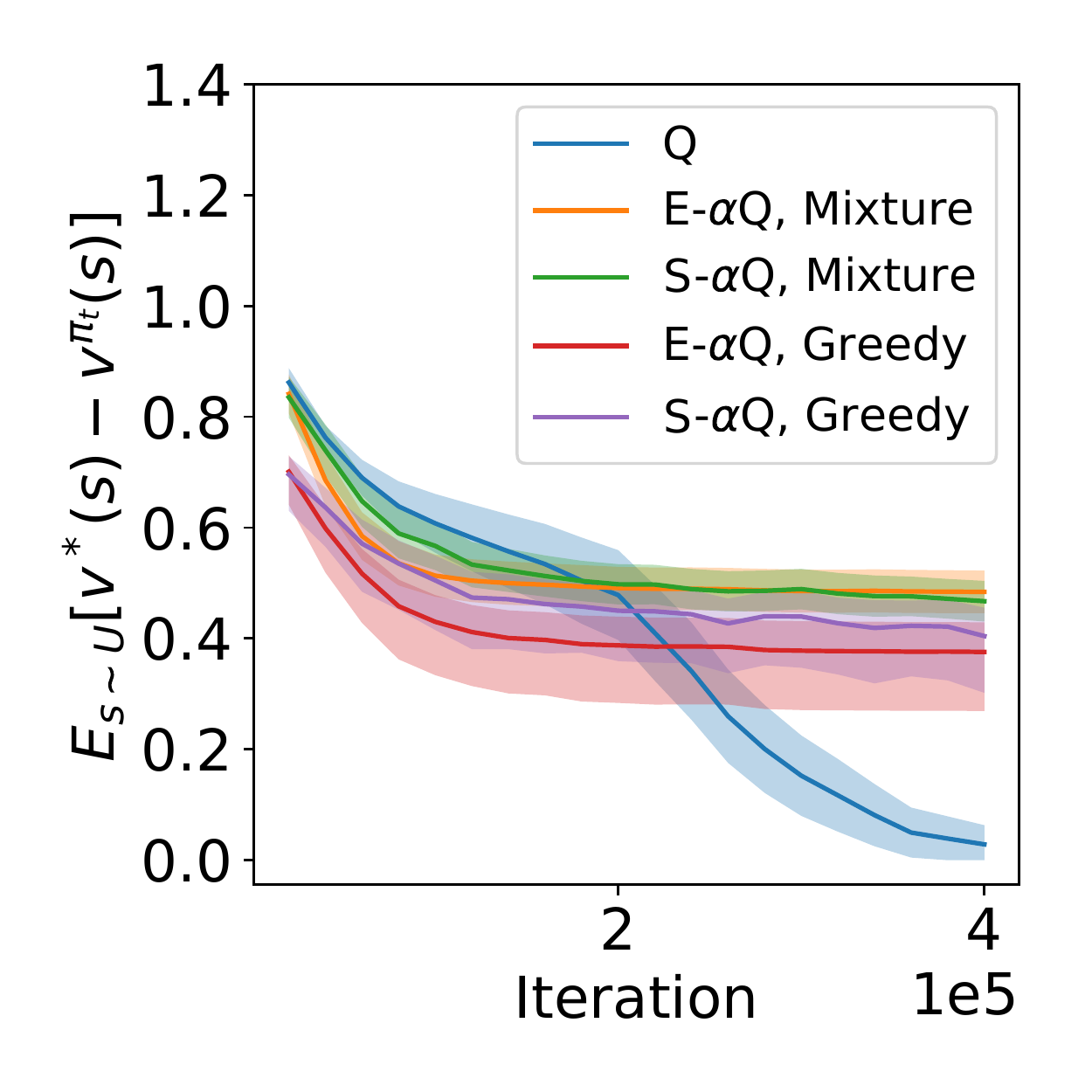} 
\end{subfigure} %
\hfill
 \begin{subfigure}{.23\textwidth}    
	\centering
\includegraphics[width=\textwidth]{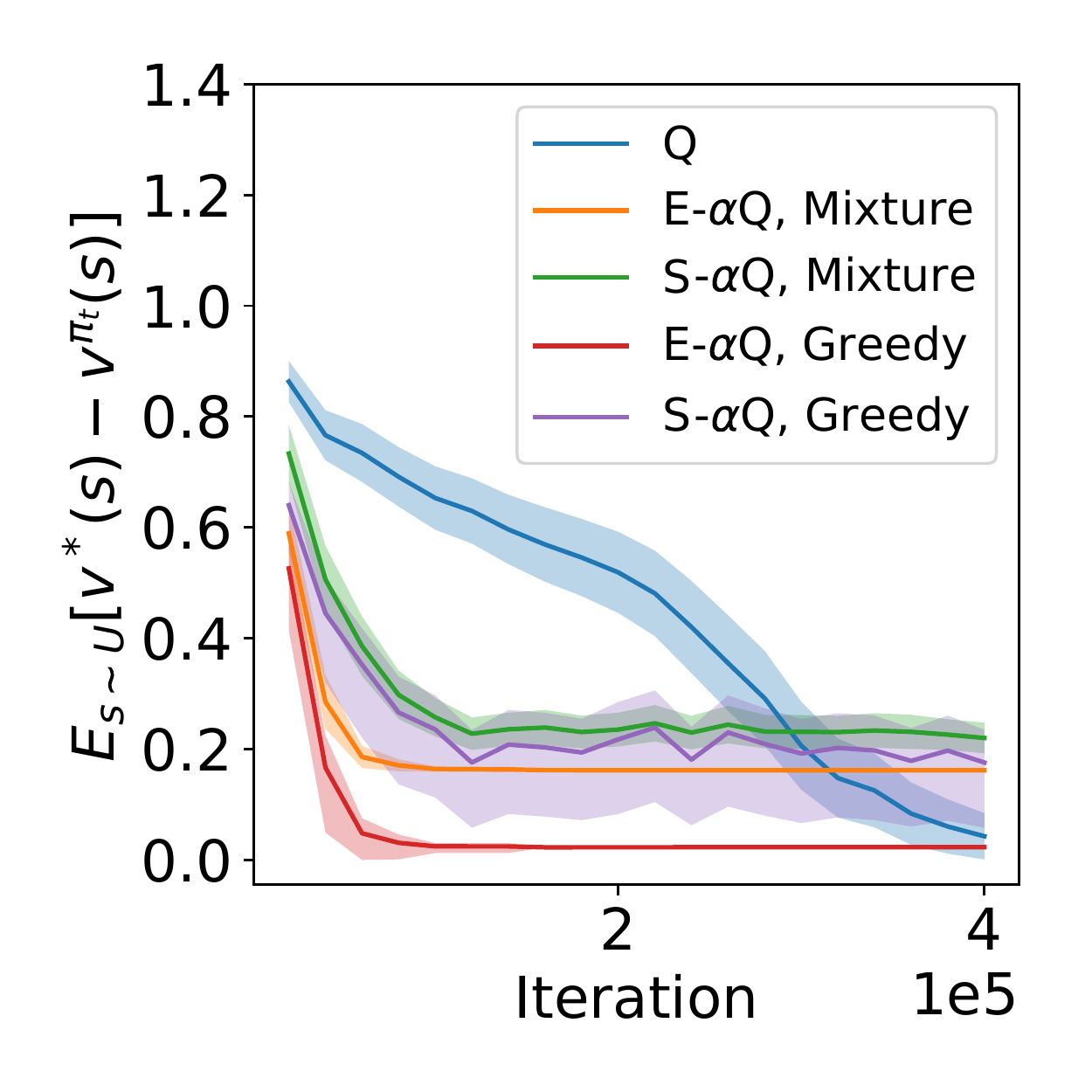} 
\end{subfigure} %
\caption{T-Cliff-Walking for the expected (E) and surrogate (S) approaches. (Left) $\alpha$=0.3. (Right) $\alpha(s)$ from prior knowledge.} 
	\label{fig:exp2}
\end{figure}

We use an adaptation of the Cliff-Walking maze \cite{sutton1998reinforcement} we term T-Cliff-Walking (see Appendix \ref{supp: experiments}). The agent starts at the bottom-left side of a maze, and needs to get to the bottom-right side goal state with value $+1$. If the agent falls off the cliff, the episode terminates with reward $-1$. When the agent visits any of the first three steps on top of the cliff, it gets a reward of $0.01\cdot(1-\gamma)$.

We tested Expected $\alpha$-Q-learning, Surrogate $\alpha$-Q-learning, and compared their performance to Q-learning in the presence of $\epsilon$-greedy exploration. Figure~\ref{fig:exp2} stresses the typical behaviour of the $\alpha$-optimality criterion. It is easier to approximate $\pi^\alpha(\pi^*_\alpha,\pi_0)$ than the optimal policy. Further, by being exploration-consciousness, the value of the approximated policy improves faster using the $\alpha$-optimal algorithms; it learns faster which regions to avoid. As Proposition \ref{theorem:performance model free} suggests, the value of the learned policy is biased w.r.t $v^*$. Next, as suggested by Proposition \ref{proposition: alpha optimal improvement}, acting greedily w.r.t. the approximated value attains better performance. Such improvement is not guaranteed while the value had not yet converged to $v^*_\alpha$. However, the empirical results suggest that if the agent performs well over the mixture policy, it is worth using the greedy policy.

We show that it is possible to incorporate prior knowledge to decrease the bias caused by being Exploration-Conscious. The T-Cliff-Walking example demands high exploration, $\alpha=\epsilon=0.3$, because of the bottleneck state between the two sides of the maze. The $\alpha$-optimal policy in such case is to stay at the left part of the maze. We used the prior knowledge that $L(s)$ close to the barrier is high. The knowledge was injected through the choice of $\alpha$, i.e., we chose a state-wise exploration scheme with $\alpha(s)=\epsilon(s)=0.1$ in the passage and the two states around it, and $\alpha(s)=0.3$ elsewhere, for all three algorithms. The results in Figure~\ref{fig:exp2} suggests that using prior knowledge to set $\alpha(s)$, can increase the performance by reducing the bias. In contrast, such prior knowledge does not help the baseline q-learning.

\subsection{Exploration Consciousness in Atari}

We tested the $\alpha$-optimal criterion in the more complex function approximation setting (see Appendix Alg. \ref{alg:Expected alpha DDQN}, \ref{alg:Surrogate alpha DDQN}). We used five Atari 2600 games (\ref{fig:DeepGames}) from the ALE \cite{bellemare2013arcade}. We chose games that resemble the Cliff Walking scenario, where the wrong choice of action can lead to a sudden termination of the episode. Thus, being unaware of the exploration strategy can lead to poor training results. We used the same deep neural network as in DQN \cite{mnih2015human}, using the openAI Baselines implementation \cite{baselines}, without \emph{any parameter tuning}, except for the update equations. We chose to use the Double-DQN variant of DQN \cite{van2016deep} for simplicity and generality. Nonetheless, changing the optimality criterion is orthogonal to any of the suggested add-ons to DQN \cite{hessel2017rainbow}. We used $\alpha=\epsilon=0.01$ in the train phase, and $\epsilon=0.001$ in the evaluation phase. For the \emph{surrogate} version, we used a naive implementation based on equation~\eqref{eq: naive alpha q learning 1}.

\begin{table}\caption{Train and Test rewards for the Atari 2600 environment, with 90\% confidence interval}
\centering
\label{fig: DeepResultsAtari}
    \begin{tabular}{L{0.6cm}|L{1.5cm}|P{1.3cm}|P{1.3cm}|P{1.3cm}}
	\toprule
	 & Game  & \begin{tabular}{@{}c@{}} DDQN\end{tabular}  
	 & \begin{tabular}{@{}c@{}} Expected \\ $\alpha$-DDQN\end{tabular} & \begin{tabular}{@{}c@{}} Surrogate \\ $\alpha$-DDQN\end{tabular}\\
	 \toprule
	\multirow{5}{*}{Train} & Breakout & \val{350}{4} & \val{356}{6} & \valbf{357}{4} \\
	& FishingDer & \val{-45}{9} & \val{-35}{27} & \valbf{-8}{8} \\
	& Frostbite & \val{1191}{171} & \val{794}{158} & \valbf{1908}{162} \\
	& Qbert & \val{13221}{565} & \val{13431}{178} & \valbf{14240}{225} \\
	& Riverraid & \val{8602}{205} & \val{8811}{645} & \valbf{11476}{79} \\
	\hline
	\multirow{5}{*}{Test} & Breakout & \valbf{402}{14} & \val{390}{5} & \val{392}{5} \\
	& FishingDer & \val{-37}{15} & \val{-19}{34} & \valbf{-3}{19} \\
	& Frostbite & \val{1720}{191} & \val{1638}{292} & \valbf{2686}{278} \\
	& Qbert & \val{15627}{497} & \val{15780}{206} & \valbf{16082}{338} \\
	& Riverraid & \val{9049}{443} & \val{9491}{802} & \valbf{12846}{241} \\
	\bottomrule
	\end{tabular}
\end{table}	
Table~\ref{fig: DeepResultsAtari} shows that our method improves upon using the optimal criterion. That is, while bias exists, the algorithm still converges to a better policy. This result holds both on the exploratory training regime and the evaluation regime. Again, acting greedy w.r.t. the approximation of the $\alpha$-optimal policy proved beneficial: The evaluation phase results surpasses the train phase results as shown in the table, and the training figures in Appendix~(\ref{fig: DeepGraphs}). The evaluation is usually done with an $\epsilon=0.001>0$. Proposition \ref{proposition: alpha optimal improvement} put formal grounds for using smaller $\epsilon$ in the evaluation phase than in the training phase; improvement is assured. Being accurate is extremely important in most Atari games, so Exploration-Consciousness can also hurt the performance. Still, one can use prior knowledge to overcome this obstacle.

\subsection{Exploration Consciousness in MuJoCo}

We tested the Expected $\sigma$-DDPG~(\ref{alg:Expected sigma DDPG}) and Surrogate $\sigma$-DDPG~(\ref{alg:Surrogate sigma DDPG}) on continuous control tasks from the MuJoCo environment \cite{todorov2012mujoco}. We used the OpenAI implementation of DDPG as the baseline, where we only changed the update equations to match our proposed algorithms. We used the default hyper-parameters, and independent Gaussian noise with $\sigma=0.2$, for all tasks and algorithms. The results in Table~\ref{fig: DeepResults Mujoco} were averaged over 10 different seeds. The performance of the $\sigma$-optimal variants superseded the baseline DDPG, for most of the training and test results. Interestingly, although improvement is not guaranteed (Proposition \ref{prop: NoImprovementContinuous}), the $\sigma$-optimal policy improved when using $\mu^\phi$ deterministically, i.e., in the test phase. This suggests that improvement can be expected on certain scenarios, although that generally it is not guaranteed. We also found that the training process was faster using the $\sigma$-optimal algorithms, as can be seen in the learning curves in Appendix~\ref{fig: MujocoGraphs}. Interestingly, again, the surrogate approach proved superior.

\begin{table}\caption{Train and Test rewards for the MuJoCo environment.}
\centering
\label{fig: DeepResults Mujoco}
	\begin{tabular}{L{0.6cm}|L{1.6cm}|P{1.3cm}|P{1.3cm}|P{1.3cm}}
	\toprule
	 & Game  & \begin{tabular}{@{}c@{}} DDPG \end{tabular}  
	 & \begin{tabular}{@{}c@{}} Expected \\ $\sigma$-DDPG\end{tabular} & \begin{tabular}{@{}c@{}} Surrogate \\ $\sigma$-DDPG\end{tabular} \\\toprule
	\multirow{6}{*}{Train}
	& Ant & \val{809}{47} & \valbf{1013}{49} & \val{993}{110} \\
	& HalfCheetah & \val{2255}{804} & \val{2634}{828} & \valbf{3848}{248} \\
	& Hopper & \val{1864}{139} & \val{1866}{132} & \valbf{2566}{155} \\
	& Humanoid & \val{1281}{142} & \val{1416}{155} & \valbf{1703}{272} \\
	& InPendulum & \val{694}{109} & \val{882}{33} & \valbf{998}{3} \\
	& Walker & \val{1722}{170} & \val{2144}{145} & \valbf{2587}{214} \\
	\hline
	\multirow{6}{*}{Test}
	& Ant & \val{1611}{120} & \valbf{1924}{126} & \val{1754}{184} \\
	& HalfCheetah & \val{2729}{936} & \val{3147}{986} & \valbf{4579}{298} \\
	& Hopper & \valbf{3099}{113} & \val{3071}{50} & \val{3037}{78} \\
	& Humanoid & \val{1688}{223} & \val{1994}{389} & \valbf{2154}{408} \\
	& InPendulum & \val{999}{2} & \val{1000}{0} & \valbf{1000}{0} \\
	& Walker & \val{3031}{298} & \val{3315}{147} & \valbf{3501}{240} \\
	\bottomrule
	\end{tabular} 
\end{table}

\section{Relation to existing work}

Lately, several works have tackled the exploration problem for deep RL. In some, like Bootstrapped-DQN (see appendix [D.1] in \cite{osband2016deep}), the authors still employ an $\epsilon$-greedy mechanism on top of their methods. Moreover, methods like Distributional-DQN \cite{bellemare2017distributional,dabney2018distributional} and the state-of-the-art Ape-X DQN \cite{horgan2018distributed}, still uses $\epsilon$-greedy and Gaussian noise, for discrete and continuous actions, respectively. Hence, all the above works are applicable for the $\alpha$-optimal criterion by using the simple techniques described in Section \ref{sec: algorithms}.


Existing on-policy methods produce variants of Exploration-Consciousness. In TRPO and A3C \cite{schulman2015trust,mnih2016asynchronous}, the exploration is \emph{implicitly injected} into the agent policy through entropy regularization, and the agent improves upon the value of the explorative policy. Simple derivation shows the $\alpha$-greedy and the Gaussian approaches are both equivalent to regularizing the entropy to be higher than a certain value by setting $\alpha$ or $\sigma$ appropriately.


Expected $\alpha$-Q-learning highlights a relation to algorithms analysed in \cite{john1994best,littman1997generalized} and to Expected-Sarsa (ES)  \cite{van2009theoretical}. The focus of \cite{john1994best,littman1997generalized} is exploration-conscious q-based methods. In ES, when setting the `estimation policy' \cite{van2009theoretical} to be $\pi = (1-\alpha_t)\pi_{\G}+\alpha_t \pi_0$, we get similar updating equations as in lines \ref{eq: expected alpha eq update}-\ref{eq: expected update q es}, and similarly to \cite{john1994best,littman1997generalized}. However, in ES $\alpha_t$ decays to zero, and the \emph{optimal policy} is obtained in the infinite time limit. In \cite{nachum2018smoothed}, the authors offer a gradient based mechanism for updating the mean and variance of the actor. Here, we offer and analyze the approach of setting $\alpha_t$ and $\sigma_t$ to a constant value. This would be of interest especially when a `good' mechanism for decaying $\alpha_t$ and $\sigma_t$ lacks; the decay mechanism is usually chosen by trial-and-error, and is not clear how it should be set.

Lastly, \eqref{eq:eps_optimization} and \eqref{eq:continuous eps_optimization} can be understood as defining a `surrogate problem', rather than finding an optimal policy. In this sense, it offers an alternative approach to biasing the problem by lowering the discount-factor, i.e., solve a surrogate MDP with $\bar{\gamma}<\gamma$ \cite{petrik2009biasing,jiang2015dependence}. Interestingly, the introduced bias when solving \eqref{eq:eps_optimization} is proportional to a \emph{local property} of $v^*$, $L(s)$, that can be estimated using prior-knowledge on the MDP, where solving an MDP with $\bar{\gamma}$ introduces a bias proportional to a \emph{non-local} term, which is harder to estimate.
More importantly, the performance of an $\alpha$-optimal policy $\pi^*_\alpha$ is assured to improve when tested on the original MDP $\mathcal{M}$ (Proposition~\ref{proposition: alpha optimal improvement}), while  the performance of an optimal policy in an MDP with $\bar{\gamma}$ might decline when tested on $\M$ with $\gamma$-discounting.




\section{Summary}

In this paper, we revisited the notion of an agent being conscious to an exploration process. To our view, this notion did not receive the proper attention, though it is implicitly and repeatedly used.  

We started by formally defining \emph{optimal policy} w.r.t. an exploration mechanism \eqref{eq:eps_optimization}, \eqref{eq:continuous eps_optimization}.  This expanded the view on exploration-conscious q-learning \cite{john1994best,littman1997generalized} to a more general one, and lead us to derive new algorithms, as well as re-interpreting existing ones \cite{van2009theoretical}. We formulated the surrogate MDP notion, which helped us to establish that exploration-conscious criteria can be solved by Dynamic Programming, or, more generally, by an MDP solver. From the practical side, based on the theory, we tested DRL algorithms -- by simply modifying existing ones, with no further hyper-parameter tuning -- and empirically showed their superiority.

Although a bias - error sensitivity tradeoff was formulated, we did not prove \eqref{eq:eps_optimization}, \eqref{eq:continuous eps_optimization} are easier to solve than an MDP. We believe proving whether the claim is true is of interest. Furthermore, analyzing more exploration-conscious criteria, e.g., exploration-conscious  w.r.t. Ornstein-Uhlenbeck noise, is of interest, as well as defining a unified framework for exploration-conscious criteria.




\section*{Acknowledgments}
We would like to thank Chen Tessler, Nadav Merlis and Tom Zahavy for helpful discussions.

\bibliography{main_ICML}

\begin{thebibliography}{34}
\providecommand{\natexlab}[1]{#1}
\providecommand{\url}[1]{\texttt{#1}}
\expandafter\ifx\csname urlstyle\endcsname\relax
  \providecommand{\doi}[1]{doi: #1}\else
  \providecommand{\doi}{doi: \begingroup \urlstyle{rm}\Url}\fi

\bibitem[Asadi \& Littman(2016)Asadi and Littman]{asadi2016alternative}
Asadi, K. and Littman, M.~L.
\newblock An alternative softmax operator for reinforcement learning.
\newblock \emph{arXiv preprint arXiv:1612.05628}, 2016.

\bibitem[Bellemare et~al.(2016)Bellemare, Srinivasan, Ostrovski, Schaul,
  Saxton, and Munos]{bellemare2016unifying}
Bellemare, M., Srinivasan, S., Ostrovski, G., Schaul, T., Saxton, D., and
  Munos, R.
\newblock Unifying count-based exploration and intrinsic motivation.
\newblock In \emph{Advances in Neural Information Processing Systems}, pp.\
  1471--1479, 2016.

\bibitem[Bellemare et~al.(2013)Bellemare, Naddaf, Veness, and
  Bowling]{bellemare2013arcade}
Bellemare, M.~G., Naddaf, Y., Veness, J., and Bowling, M.
\newblock The arcade learning environment: An evaluation platform for general
  agents.
\newblock \emph{Journal of Artificial Intelligence Research}, 47:\penalty0
  253--279, 2013.

\bibitem[Bellemare et~al.(2017)Bellemare, Dabney, and
  Munos]{bellemare2017distributional}
Bellemare, M.~G., Dabney, W., and Munos, R.
\newblock A distributional perspective on reinforcement learning.
\newblock \emph{arXiv preprint arXiv:1707.06887}, 2017.

\bibitem[Bertsekas \& Tsitsiklis(1995)Bertsekas and
  Tsitsiklis]{bertsekas1995neuro}
Bertsekas, D.~P. and Tsitsiklis, J.~N.
\newblock Neuro-dynamic programming: an overview.
\newblock In \emph{Decision and Control, 1995., Proceedings of the 34th IEEE
  Conference on}, volume~1, pp.\  560--564. IEEE, 1995.

\bibitem[Brafman \& Tennenholtz(2002)Brafman and Tennenholtz]{brafman2002r}
Brafman, R.~I. and Tennenholtz, M.
\newblock R-max-a general polynomial time algorithm for near-optimal
  reinforcement learning.
\newblock \emph{Journal of Machine Learning Research}, 3\penalty0
  (Oct):\penalty0 213--231, 2002.

\bibitem[Dabney et~al.(2018)Dabney, Rowland, Bellemare, and
  Munos]{dabney2018distributional}
Dabney, W., Rowland, M., Bellemare, M.~G., and Munos, R.
\newblock Distributional reinforcement learning with quantile regression.
\newblock In \emph{Thirty-Second AAAI Conference on Artificial Intelligence},
  2018.

\bibitem[Dhariwal et~al.(2017)Dhariwal, Hesse, Klimov, Nichol, Plappert,
  Radford, Schulman, Sidor, and Wu]{baselines}
Dhariwal, P., Hesse, C., Klimov, O., Nichol, A., Plappert, M., Radford, A.,
  Schulman, J., Sidor, S., and Wu, Y.
\newblock Openai baselines.
\newblock \url{https://github.com/openai/baselines}, 2017.

\bibitem[Efroni et~al.(2018)Efroni, Dalal, Scherrer, and
  Mannor]{beyond2018efroni}
Efroni, Y., Dalal, G., Scherrer, B., and Mannor, S.
\newblock Beyond the one-step greedy approach in reinforcement learning.
\newblock In \emph{Proceedings of the 35th International Conference on Machine
  Learning}, pp.\  1386--1395, 2018.

\bibitem[Fortunato et~al.(2017)Fortunato, Azar, Piot, Menick, Osband, Graves,
  Mnih, Munos, Hassabis, Pietquin, et~al.]{fortunato2017noisy}
Fortunato, M., Azar, M.~G., Piot, B., Menick, J., Osband, I., Graves, A., Mnih,
  V., Munos, R., Hassabis, D., Pietquin, O., et~al.
\newblock Noisy networks for exploration.
\newblock \emph{arXiv preprint arXiv:1706.10295}, 2017.

\bibitem[Hessel et~al.(2017)Hessel, Modayil, Van~Hasselt, Schaul, Ostrovski,
  Dabney, Horgan, Piot, Azar, and Silver]{hessel2017rainbow}
Hessel, M., Modayil, J., Van~Hasselt, H., Schaul, T., Ostrovski, G., Dabney,
  W., Horgan, D., Piot, B., Azar, M., and Silver, D.
\newblock Rainbow: Combining improvements in deep reinforcement learning.
\newblock \emph{arXiv preprint arXiv:1710.02298}, 2017.

\bibitem[Horgan et~al.(2018)Horgan, Quan, Budden, Barth-Maron, Hessel,
  Van~Hasselt, and Silver]{horgan2018distributed}
Horgan, D., Quan, J., Budden, D., Barth-Maron, G., Hessel, M., Van~Hasselt, H.,
  and Silver, D.
\newblock Distributed prioritized experience replay.
\newblock \emph{arXiv preprint arXiv:1803.00933}, 2018.

\bibitem[Jaksch et~al.(2010)Jaksch, Ortner, and Auer]{jaksch2010near}
Jaksch, T., Ortner, R., and Auer, P.
\newblock Near-optimal regret bounds for reinforcement learning.
\newblock \emph{Journal of Machine Learning Research}, 11\penalty0
  (Apr):\penalty0 1563--1600, 2010.

\bibitem[Jiang et~al.(2015)Jiang, Kulesza, Singh, and
  Lewis]{jiang2015dependence}
Jiang, N., Kulesza, A., Singh, S., and Lewis, R.
\newblock The dependence of effective planning horizon on model accuracy.
\newblock In \emph{Proceedings of the 2015 International Conference on
  Autonomous Agents and Multiagent Systems}, pp.\  1181--1189. International
  Foundation for Autonomous Agents and Multiagent Systems, 2015.

\bibitem[John(1994)]{john1994best}
John, G.~H.
\newblock When the best move isn't optimal: Q-learning with exploration.
\newblock Citeseer, 1994.

\bibitem[Kearns \& Singh(2002)Kearns and Singh]{kearns2002near}
Kearns, M. and Singh, S.
\newblock Near-optimal reinforcement learning in polynomial time.
\newblock \emph{Machine learning}, 49\penalty0 (2-3):\penalty0 209--232, 2002.

\bibitem[Lillicrap et~al.(2015)Lillicrap, Hunt, Pritzel, Heess, Erez, Tassa,
  Silver, and Wierstra]{lillicrap2015continuous}
Lillicrap, T.~P., Hunt, J.~J., Pritzel, A., Heess, N., Erez, T., Tassa, Y.,
  Silver, D., and Wierstra, D.
\newblock Continuous control with deep reinforcement learning.
\newblock \emph{arXiv preprint arXiv:1509.02971}, 2015.

\bibitem[Littman et~al.(1997)]{littman1997generalized}
Littman, M.~L. et~al.
\newblock Generalized markov decision processes: Dynamic-programming and
  reinforcement-learning algorithms.
\newblock 1997.

\bibitem[Mnih et~al.(2015)Mnih, Kavukcuoglu, Silver, Rusu, Veness, Bellemare,
  Graves, Riedmiller, Fidjeland, Ostrovski, et~al.]{mnih2015human}
Mnih, V., Kavukcuoglu, K., Silver, D., Rusu, A.~A., Veness, J., Bellemare,
  M.~G., Graves, A., Riedmiller, M., Fidjeland, A.~K., Ostrovski, G., et~al.
\newblock Human-level control through deep reinforcement learning.
\newblock \emph{Nature}, 518\penalty0 (7540):\penalty0 529, 2015.

\bibitem[Mnih et~al.(2016)Mnih, Badia, Mirza, Graves, Lillicrap, Harley,
  Silver, and Kavukcuoglu]{mnih2016asynchronous}
Mnih, V., Badia, A.~P., Mirza, M., Graves, A., Lillicrap, T., Harley, T.,
  Silver, D., and Kavukcuoglu, K.
\newblock Asynchronous methods for deep reinforcement learning.
\newblock In \emph{International Conference on Machine Learning}, pp.\
  1928--1937, 2016.

\bibitem[Nachum et~al.(2018)Nachum, Norouzi, Tucker, and
  Schuurmans]{nachum2018smoothed}
Nachum, O., Norouzi, M., Tucker, G., and Schuurmans, D.
\newblock Smoothed action value functions for learning gaussian policies.
\newblock \emph{arXiv preprint arXiv:1803.02348}, 2018.

\bibitem[Osband et~al.(2013)Osband, Russo, and Van~Roy]{osband2013more}
Osband, I., Russo, D., and Van~Roy, B.
\newblock (more) efficient reinforcement learning via posterior sampling.
\newblock In \emph{Advances in Neural Information Processing Systems}, pp.\
  3003--3011, 2013.

\bibitem[Osband et~al.(2016)Osband, Blundell, Pritzel, and
  Van~Roy]{osband2016deep}
Osband, I., Blundell, C., Pritzel, A., and Van~Roy, B.
\newblock Deep exploration via bootstrapped dqn.
\newblock In \emph{Advances in neural information processing systems}, pp.\
  4026--4034, 2016.

\bibitem[Petrik \& Scherrer(2009)Petrik and Scherrer]{petrik2009biasing}
Petrik, M. and Scherrer, B.
\newblock Biasing approximate dynamic programming with a lower discount factor.
\newblock In \emph{Advances in neural information processing systems}, pp.\
  1265--1272, 2009.

\bibitem[Puterman(1994)]{puterman1994markov}
Puterman, M.~L.
\newblock Markov decision processes. j.
\newblock \emph{Wiley and Sons}, 1994.

\bibitem[Schulman et~al.(2015)Schulman, Levine, Abbeel, Jordan, and
  Moritz]{schulman2015trust}
Schulman, J., Levine, S., Abbeel, P., Jordan, M., and Moritz, P.
\newblock Trust region policy optimization.
\newblock In \emph{International Conference on Machine Learning}, pp.\
  1889--1897, 2015.

\bibitem[Schulman et~al.(2017)Schulman, Wolski, Dhariwal, Radford, and
  Klimov]{schulman2017proximal}
Schulman, J., Wolski, F., Dhariwal, P., Radford, A., and Klimov, O.
\newblock Proximal policy optimization algorithms.
\newblock \emph{arXiv preprint arXiv:1707.06347}, 2017.

\bibitem[Silver et~al.(2014)Silver, Lever, Heess, Degris, Wierstra, and
  Riedmiller]{silver2014deterministic}
Silver, D., Lever, G., Heess, N., Degris, T., Wierstra, D., and Riedmiller, M.
\newblock Deterministic policy gradient algorithms.
\newblock In \emph{ICML}, 2014.

\bibitem[Singh et~al.(2000)Singh, Jaakkola, Littman, and
  Szepesv{\'a}ri]{singh2000convergence}
Singh, S., Jaakkola, T., Littman, M.~L., and Szepesv{\'a}ri, C.
\newblock Convergence results for single-step on-policy reinforcement-learning
  algorithms.
\newblock \emph{Machine learning}, 38\penalty0 (3):\penalty0 287--308, 2000.

\bibitem[Strehl et~al.(2009)Strehl, Li, and Littman]{strehl2009reinforcement}
Strehl, A.~L., Li, L., and Littman, M.~L.
\newblock Reinforcement learning in finite mdps: Pac analysis.
\newblock \emph{Journal of Machine Learning Research}, 10\penalty0
  (Nov):\penalty0 2413--2444, 2009.

\bibitem[Sutton et~al.(1998)Sutton, Barto, et~al.]{sutton1998reinforcement}
Sutton, R.~S., Barto, A.~G., et~al.
\newblock \emph{Reinforcement learning: An introduction}.
\newblock MIT press, 1998.

\bibitem[Todorov et~al.(2012)Todorov, Erez, and Tassa]{todorov2012mujoco}
Todorov, E., Erez, T., and Tassa, Y.
\newblock Mujoco: A physics engine for model-based control.
\newblock In \emph{Intelligent Robots and Systems (IROS), 2012 IEEE/RSJ
  International Conference on}, pp.\  5026--5033. IEEE, 2012.

\bibitem[Van~Hasselt et~al.(2016)Van~Hasselt, Guez, and Silver]{van2016deep}
Van~Hasselt, H., Guez, A., and Silver, D.
\newblock Deep reinforcement learning with double q-learning.
\newblock In \emph{AAAI}, volume~2, pp.\ ~5. Phoenix, AZ, 2016.

\bibitem[Van~Seijen et~al.(2009)Van~Seijen, Van~Hasselt, Whiteson, and
  Wiering]{van2009theoretical}
Van~Seijen, H., Van~Hasselt, H., Whiteson, S., and Wiering, M.
\newblock A theoretical and empirical analysis of expected sarsa.
\newblock In \emph{Adaptive Dynamic Programming and Reinforcement Learning,
  2009. ADPRL'09. IEEE Symposium on}, pp.\  177--184. IEEE, 2009.

\end{thebibliography}
\bibliographystyle{icml2019}

\onecolumn

\begin{appendices}


\onecolumn
\section{Training graphs for the Atari and MuJoCo experiments}
\begin{figure}[H]
\centering
\begin{subfigure}{0.5\textwidth}   
 	\centering
 \includegraphics[width=0.42\textwidth]{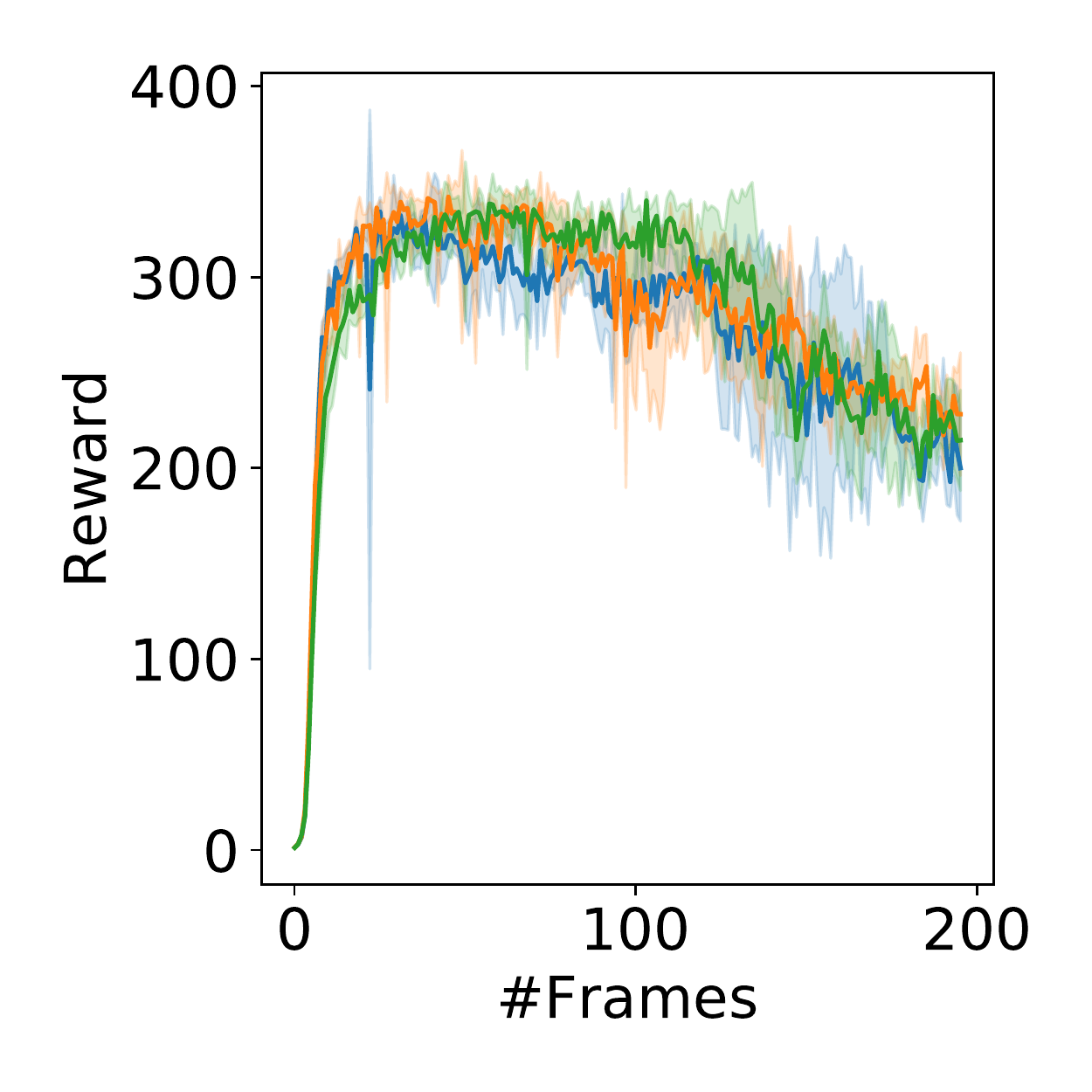}
 \hfill
 \includegraphics[width=0.42\textwidth]{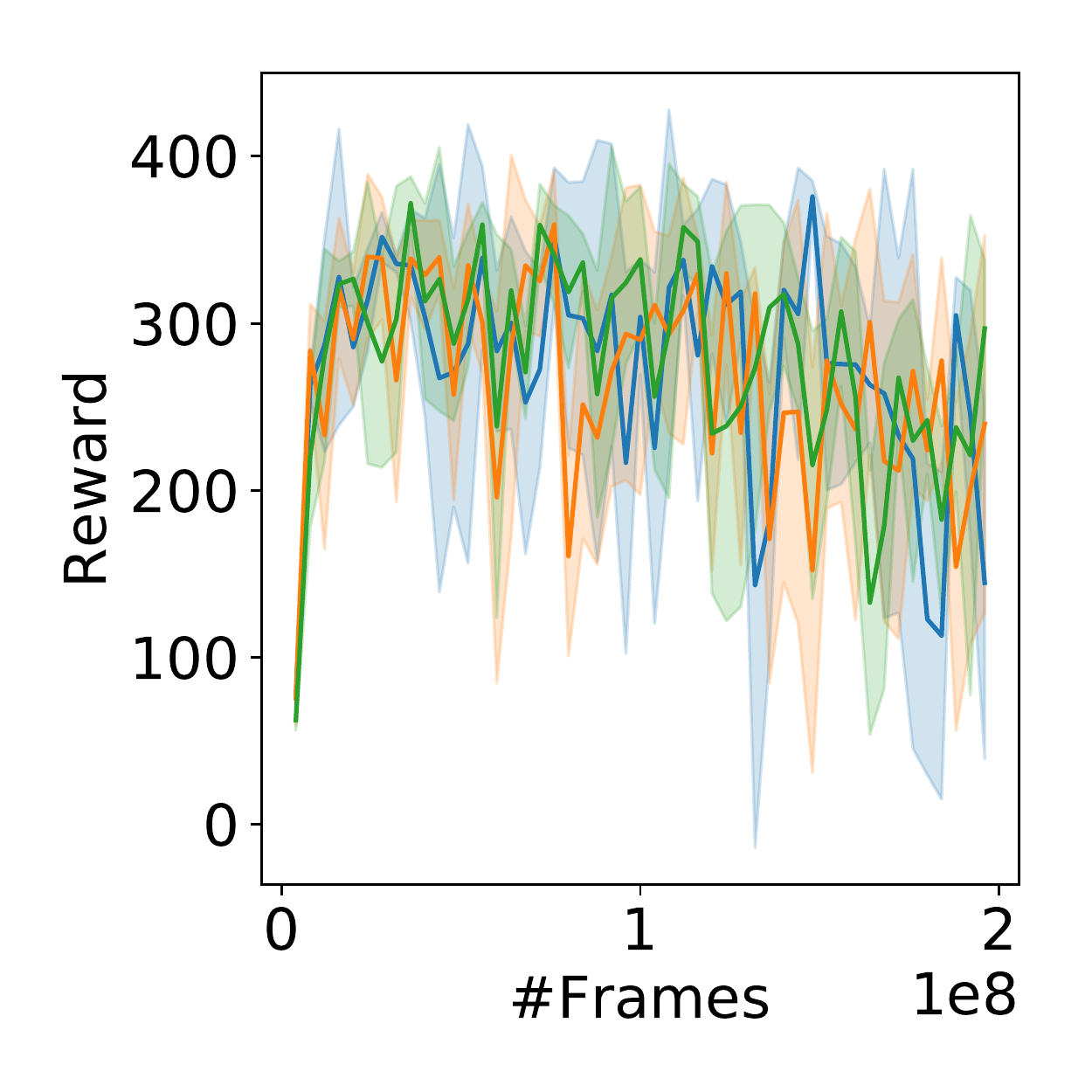} 
\end{subfigure} \\
\centering
\begin{subfigure}{0.5\textwidth}  
 	\centering
 \includegraphics[width=0.42\textwidth]{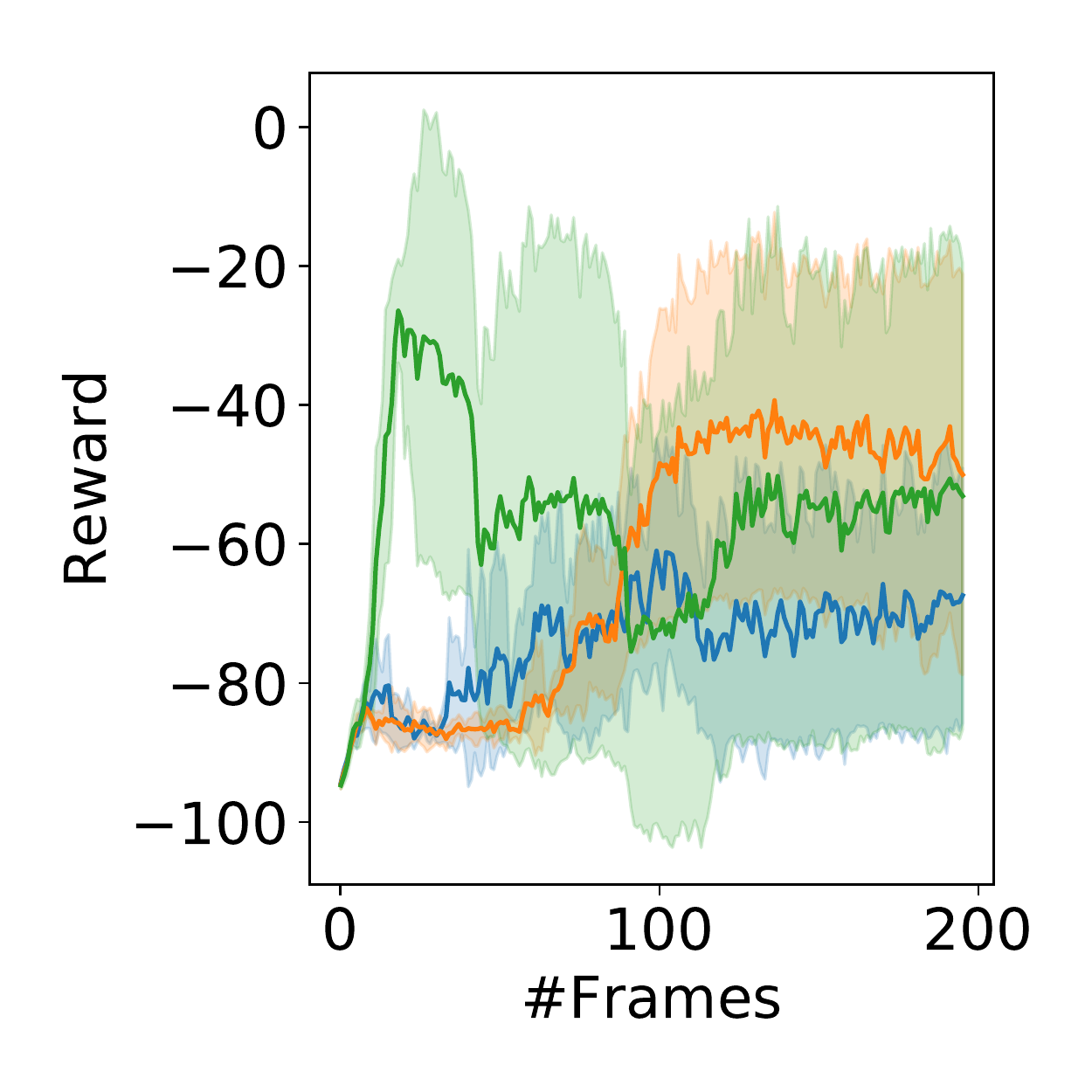}
  \hfill
 \includegraphics[width=0.42\textwidth]{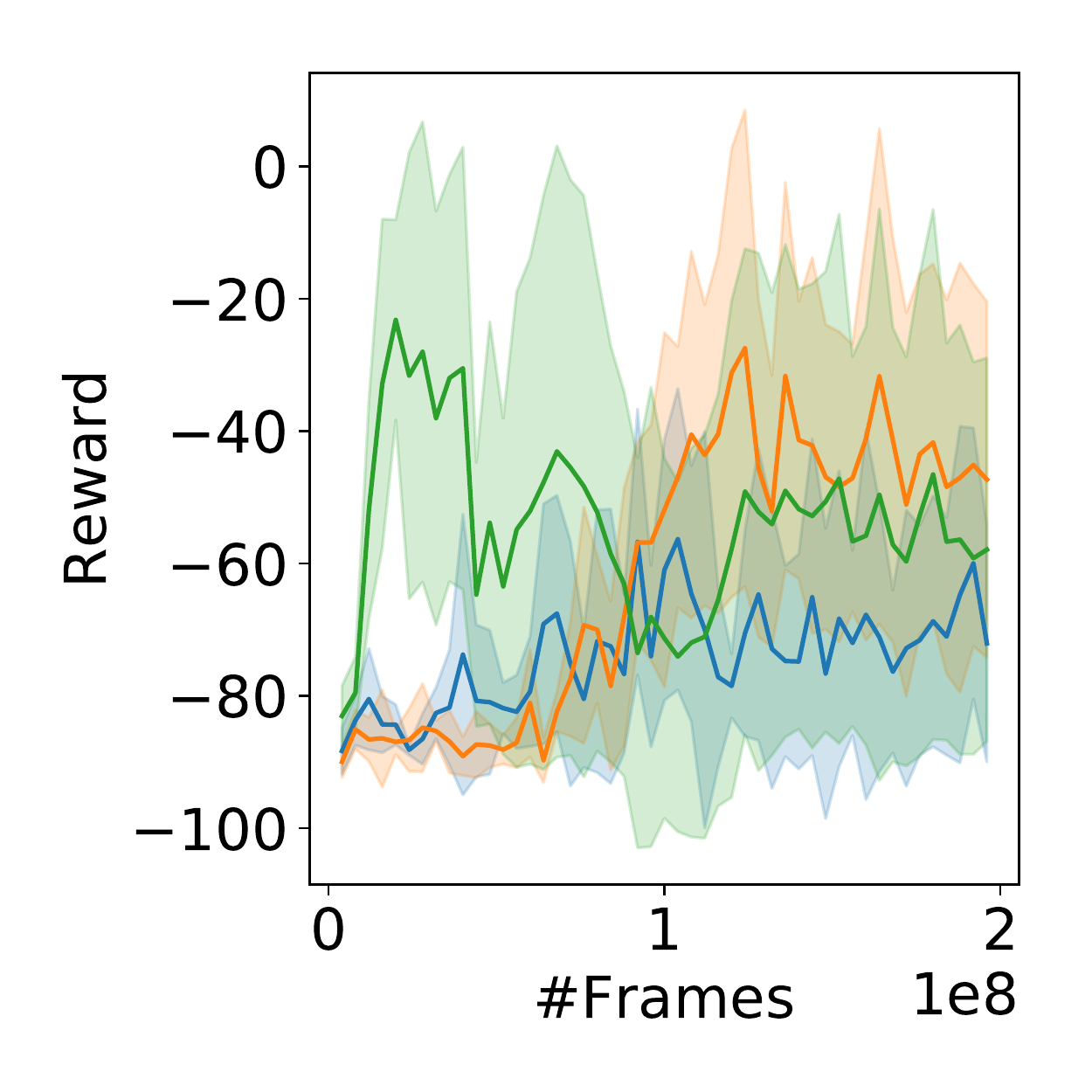}
\end{subfigure} \\
\begin{subfigure}{0.5\textwidth}   
	\centering
\includegraphics[width=0.42\textwidth]{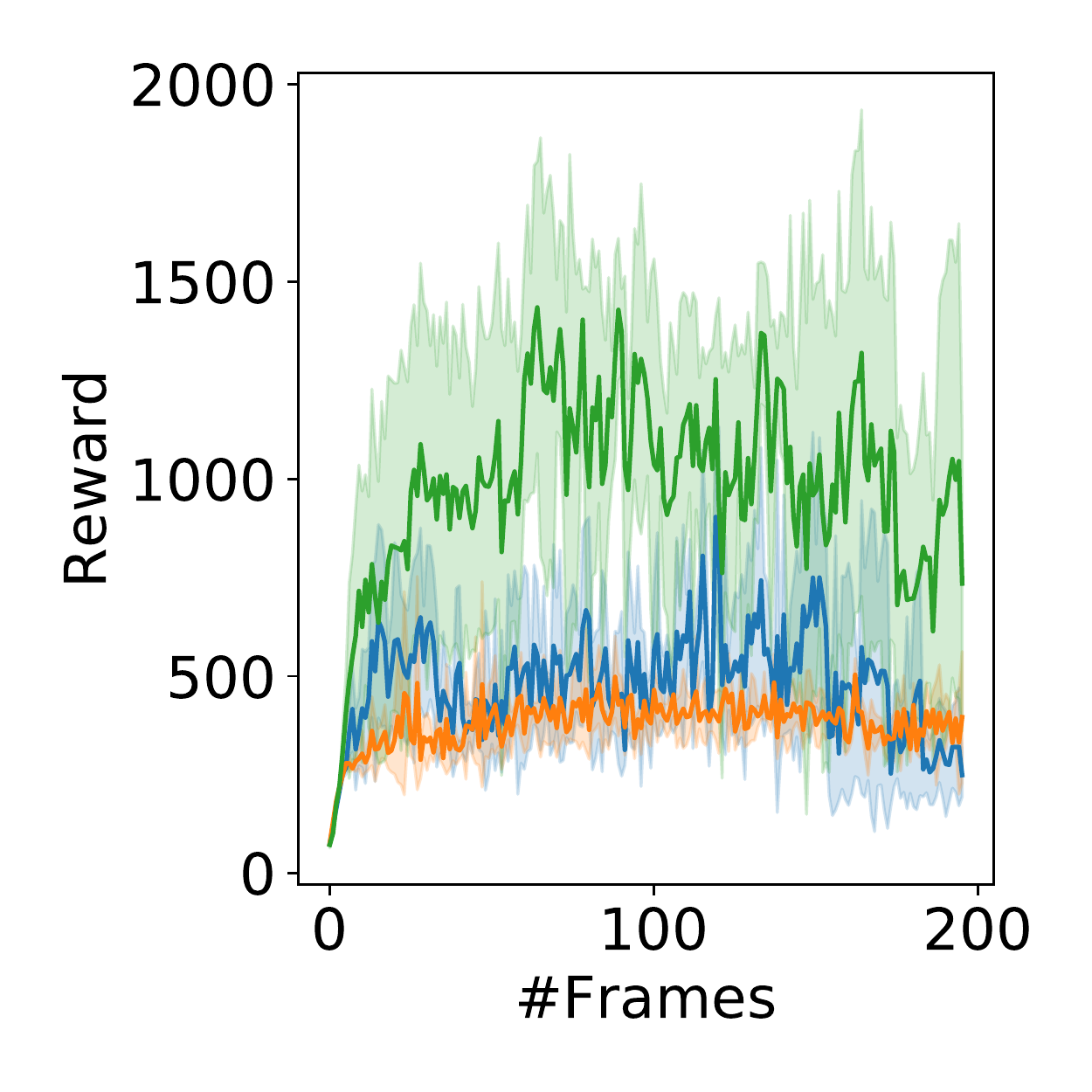}
\hfill
\includegraphics[width=0.42\textwidth]{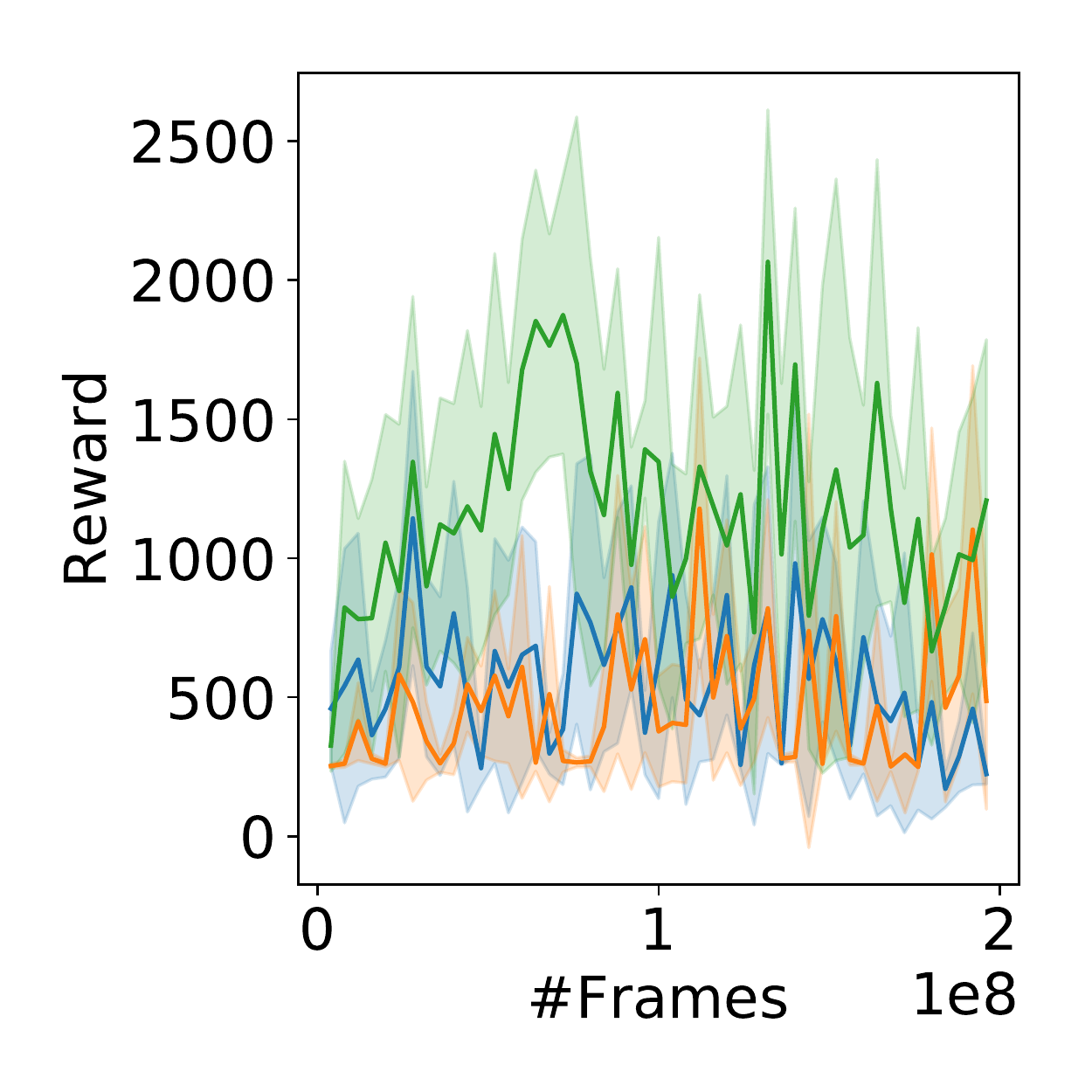} 
\end{subfigure} \\
\begin{subfigure}{0.5\textwidth}     
 	\centering
 \includegraphics[width=0.42\textwidth]{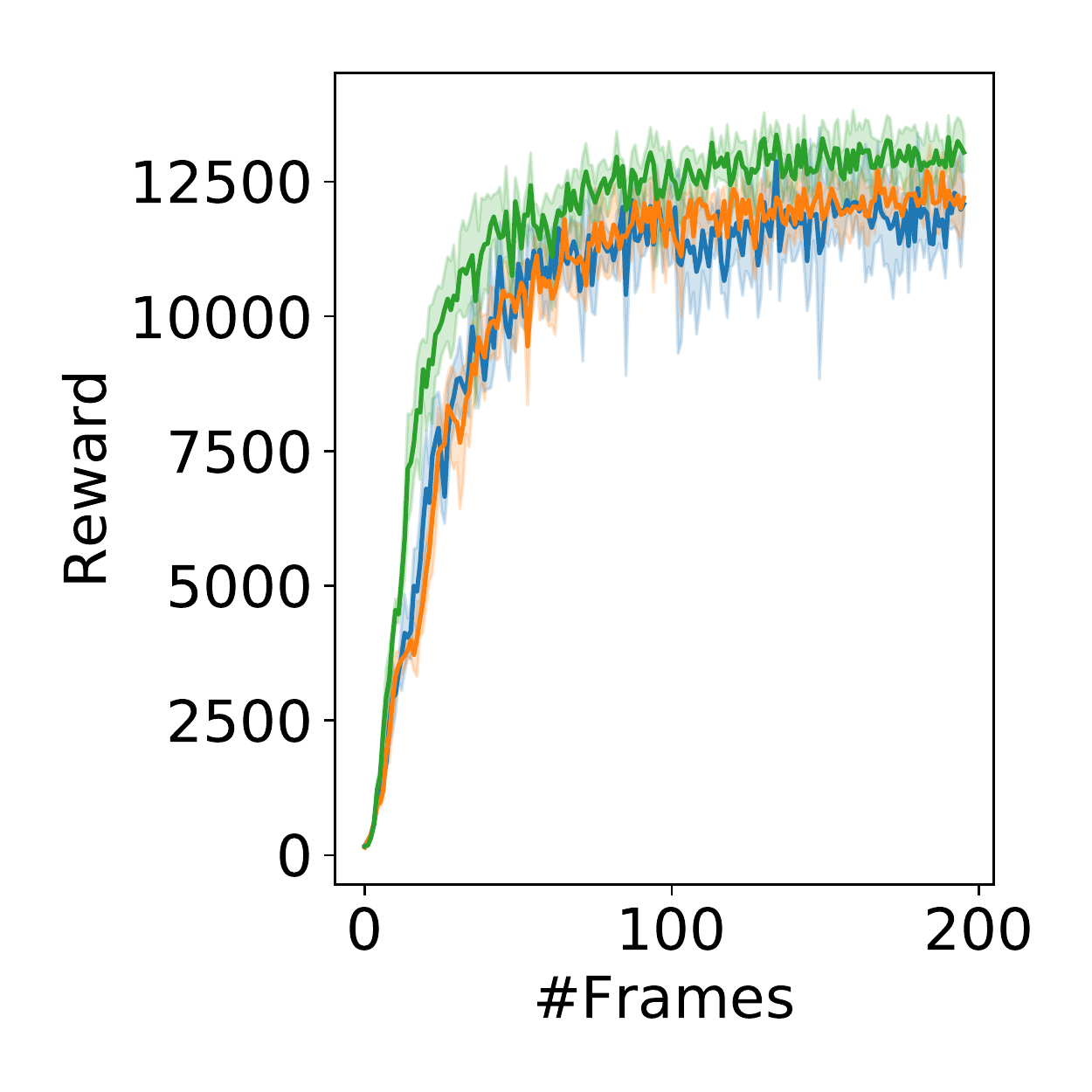}
 \hfill
 \includegraphics[width=0.42\textwidth]{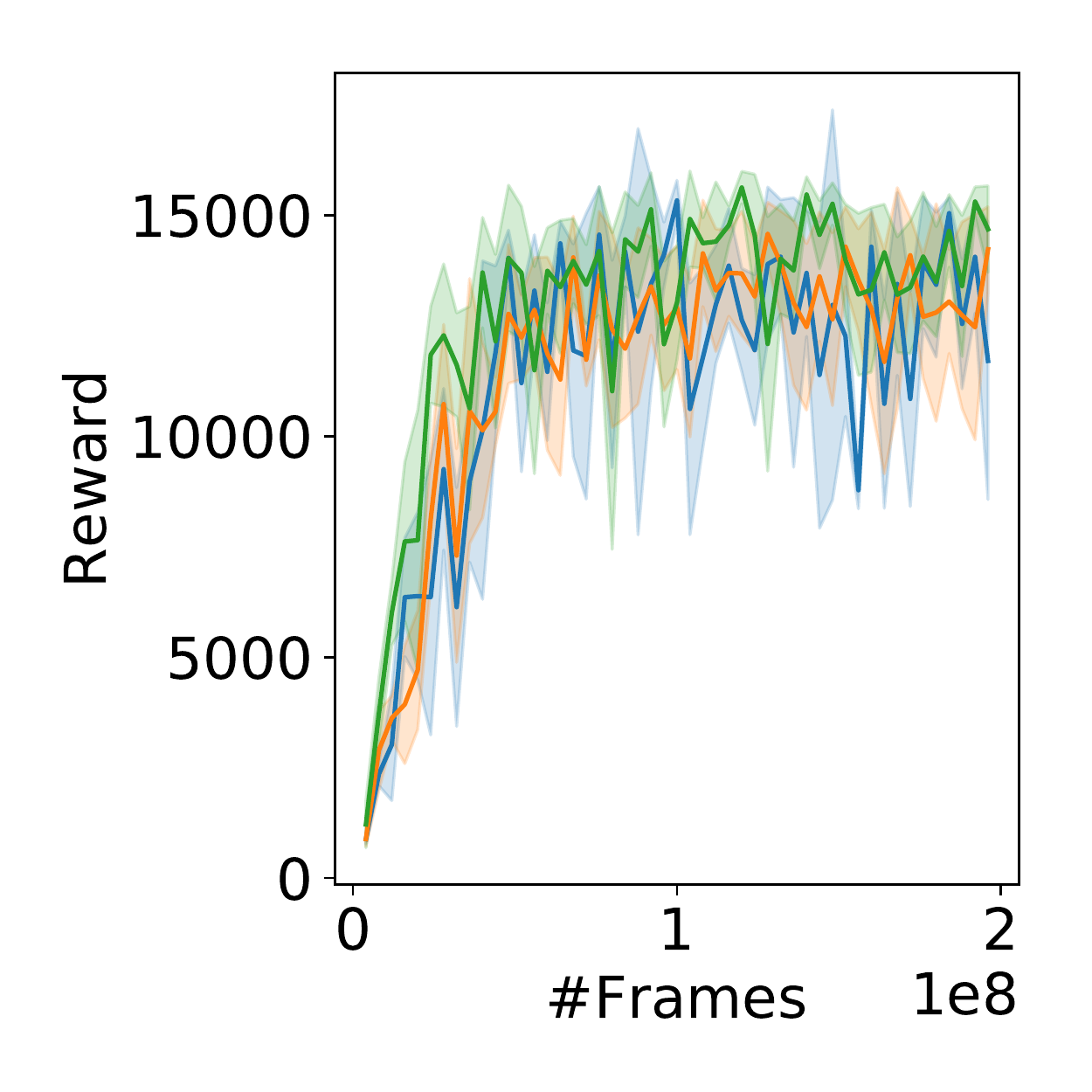} 
\end{subfigure} \\
\begin{subfigure}{0.5\textwidth}   
\centering
\includegraphics[width=0.42\textwidth]{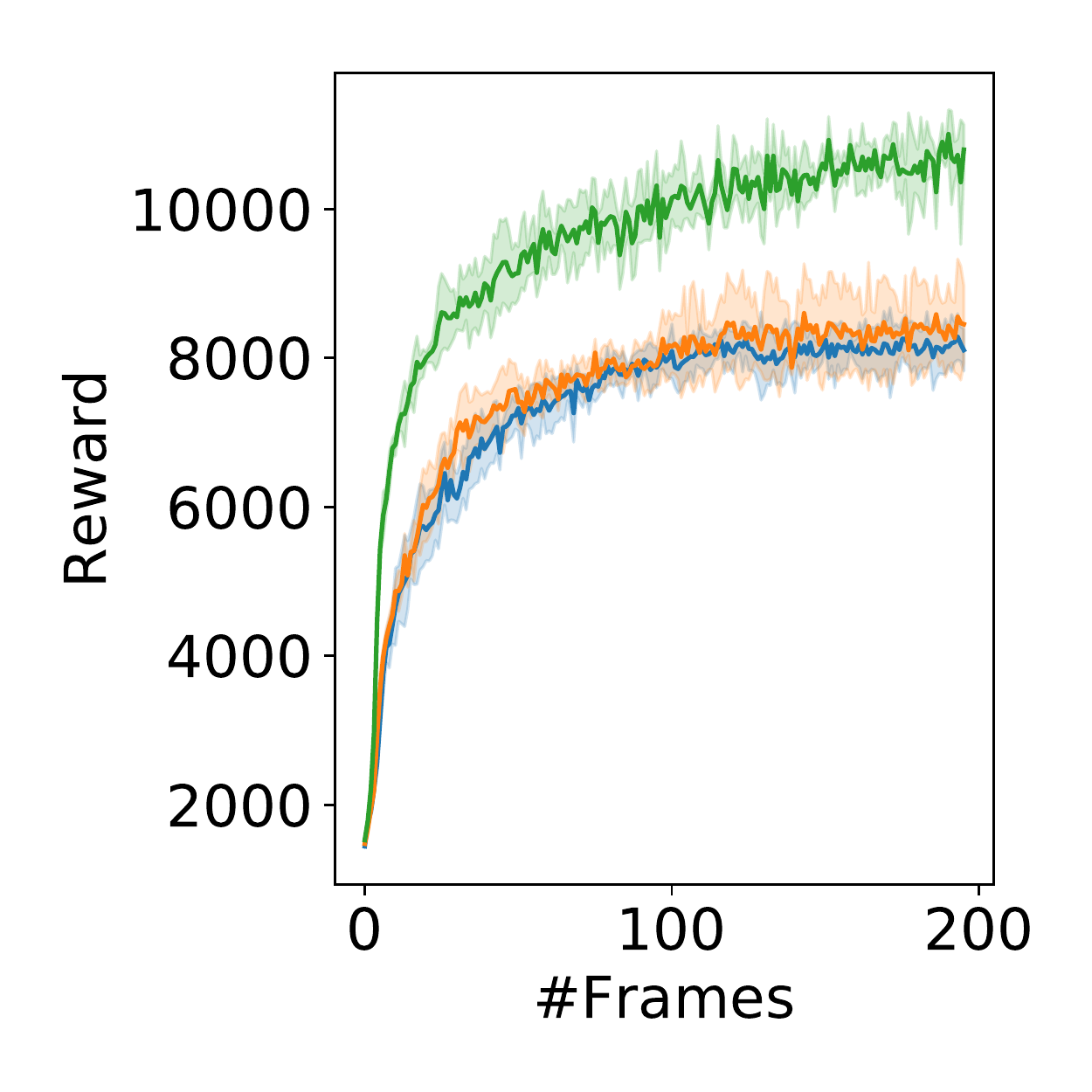}
 \hfill
\includegraphics[width=0.42\textwidth]{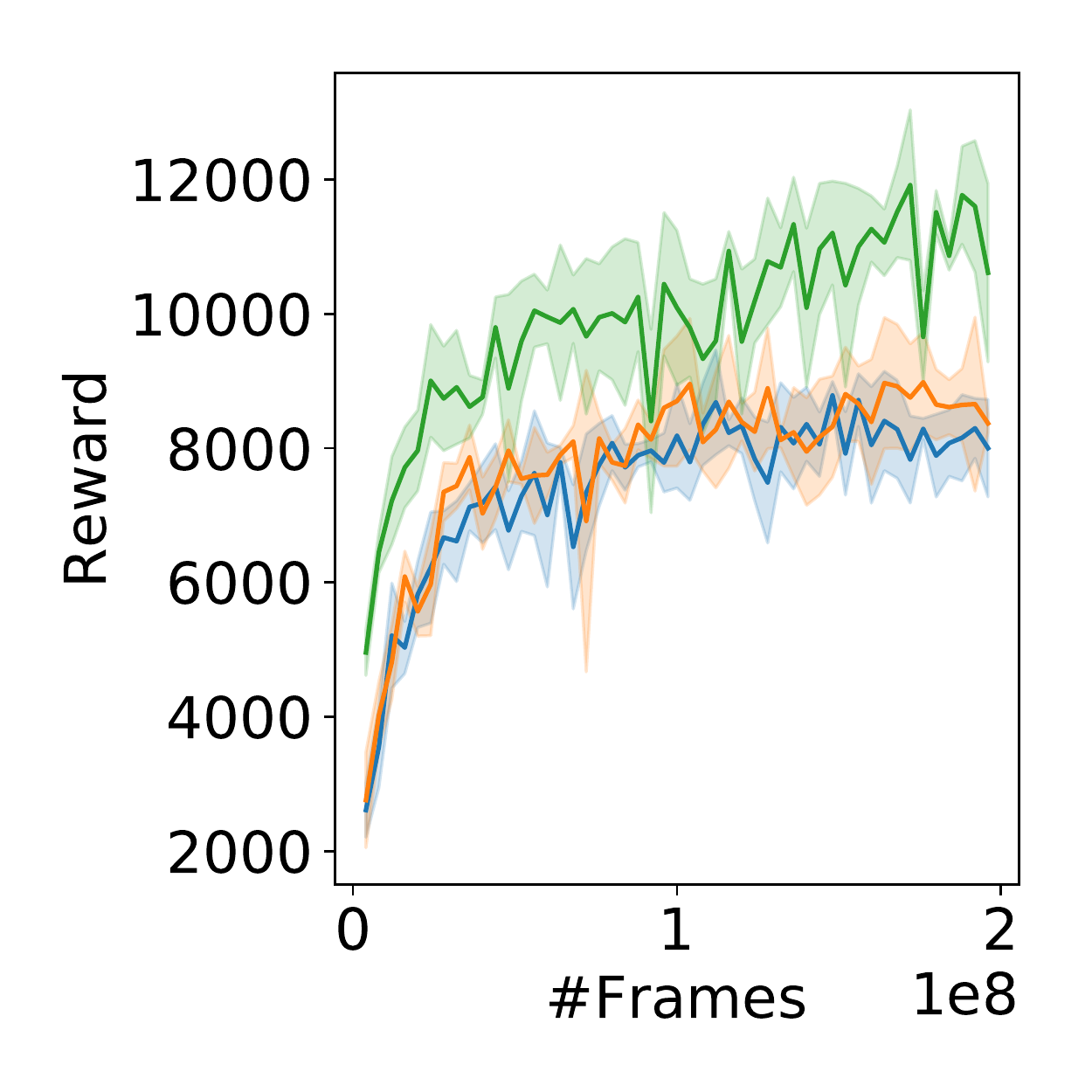} 
\end{subfigure} \\
\begin{subfigure}{0.6\textwidth}   
\centering
\includegraphics[width=\linewidth,center]{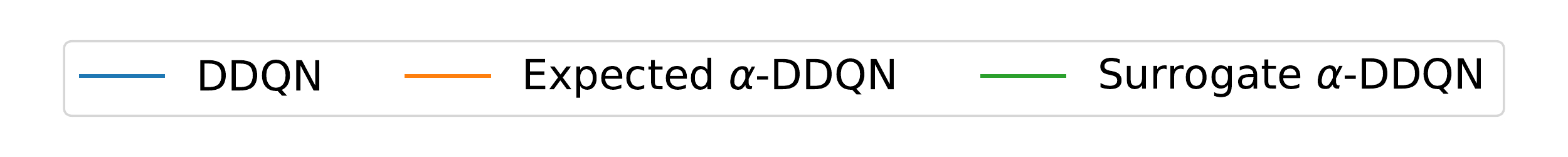}
\end{subfigure}

\caption{Simulation results for the Atari 2600 environment: From up to bottom: Breakout, Fishing Derby, Frostbite, Qbert and Riverraid. (Left) Training. (Right) Test.}\label{fig: DeepGraphs}
\end{figure}

\begin{figure}
\centering
\begin{subfigure}{0.5\textwidth}   
 	\centering
 \includegraphics[width=0.4\textwidth]{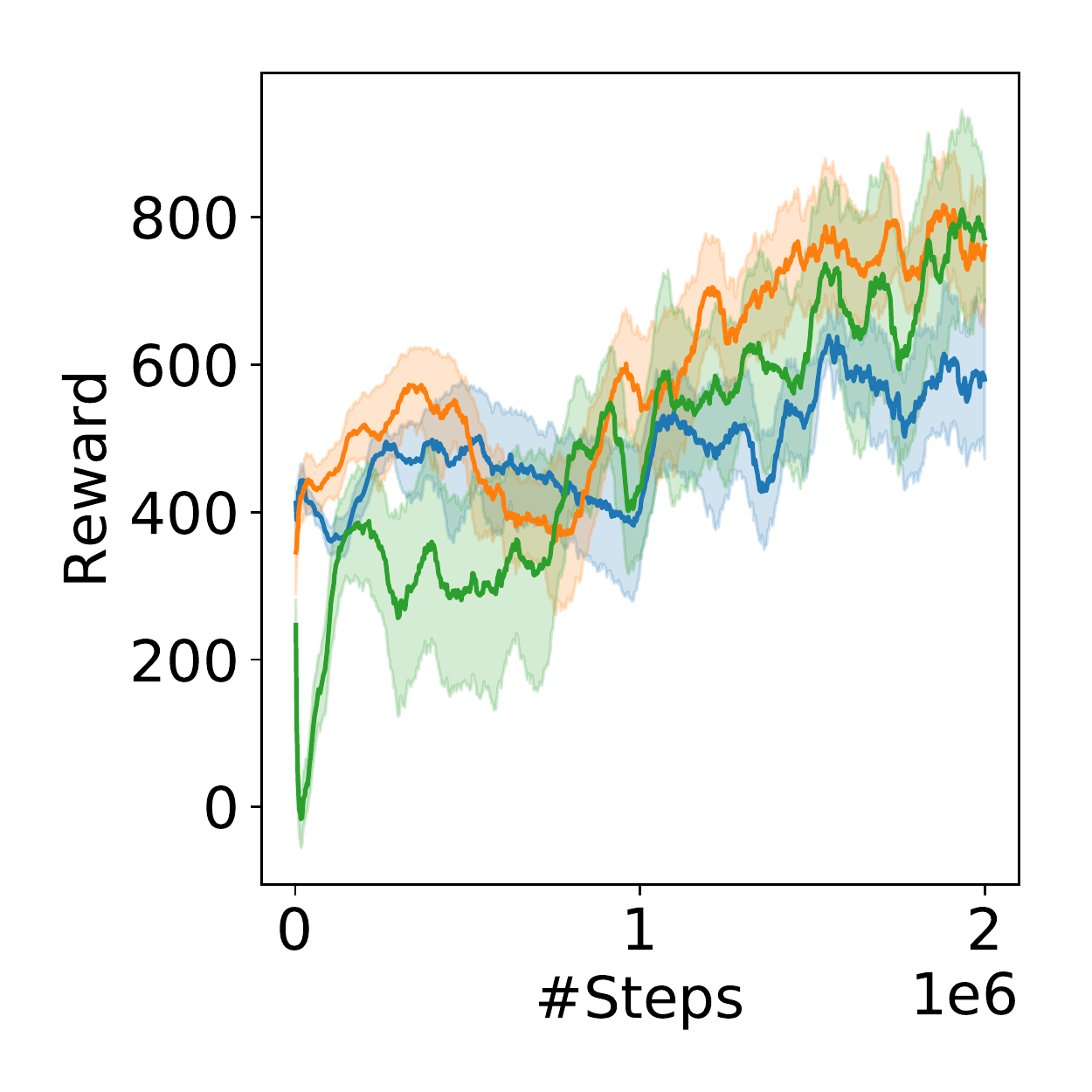}
 \hfill
 \includegraphics[width=0.4\textwidth]{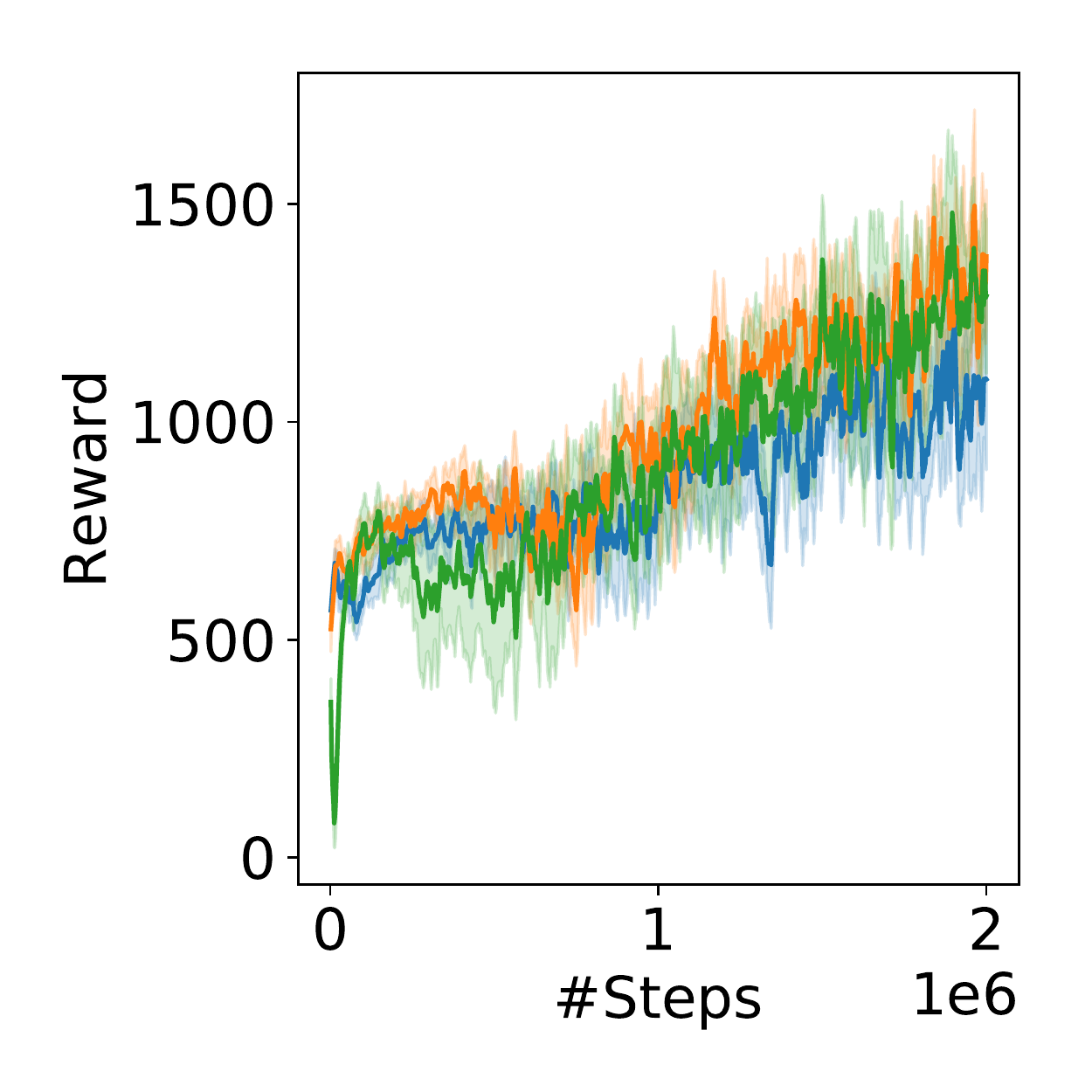} 

\end{subfigure}

\begin{subfigure}[b]{0.5\textwidth}   
	\centering
\includegraphics[width=0.4\textwidth]{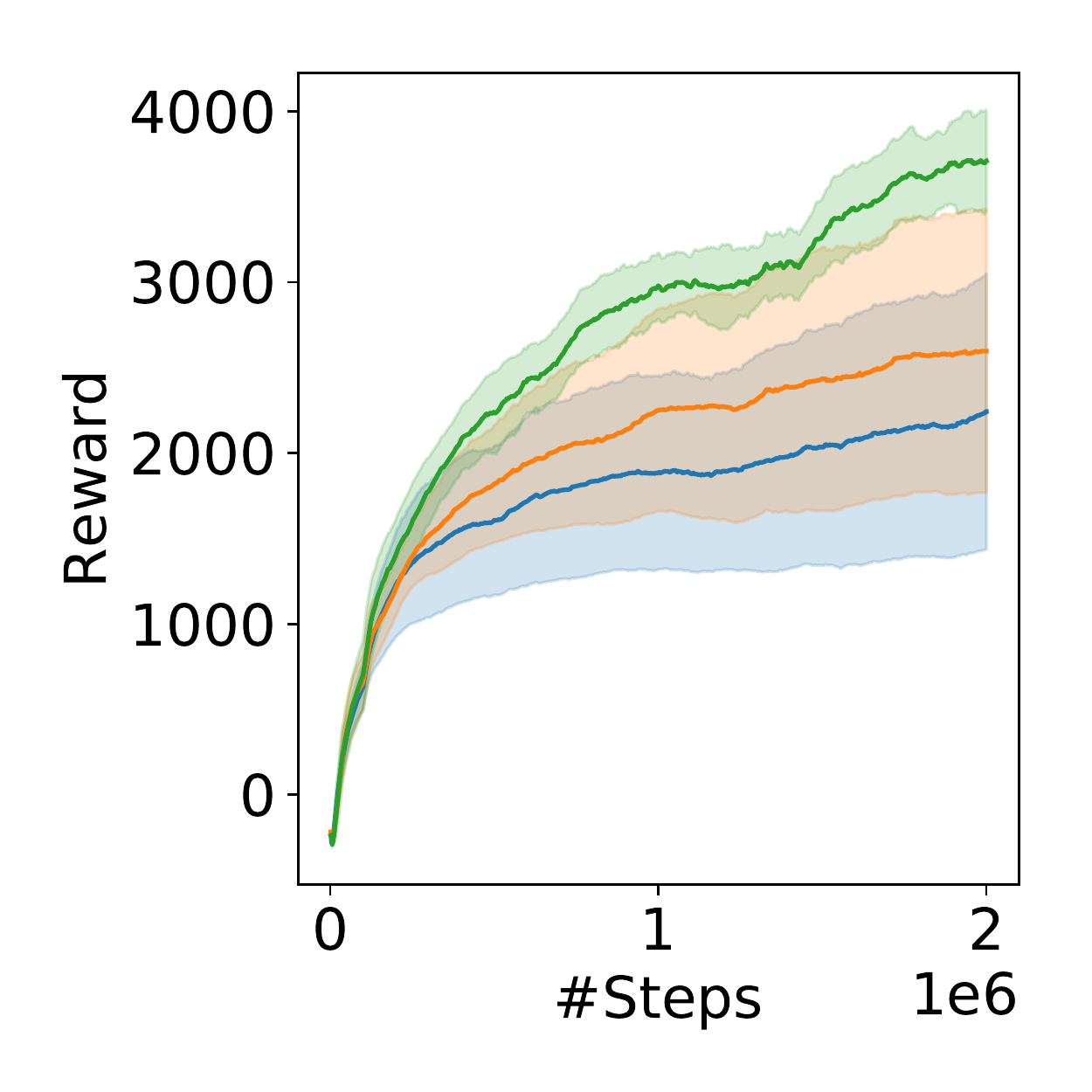}
\hfill
\includegraphics[width=0.4\textwidth]{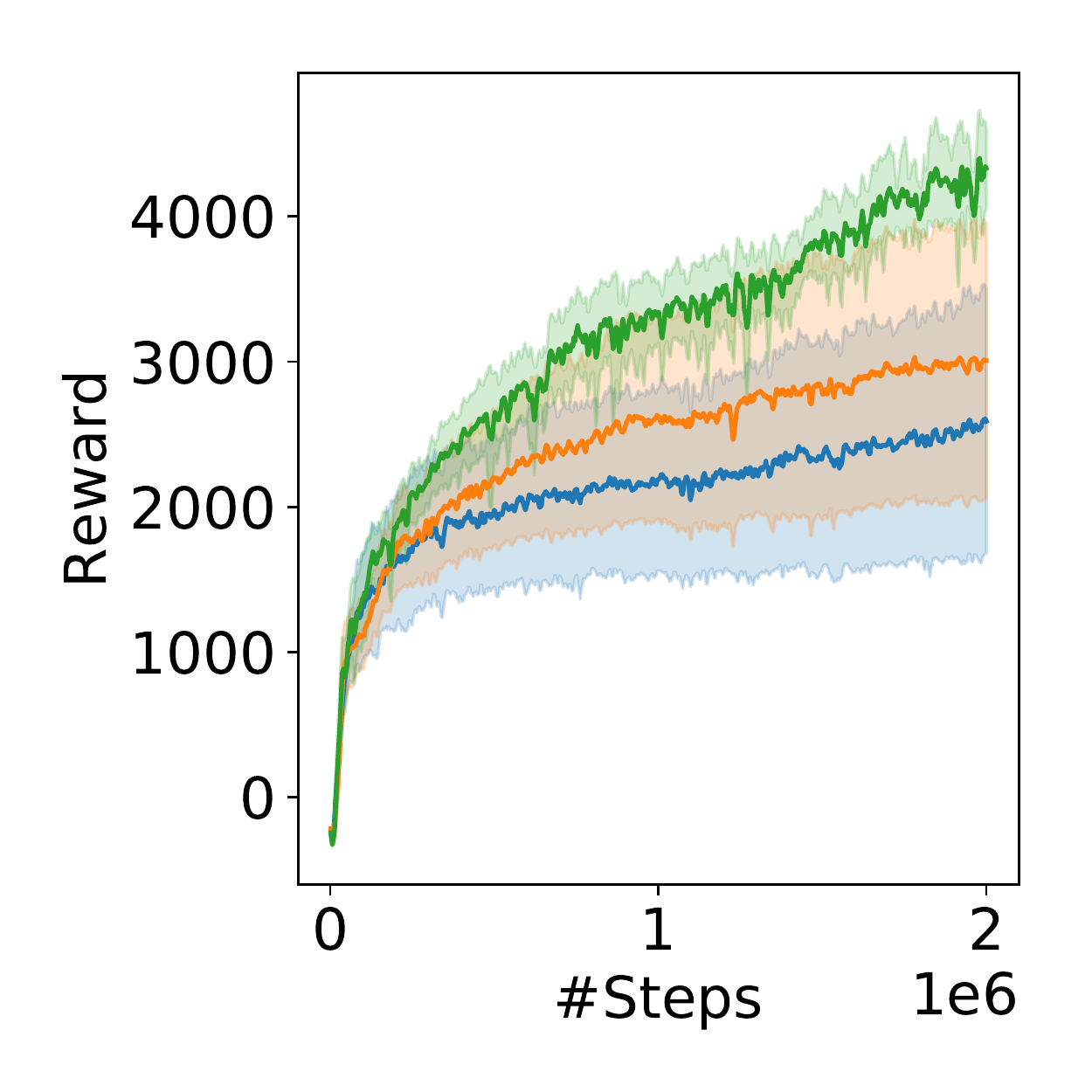} 

\end{subfigure} 

\begin{subfigure}{0.5\textwidth}   
 	\centering
 \includegraphics[width=0.4\textwidth]{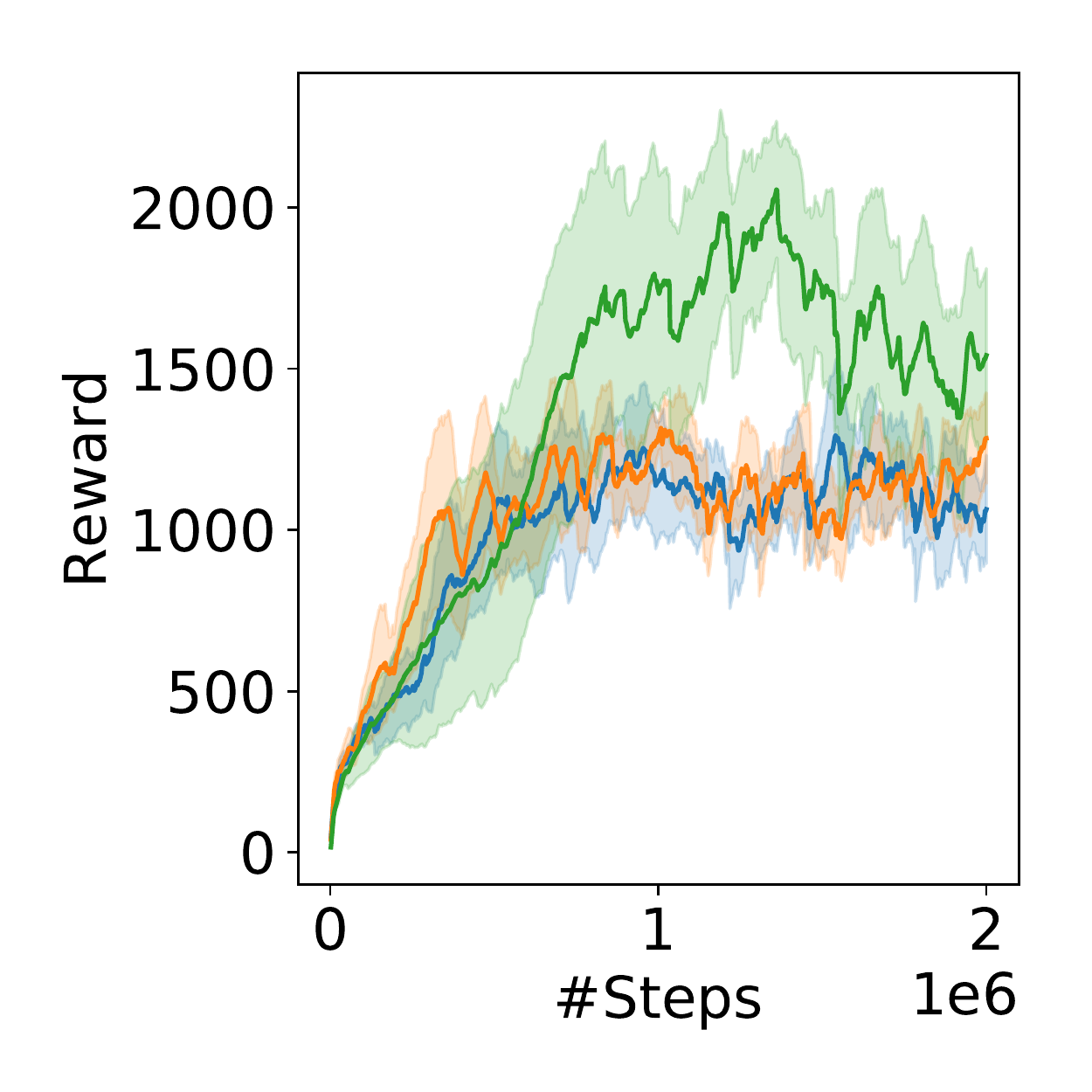}
 \hfill
 \includegraphics[width=0.4\textwidth]{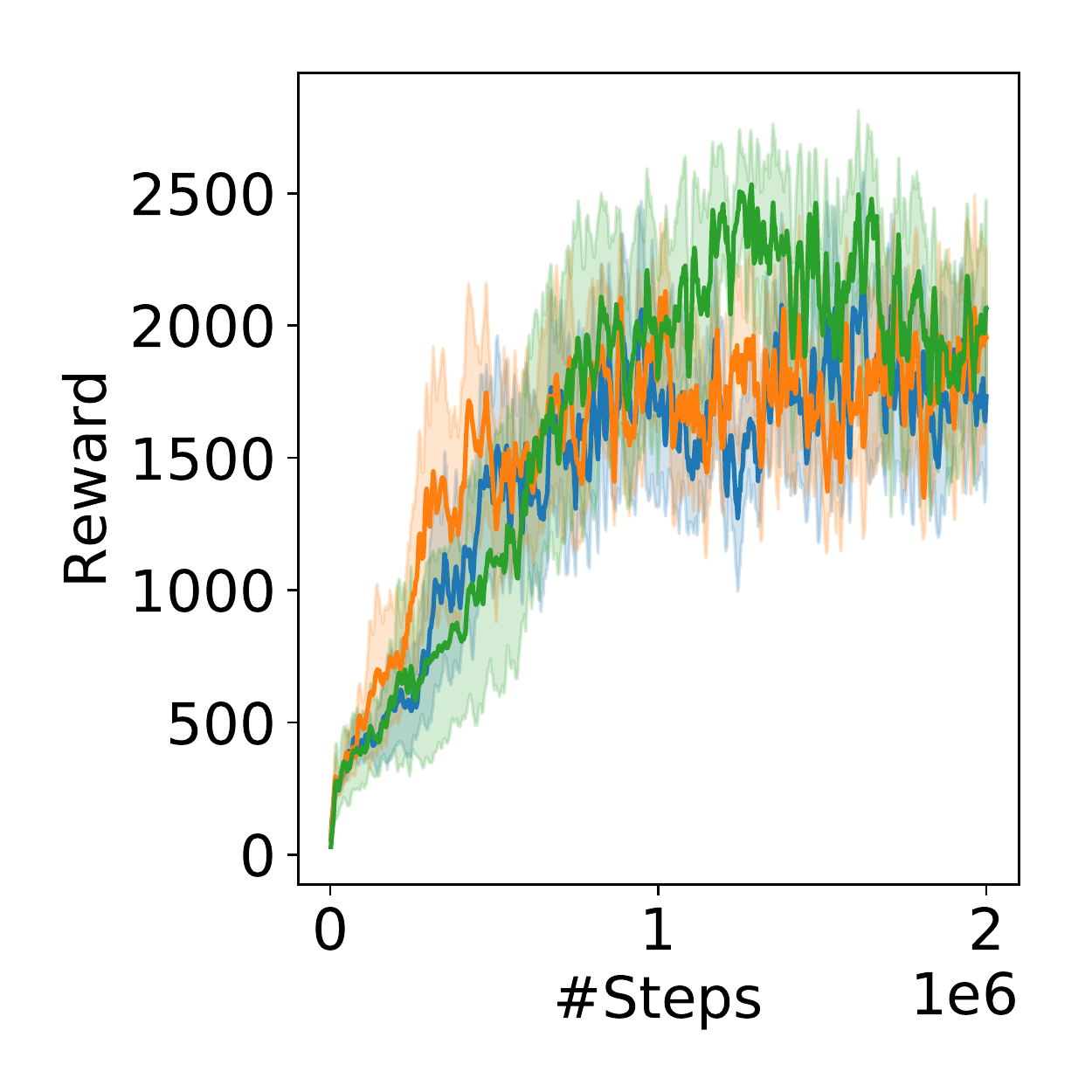} 
 
\end{subfigure} 

\begin{subfigure}{0.5\textwidth}
\centering
\includegraphics[width=0.4\textwidth]{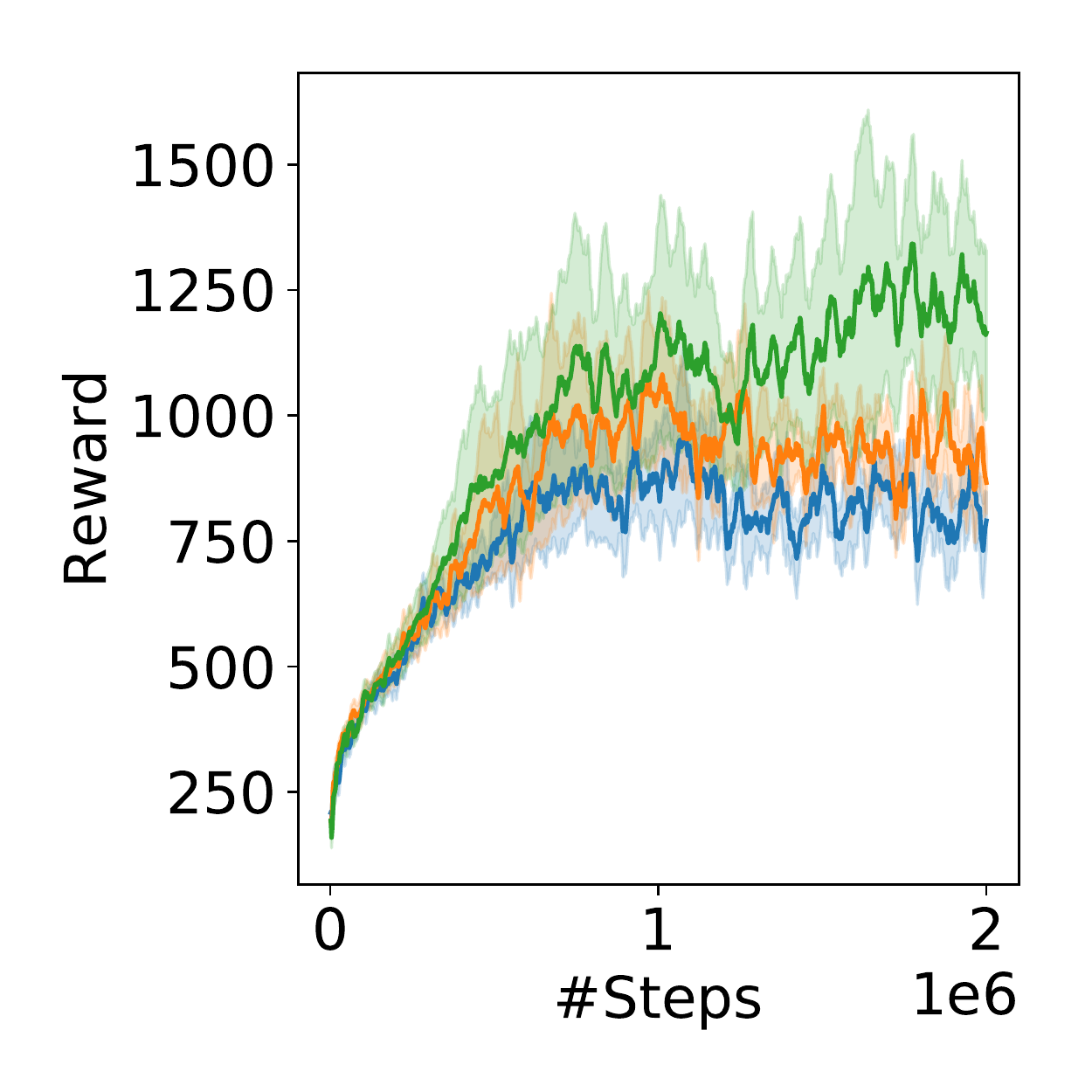}
\hfill
\includegraphics[width=0.4\textwidth]{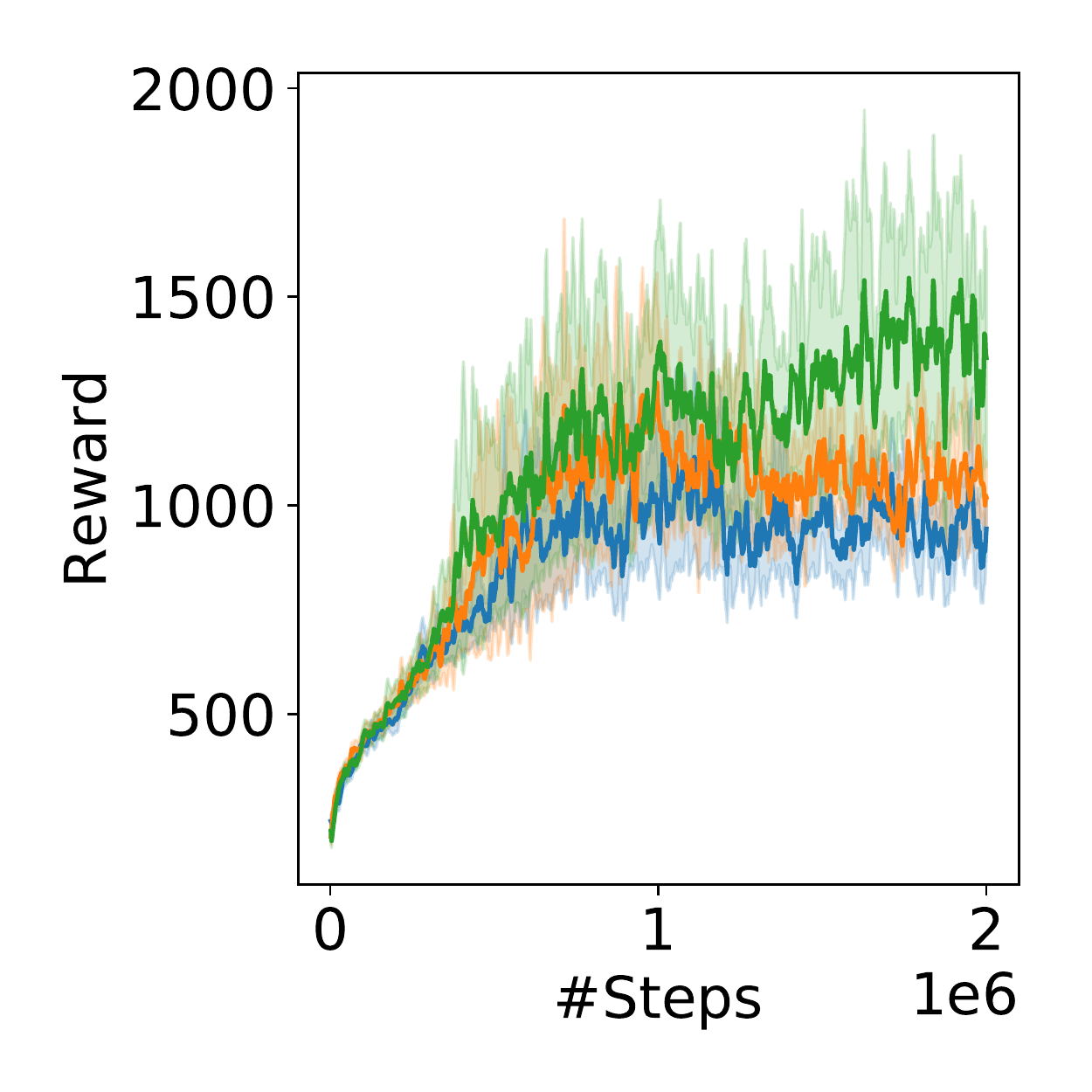} 

\end{subfigure}

\begin{subfigure}[b]{0.5\textwidth}
\centering
\includegraphics[width=0.4\textwidth]{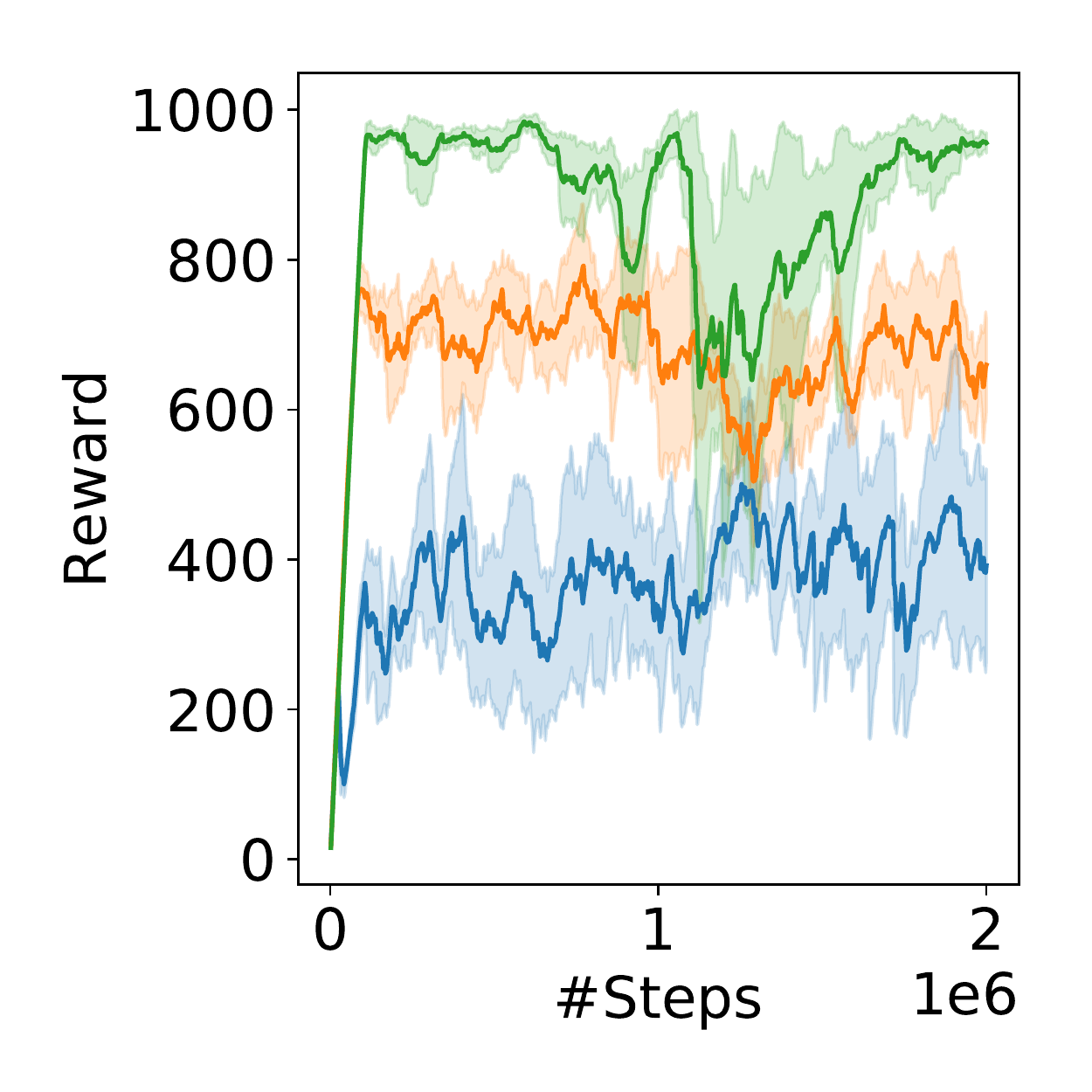}
\hfill
\includegraphics[width=0.4\textwidth]{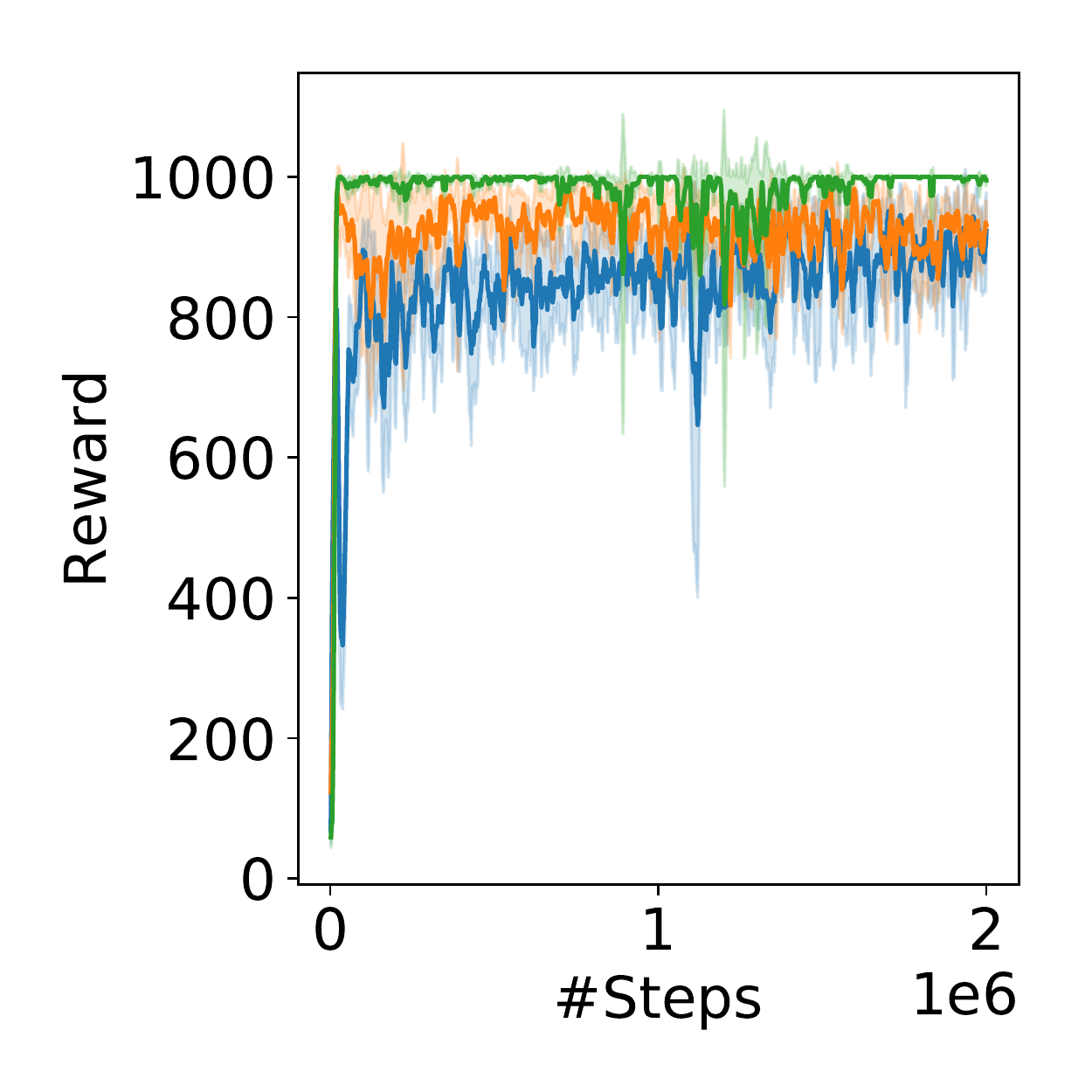} 

\end{subfigure}

\begin{subfigure}{0.5\textwidth}
\centering
\includegraphics[width=0.4\textwidth]{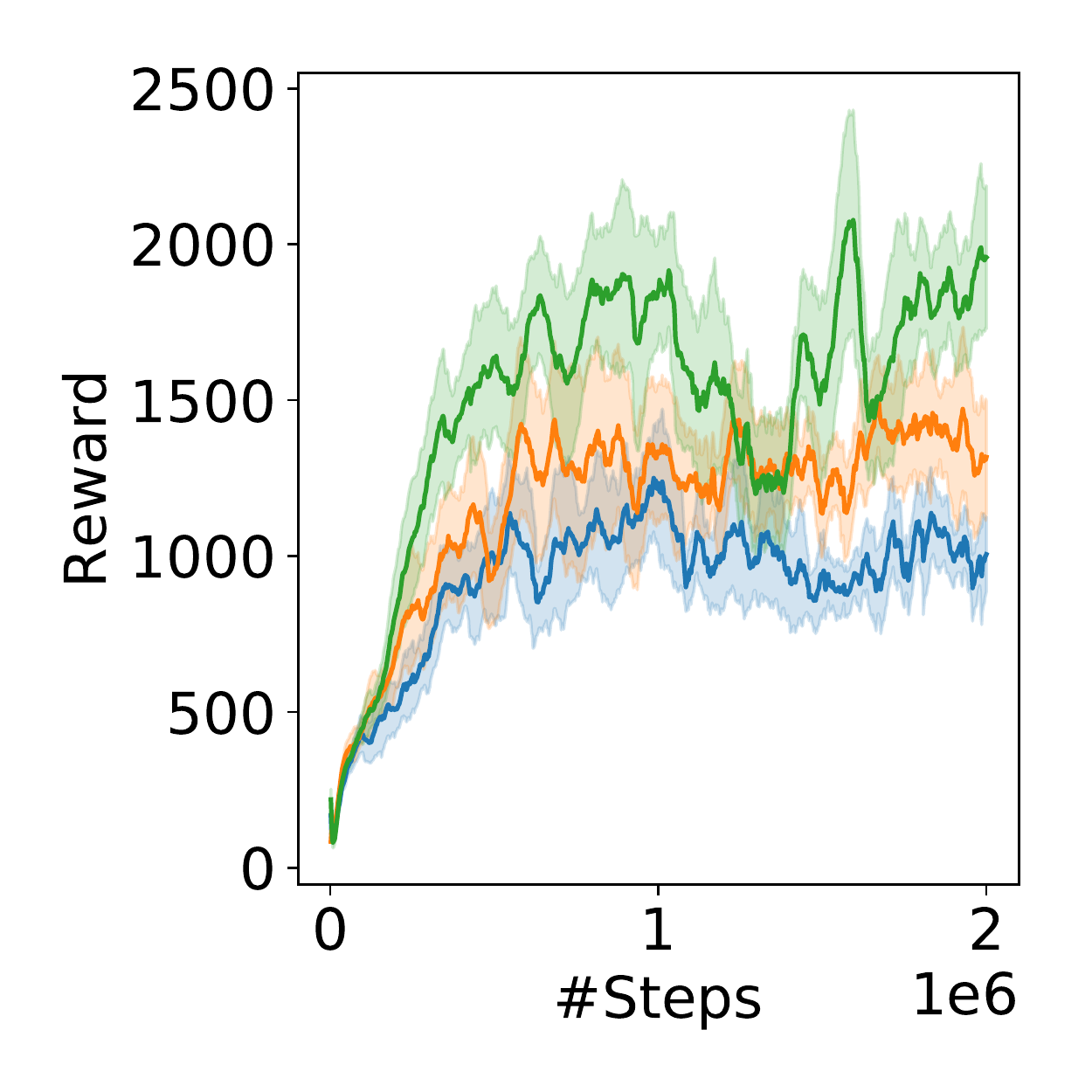}
\hfill
\includegraphics[width=0.4\textwidth]{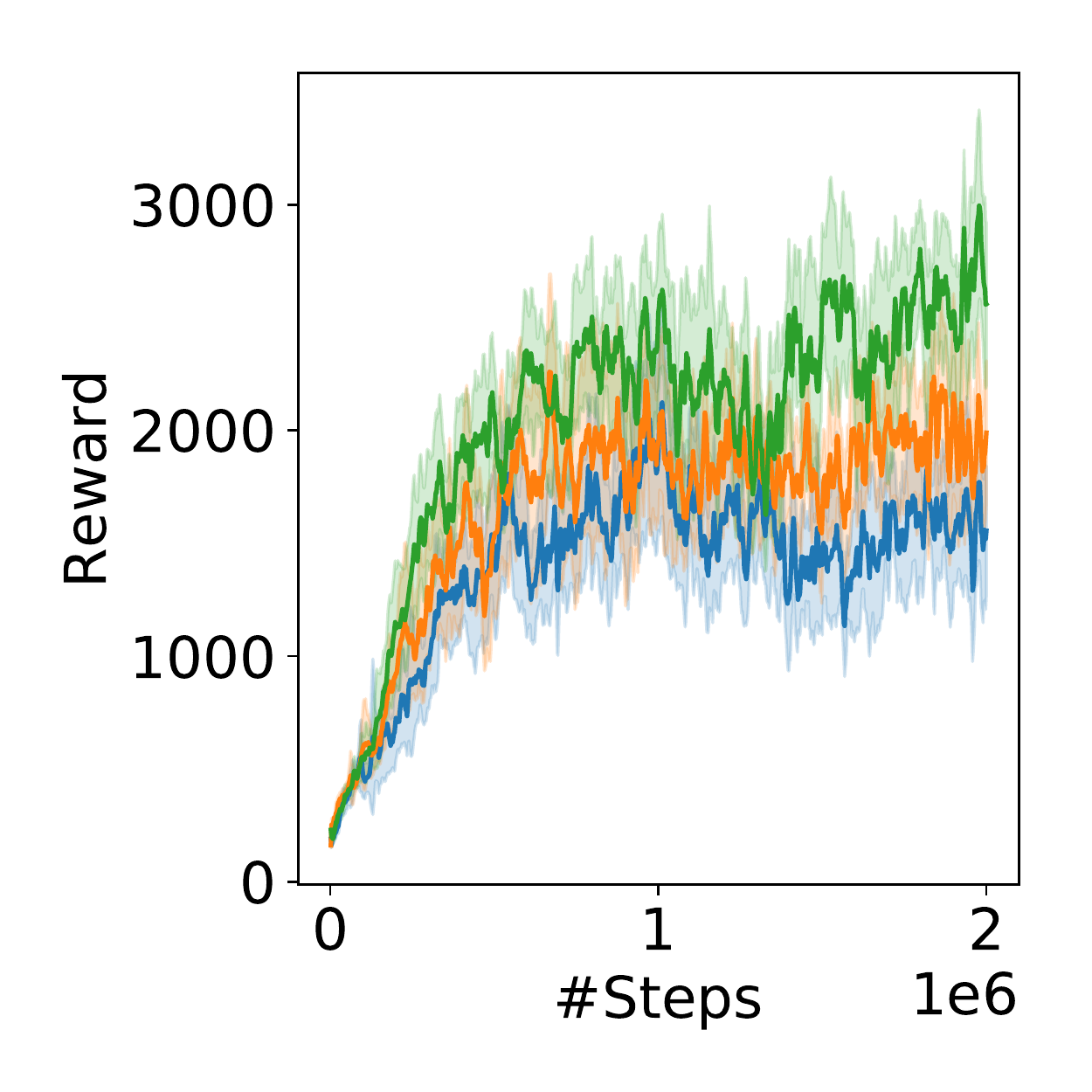} 

\end{subfigure}

\begin{subfigure}{0.6\textwidth}
\centering
\includegraphics[width=0.9\textwidth]{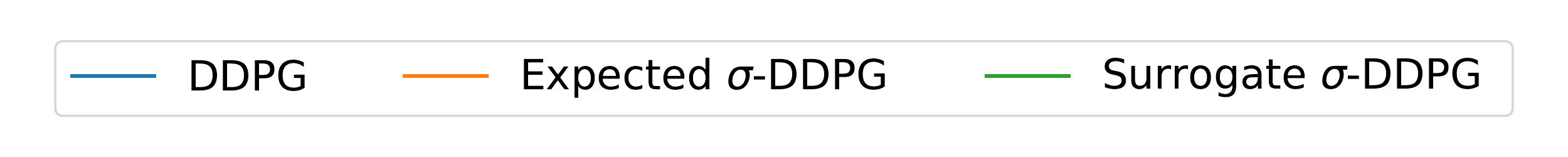}
\end{subfigure}

\caption{Simulation results for the MuJoCo environment: From up to bottom: Ant, HalfCheetah, Hopper, Humanoid, InvertedPendulum and Walker2d. (Left) Training. (Right) Test.}\label{fig: MujocoGraphs}
\end{figure}

\newpage

\section{Deep RL Exploration Conscious algorithms}\label{supp: algorithms psuedocode}
The algorithms in this section are the adjusted DDQN \cite{van2016deep} and DDPG \cite{lillicrap2015continuous}  to solve the $\alpha$-optimal and $\sigma$-optimal policies, respectively. For the surrogate approach the change is merely the gathered data; the action $\aChosen$ is saved and not $\aEnv$. For the expected approach, the expectation is calculated by an explicit averaging Algorithm \ref{alg:Expected alpha DDQN} or by simple sampling technique Algorithm \ref{alg:Expected sigma DDPG}.
\vspace{-1cm}
\begin{figure}[H]
\begin{minipage}{0.48\linewidth}
\begin{algorithm}[H]
\caption{Expected $\alpha$-DDQN}\label{alg:Expected alpha DDQN}
\begin{algorithmic}[0]
\INITIALIZE{ Network parameters $\theta$, $\theta^- \gets \theta$\\ 
			Replay buffer $R$, Target network update time $N^-$}
\FOR{episode$=1,M$}
	\FOR{ $t=1,T$ do}
		\STATE $\aChosen \gets \arg\max_a q(s_t,a|\theta)$
		\STATE $X_t \sim Bernoulli (1-\alpha)$
		\STATE $\aEnv = \begin{cases} \aChosen,\ \mathrm{if}\ X_t=1 \\
					a\sim \pi_0(\cdot\mid s),\ \mathrm{if}\ X_t=0 
				\end{cases}$
		\STATE $r_t, s_{t+1} \gets ACT(\aEnv)$
		\STATE Store $(s_t,\aEnv,r_t,s_{t+1})$ in $R$
		\STATE Sample $N$ tuples $(s_i,\aEnv^i,r_i,s'_i)$ from $R$
		\STATE $a_i \gets \arg\max_a q(s'_i,a|\theta)$
		\STATE $v_i \gets (1-\alpha) q(s'_i,a_i|\theta^-) + \alpha v^{\pi_0}(s'_i|\theta^-)$
		\STATE $y_i \gets r_i + \gamma v_i$
		\STATE Minimize $L=\frac{1}{N}\sum_i{\left(y_i-q(s_i,\aEnv^i|\theta\right)^2}$
		\STATE Update $\theta^-\gets\theta$ every $N^-$ steps
	\ENDFOR
\ENDFOR
\RETURN $\pi\in \arg\max_a q(\cdot,a)$
\end{algorithmic}
\end{algorithm}
\end{minipage} \hfill
\begin{minipage}{0.48\linewidth}
\begin{algorithm}[H]
\caption{Surrogate $\alpha$-DDQN}\label{alg:Surrogate alpha DDQN}
\begin{algorithmic}[0]
\INITIALIZE{ Network parameters $\theta$, $\theta^- \gets \theta$\\ 
			Replay buffer $R$, Target network update time $N^-$}
\FOR{episode$=1,M$}
	\FOR{ $t=1,T$ do}
		\STATE $\aChosen \gets \arg\max_a q_\alpha(s_t,a|\theta)$
		\STATE $X_t \sim Bernoulli (1-\alpha)$
		\STATE $\aEnv = \begin{cases} \aChosen,\ \mathrm{if}\ X_t=1 \\
					a\sim \pi_0(\cdot\mid s),\ \mathrm{if}\ X_t=0 
				\end{cases}$
		\STATE $r_t, s_{t+1} \gets ACT(\aEnv)$
		\STATE Store $(s_t,\aChosen,r_t,s_{t+1})$ in $R$
		\STATE Sample $N$ tuples $(s_i,\aChosen^i,r_i,s'_i)$ from $R$
		\STATE $a_i \gets \arg\max_a q_\alpha(s'_i,a|\theta)$
		\STATE $y_i \gets r_i + \gamma q_\alpha(s'_i,a_i|\theta^-)$
		\STATE Minimize $L=\frac{1}{N}\sum_i{\left(y_i-q_\alpha(s_i,\aChosen^i|\theta\right)^2}$
		\STATE Update $\theta^-\gets\theta$ every $N^-$ steps
	\ENDFOR
\ENDFOR
\RETURN $\pi\in \arg\max_a q_\alpha(\cdot,a)$
\vspace{0.39cm}
\end{algorithmic}
\end{algorithm}
\end{minipage} \\ 
\begin{minipage}{0.48\linewidth}
\begin{algorithm}[H]
\caption{Expected $\sigma$-DDPG}\label{alg:Expected sigma DDPG}
\begin{algorithmic}[0]
\INITIALIZE{Critic and Actor networks $ q(s,a|\theta) $, $\mu (s|\phi) $} \\
			Target networks weights: $\theta^- \gets \theta$ and $\phi^{-} \gets \phi$ \\
			Replay buffer $R$, Target network update time $N^-$
\FOR{episode$=1,M$}
	\STATE Initialize random markovian exploration process $\N$ \\ Receive initial observation state $s_1$
	\FOR{ $t=1,T$ do}
		\STATE $\aEnv \gets \mu(s_t|\phi)+\N_t$
		\STATE $r_t, s_{t+1} \gets ACT(a_t)$
		\STATE Store $(s_t,\aEnv,r_t,s_{t+1},\N_t)$ in $R$
		\STATE Sample $N$ transitions $(s_i,a_i,r_i,s'_i,\N_i)$ from $R$
		\STATE Sample $D_1$ noise terms $n_j$ given $\N_i$
		\STATE $y_i \gets r_i + \gamma \frac{1}{D_1}\sum_j{q(s'_i,\mu(s'_i)+n_j|\phi^-)|	\theta^-)}$
		\STATE Critic Loss: $L=\frac{1}{N}\sum_i{\left(y_i-q(s_i,a_i|\theta\right)^2}$
		\STATE Sample $D_2$ noise terms $n_j$ given $\N_i$
		\STATE Approximate gradient policy gradient: \\  $\phantom{a}$
		 $\nabla_\pi q^\pi(s_i) \approx \frac{1}{D_2} \sum_j{ \nabla_a q(s_i,a|\theta)\big|_{a=\mu(s_i)+n_j}}$
		\STATE Update actor using policy gradient: \\ $\phantom{a}$
		$\nabla_{\phi}V \approx \frac{1}{N} \sum_i {\nabla_\pi q^\pi(s_i) \nabla_{\phi}\mu(s_i|\phi)}$
		\STATE Update target networks every $N^-$ steps

	\ENDFOR
\ENDFOR
\RETURN $\mu(\cdot|\phi)$
\end{algorithmic}
\end{algorithm}
\end{minipage}
\hfill
\begin{minipage}{0.48\linewidth}
\begin{algorithm}[H]
\caption{Surrogate $\sigma$-DDPG}\label{alg:Surrogate sigma DDPG}
\begin{algorithmic}[0]
\INITIALIZE{Critic and Actor networks $ q_\sigma(s,a|\theta) $, $\mu (s|\phi) $} \\
			Target networks weights: $\theta^- \gets \theta$ and $\phi^- \gets \phi$ \\
			Replay buffer $R$, Target network update time $N^-$
\FOR{episode$=1,M$}
	\STATE Initialize random markovian exploration process $\N$ \\ Receive initial observation state $s_1$
	\FOR{ $t=1,T$ do}
		\STATE $\aChosen \gets \mu(s_t|\phi)$
		\STATE $\aEnv \gets\aChosen +\N_t$
		\STATE $r_t, s_{t+1} \gets ACT(\aEnv)$
		\STATE Store $(s_t,\aChosen,r_t,s_{t+1})$ in $R$
		\STATE Sample $N$ transitions $(s_i,a_i,r_i,s'_i)$ from $R$
		\STATE $y_i \gets r_i + \gamma {q_\sigma(s'_i,\mu(s'_i|\phi^{-}))|\theta^-)}$
		\STATE Critic Loss: $L=\frac{1}{N}\sum_i{\left(y_i-q_\sigma(s_i,a_i|\theta\right)^2}$
		\STATE Update actor using policy gradient: \\ 
			$\nabla_{\phi}V = \frac{1}{N} \sum_i {\nabla_a q_\sigma(s_i,a|\theta)\big|_{a=\mu(s_i)} \nabla_{\phi}\mu(s_i|\phi)}$
		\STATE Update target networks every $N^-$ steps
	\ENDFOR
\ENDFOR
\RETURN $\mu(\cdot|\phi)$
\vspace{1.31cm}

\end{algorithmic}
\end{algorithm}
\end{minipage}
\end{figure}

\section{Experimental details}\label{supp: experiments}

In this section we will discuss some technicalities that are related to the experiments done in this paper.

\subsection{Cliff Walking}
We used the T-Cliff-Walking scenario in Figure \ref{fig:CliffWalkingScenario}: The size of the cliff is $(h,w)=(4,12)$. We added small reward of $0.01r_{max}$ (green states) in order to create some small bias between the optimal and the $\alpha$-optimal policy. The maximal reward in this example is $r_{max}=1-\gamma$. We first checked to see that that $alpha=\epsilon=0.1$ performed bad. Then, we raised the $\epsilon$ value. The bottleneck passage between to sides of the maze, creates a scenario where high exploration is needed. We performed 2,000 runs for each of the algorithms. Finally, the test error was evaluated with high precision using the fixed value iteration procedure.

\begin{figure}[t]
\centering
\begin{subfigure}[t]{.3\textwidth}
	\includegraphics[width=\textwidth]{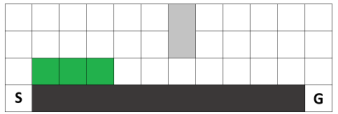} 
\end{subfigure}
	\caption{T-Cliff-Walking: The bright gray area is an impenetrable barrier.
The cliff is colored in dark gray. The green states are with a small reward of $0.01\cdot(1-\gamma)$.}
	\label{fig:CliffWalkingScenario}
\end{figure}

\begin{figure}[t]
 	\centering
 \begin{subfigure}[b]{0.25\textwidth}    
 	\centering
\includegraphics[width=\textwidth]{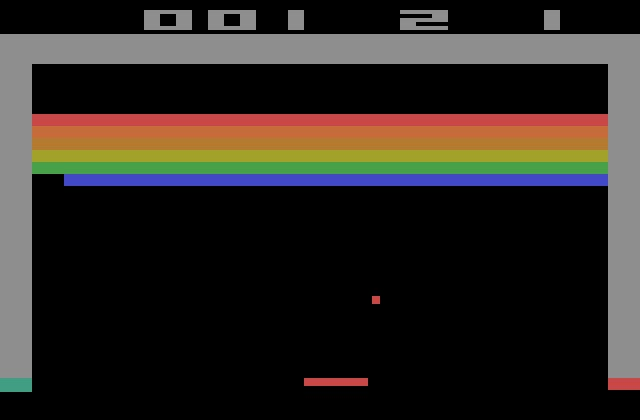} 
\end{subfigure} %
 \begin{subfigure}[b]{0.25\textwidth}    
 	\centering
\includegraphics[width=\textwidth]{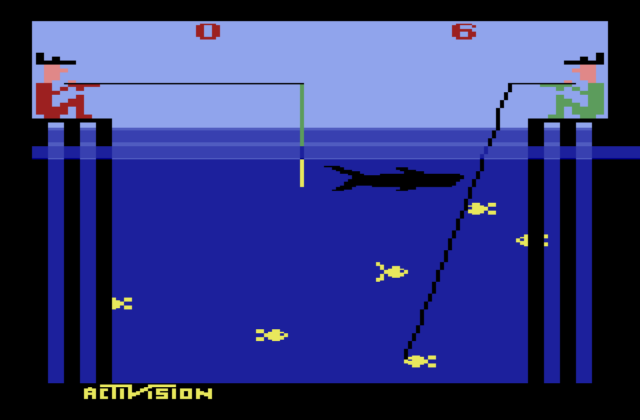} 
\end{subfigure} %
 \begin{subfigure}[b]{.25\textwidth}    
	\centering
\includegraphics[width=\textwidth]{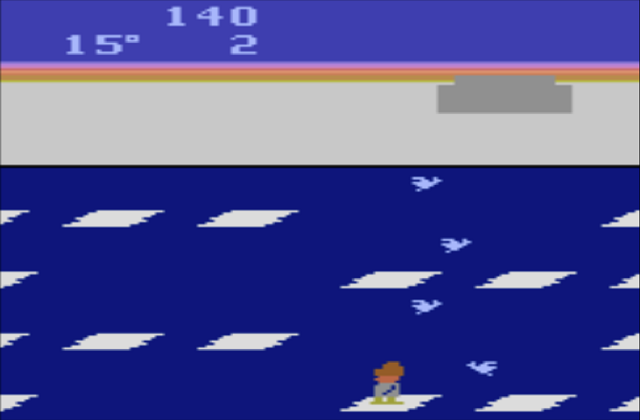} 
\end{subfigure}
\hfill
	\centering
 \begin{subfigure}[b]{.25\textwidth}    
	\centering
\includegraphics[width=\textwidth]{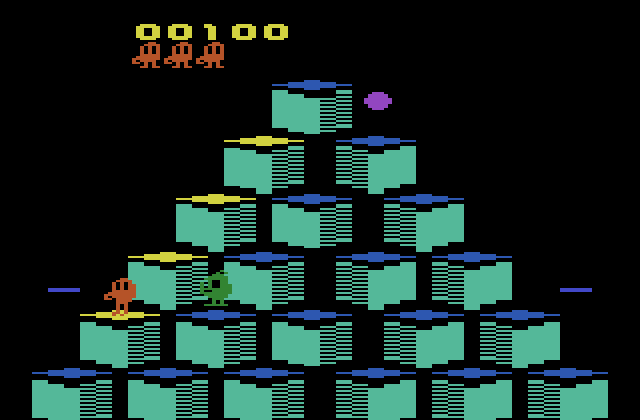} 
\end{subfigure} 
\begin{subfigure}[b]{.25\textwidth}    
	\centering
\includegraphics[width=\textwidth]{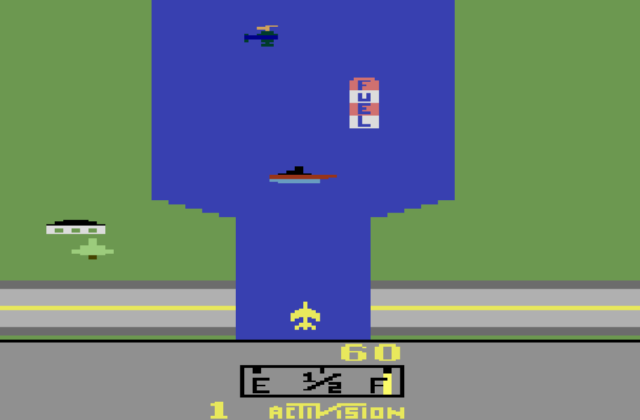} 
\end{subfigure} %
\caption{Atari games. From left to right: Breakout, Fishing Derby, Frostbite, Qbert, Riverraid.}
\label{fig:DeepGames}
\end{figure}

\section{Proof of Lemma \ref{lemma: equivalence}}\label{supp: lemma equivalence}

For any policy $\pi$ the following equalities hold.
\begin{align*}
v^\pi_\alpha &= (I-\gamma P^\pi_\alpha)^{-1}r^{\pi}_\alpha\\
&= (I-\gamma ((1-\alpha)P^\pi+\alpha P^{\pi_0}))^{-1}((1-\alpha) r^{\pi}+\alpha r^{\pi_0})\\
&=(I-\gamma P^{\pi^\alpha(\pi,\pi_0)})^{-1}r^{\pi^\alpha(\pi,\pi_0)}=v^{\pi^\alpha(\pi,\pi_0)}.
\end{align*}

\section{Proof of Proposition \ref{prop: alpha surrogate mdp bellman}}\label{supp: prop contraction}

\begin{proof}
Let $v\in \mathbb{R}^{|\mathcal{S}|}$ and consider the surrogate MDP, $\mathcal{M}_\alpha$. Its fixed policy Bellman operator (see \eqref{eq: T opt}) is given by:
\begin{align}
T^{\pi}_\alpha v  &= r^{\pi}_\alpha+\gamma P^{\pi}_\alpha v \nonumber\\
&= (1-\alpha)(r^\pi+\gamma P^\pi v) +\alpha (r^{\pi_0}+\gamma P^{\pi_0}v) \nonumber\\
&= (1-\alpha)T^\pi v +\alpha T^{\pi_0}v \label{lemma: fixed policy bellman operator}.
\end{align}
The second relation is by plugging $P^{\pi}_\alpha,r^{\pi}_\alpha$ from \eqref{eq: surrogate MDP reward and dynamics}, and rearranging. The fixed point of $T^\pi_\alpha$ is $v^\pi_\alpha$, the value of $\pi$ measured in $\mathcal{M}_\alpha$. Due to Lemma \ref{lemma: equivalence}, $v^\pi_\alpha = v^{\pi^\alpha(\pi,\pi_0)}$.

The optimal Bellman operator of $\mathcal{M}_\alpha$ is (see \eqref{eq: T opt}):
\begin{align}
T_\alpha v &= \max_\pi T^{\pi}_\alpha v \nonumber\\
& =\max_\pi (1-\alpha)T^\pi v +\alpha T^{\pi_0}v \nonumber\\
& =(1-\alpha)\max_\pi T^\pi v +\alpha T^{\pi_0}v = (1-\alpha)T + \alpha T^{\pi_0}, \label{lemma: opt policy bellman operator}
\end{align}
where the second relation holds by \eqref{lemma: fixed policy bellman operator}. The fixed point of $T_\alpha$ is, by construction, $v^*_\alpha$, the optimal value on $\mathcal{M}_\alpha$.
Moreover, $v^*_\alpha$ is the optimal value of a policy on $\mathcal{M}_\alpha$. By Lemma \ref{lemma: equivalence}, the policy that achieves the optimal value on $\mathcal{M}_\alpha$ achieves the $\alpha$-optimal value,  ${\max_{\pi'} v^{\pi^\alpha(\pi',\pi_0)} = v^{\pi^\alpha(\pi_\alpha^*,\pi_0)}} $. Thus, this policy is the $\alpha$-optimal policy, $\pi^*_\alpha$, and ${v^*_\alpha = v^{\pi_\alpha^*} = v^{\pi^\alpha(\pi_\alpha^*,\pi_0)}}$.

 Since $\mathcal{M}_\alpha$ is an MDP, its optimal policy is in the greedy set w.r.t. $v^*_\alpha$ (see \eqref{eq: G greeedy}). Thus,
\begin{align*}
\pi_\alpha^* & \in \{\pi: T_\alpha^\pi v_\alpha^* = T_\alpha v_\alpha^*\}\\
&=\{\pi: (1-\alpha)T^\pi v_\alpha^*+\alpha T^{\pi_0} v_\alpha^* = (1-\alpha)T v_\alpha^*+\alpha T^{\pi_0} v_\alpha^* \}\\
&=\{\pi: T^\pi v_\alpha^* = T v_\alpha^* \} = \G(v_\alpha^*).
\end{align*} 
\end{proof}

\section{Proof of Theorem \ref{proposition: alpha optimal improvement}} \label{supp: proof policy improvement}
For completness we give two useful lemmas that are in use. The first one has several instances in the literature.
\begin{lemma}\label{supp: help lemma difference}
Let $v^{\pi}$ and $v^{\pi'}$ be the correspondsing values of the policies $\pi$ and $\pi'$. Then,
\begin{align}
v^{\pi'}-v^{\pi} = (I-\gamma P^{\pi'})^{-1}(T^{\pi'}v^{\pi}-v^{\pi})
\end{align}
\end{lemma}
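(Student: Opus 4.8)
The statement to prove is Lemma~\ref{supp: help lemma difference}, the standard value-difference identity $v^{\pi'}-v^{\pi} = (I-\gamma P^{\pi'})^{-1}(T^{\pi'}v^{\pi}-v^{\pi})$.

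\textbf{Proof plan.}
The plan is to work purely with the linear-algebraic characterization of values recalled in the Preliminaries. Recall that $v^{\pi'}$ is the unique fixed point of the linear operator $T^{\pi'}$, so $v^{\pi'} = r^{\pi'} + \gamma P^{\pi'} v^{\pi'}$, which rearranges to $(I-\gamma P^{\pi'})v^{\pi'} = r^{\pi'}$, hence $v^{\pi'} = (I-\gamma P^{\pi'})^{-1} r^{\pi'}$ (the inverse exists because $T^{\pi'}$ is a $\gamma$-contraction, equivalently the spectral radius of $\gamma P^{\pi'}$ is less than $1$). The first step is to rewrite the target right-hand side by expanding $T^{\pi'} v^{\pi}$ using its definition $T^{\pi'} v^{\pi} = r^{\pi'} + \gamma P^{\pi'} v^{\pi}$, so that
\[
T^{\pi'}v^{\pi}-v^{\pi} = r^{\pi'} + \gamma P^{\pi'} v^{\pi} - v^{\pi} = r^{\pi'} - (I-\gamma P^{\pi'})v^{\pi}.
\]

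\textbf{Key steps.}
First I would apply $(I-\gamma P^{\pi'})^{-1}$ to the expression just obtained, using linearity:
\[
(I-\gamma P^{\pi'})^{-1}\bigl(T^{\pi'}v^{\pi}-v^{\pi}\bigr) = (I-\gamma P^{\pi'})^{-1} r^{\pi'} - (I-\gamma P^{\pi'})^{-1}(I-\gamma P^{\pi'})v^{\pi} = v^{\pi'} - v^{\pi},
\]
where the first term is recognized as $v^{\pi'}$ by the fixed-point formula above and the second term simplifies to $v^{\pi}$ since $(I-\gamma P^{\pi'})^{-1}(I-\gamma P^{\pi'}) = I$. This is exactly the claimed identity, so the chain of equalities completes the proof.

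\textbf{Main obstacle.}
Honestly there is no substantive obstacle here; the only point requiring a word of justification is the invertibility of $I-\gamma P^{\pi'}$, which follows because $P^{\pi'}$ is a stochastic matrix (row sums $1$, nonnegative entries) so all its eigenvalues lie in the closed unit disk, whence $\gamma P^{\pi'}$ has spectral radius at most $\gamma<1$ and the Neumann series $\sum_{t\ge 0}(\gamma P^{\pi'})^t$ converges to $(I-\gamma P^{\pi'})^{-1}$ — this is already stated in the Preliminaries as $v^{\pi}=\sum_{t=0}^\infty \gamma^t (P^\pi)^t r^\pi=(I-\gamma P^\pi)^{-1}r^\pi$. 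One should also make sure the multiplication order is respected (all objects are matrices/vectors, $P^{\pi'}$ acting on the left), but since every manipulation is a left-multiplication by $(I-\gamma P^{\pi'})^{-1}$ this is automatic.
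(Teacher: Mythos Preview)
Your proof is correct and is essentially the same as the paper's: the paper also uses $v^{\pi'}=(I-\gamma P^{\pi'})^{-1}r^{\pi'}$ and the identity $T^{\pi'}v^{\pi}=r^{\pi'}+\gamma P^{\pi'}v^{\pi}$, merely running the chain of equalities from the left-hand side to the right-hand side instead of the reverse direction you chose.
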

\begin{proof}
\begin{align*}
v^{\pi'}-v^{\pi} &= (I-\gamma P^{\pi'})^{-1}r^{\pi'} - v^{\pi}\\
&=(I-\gamma P^{\pi'})^{-1}(r^{\pi'} +\gamma P^{\pi'} v^{\pi} -v^{\pi}) \\
&= (I-\gamma P^{\pi'})^{-1}(T^{\pi'}v^{\pi}-v^{\pi}).
\end{align*}
\end{proof}

The following Lemma has several instrances in previous literature:
\begin{lemma} \label{supp: lemma strict improvement first relation}
Let $\pi$ be any policy and $\pi_{1-\mathrm{step}}\in\G(v^{\pi})$. Then,
\begin{align*}
v^{\pi}\leq v^{\pi^\alpha(\pi_{1-\mathrm{step}},\pi)},
\end{align*}
where the inequality is strict at least in one-component if $\pi\neq\pi^*$, if $\pi$ is not the optimal policy. 
\end{lemma}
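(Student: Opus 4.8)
The plan is to apply Lemma~\ref{supp: help lemma difference} with $\pi' = \pi^\alpha(\pi_{1\text{-}\mathrm{step}},\pi)$ and write
\[
v^{\pi^\alpha(\pi_{1\text{-}\mathrm{step}},\pi)} - v^{\pi} = (I-\gamma P^{\pi^\alpha(\pi_{1\text{-}\mathrm{step}},\pi)})^{-1}\bigl(T^{\pi^\alpha(\pi_{1\text{-}\mathrm{step}},\pi)}v^{\pi} - v^{\pi}\bigr).
\]
Since $(I-\gamma P^{\pi'})^{-1} = \sum_{t\ge 0}\gamma^t (P^{\pi'})^t$ is a nonnegative operator (entrywise), it suffices to show that the ``advantage'' term $T^{\pi^\alpha(\pi_{1\text{-}\mathrm{step}},\pi)}v^{\pi} - v^{\pi}$ is nonnegative componentwise, and strictly positive in at least one component when $\pi \neq \pi^*$.

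For the nonnegativity, I would use the linearity of the mixture in the fixed-policy Bellman operator — exactly as in \eqref{lemma: fixed policy bellman operator}, $T^{\pi^\alpha(\pi_{1\text{-}\mathrm{step}},\pi)} = (1-\alpha)T^{\pi_{1\text{-}\mathrm{step}}} + \alpha T^{\pi}$. Hence
\[
T^{\pi^\alpha(\pi_{1\text{-}\mathrm{step}},\pi)}v^{\pi} - v^{\pi} = (1-\alpha)\bigl(T^{\pi_{1\text{-}\mathrm{step}}}v^{\pi} - v^{\pi}\bigr) + \alpha\bigl(T^{\pi}v^{\pi} - v^{\pi}\bigr).
\]
The second bracket vanishes because $v^{\pi}$ is the fixed point of $T^{\pi}$. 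The first bracket equals $T v^{\pi} - v^{\pi} = T v^{\pi} - T^{\pi} v^{\pi} \ge 0$ componentwise, since $\pi_{1\text{-}\mathrm{step}}\in\G(v^{\pi})$ means $T^{\pi_{1\text{-}\mathrm{step}}}v^{\pi} = T v^{\pi} = \max_{\pi'} T^{\pi'} v^{\pi} \ge T^{\pi}v^{\pi}$. So the whole advantage term is $\ge 0$, and combined with the nonnegativity of $(I-\gamma P^{\pi^\alpha(\pi_{1\text{-}\mathrm{step}},\pi)})^{-1}$ this yields $v^{\pi}\le v^{\pi^\alpha(\pi_{1\text{-}\mathrm{step}},\pi)}$.

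The main obstacle is the strict-inequality claim. If $\pi\neq\pi^*$ then $v^{\pi}\neq v^*$ (since $v^*$ is the unique fixed point of $T$ and any policy attaining $v^*$ is optimal), so $Tv^{\pi}\neq v^{\pi}$; equivalently there is a state $s_0$ with $Tv^{\pi}(s_0) > v^{\pi}(s_0)$, i.e. $(1-\alpha)(Tv^{\pi}-v^{\pi})(s_0)>0$ for $\alpha<1$. Since the advantage vector is nonnegative everywhere and strictly positive at $s_0$, applying $(I-\gamma P^{\pi^\alpha(\pi_{1\text{-}\mathrm{step}},\pi)})^{-1} = \sum_{t\ge0}\gamma^t (P^{\pi^\alpha(\pi_{1\text{-}\mathrm{step}},\pi)})^t$ — whose $t=0$ term is the identity — gives $v^{\pi^\alpha(\pi_{1\text{-}\mathrm{step}},\pi)}(s_0) - v^{\pi}(s_0) \ge (1-\alpha)(Tv^{\pi}-v^{\pi})(s_0) > 0$, establishing strictness in at least one component. (I should note the edge case $\alpha=1$, where the mixture is just $\pi$ and the statement is the trivial equality; presumably the intended regime is $\alpha<1$, consistent with its use in Proposition~\ref{proposition: alpha optimal improvement}.)
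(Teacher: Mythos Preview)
Your proof is correct and follows essentially the same approach as the paper: apply Lemma~\ref{supp: help lemma difference}, use the linearity of the mixture Bellman operator to reduce the advantage term to $(1-\alpha)(Tv^{\pi}-v^{\pi})$, and conclude via nonnegativity of $(I-\gamma P^{\pi'})^{-1}$ together with $Tv^{\pi}\ge v^{\pi}$ (strict somewhere when $\pi\neq\pi^*$). Your explicit handling of the strict-inequality step via the identity term of the Neumann series, and your remark on the degenerate case $\alpha=1$, are slightly more detailed than the paper's version but do not differ in substance.
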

\begin{proof}
\begin{align} 
v^{\pi^\alpha(\pi_{1-\mathrm{step}},\pi)}-v^{\pi} \nonumber &= (I-\gamma P^{\pi^\alpha(\pi_{1-\mathrm{step}},\pi)})^{-1}(T^{\pi^\alpha(\pi_{1-\mathrm{step}},\pi)}v^{\pi}-v^{\pi}), 
\end{align}\label{eq: supp help lemma 2}
where the first relation holds due to Lemma \ref{supp: help lemma difference}. See that,
\begin{align*}
T^{\pi^\alpha(\pi_{1-\mathrm{step}},\pi)}v^{\pi}-v^{\pi} &
=(1-\alpha) T^{\pi_{1-\mathrm{step}}}v^{\pi} + \alpha T{^\pi} v^{\pi} - v^{\pi}\\
& =(1-\alpha) T^{\pi_{1-\mathrm{step}}}v^{\pi} + \alpha v^{\pi} - v^{\pi}\\
& =(1-\alpha) \left( T^{\pi_{1-\mathrm{step}}}v^{\pi} - v^{\pi}\right) = (1-\alpha) \left( T v^{\pi} - v^{\pi}\right)
\end{align*}
Plugging it into \eqref{eq: supp help lemma 2} yields,
\begin{align*}
&v^{\pi^\alpha(\pi_{1-\mathrm{step}},\pi)}-v^{\pi}=(1-\alpha)(I-\gamma P^{\pi^\alpha(\pi_{1-\mathrm{step}},\pi)})^{-1}( T v^{\pi} - v^{\pi}).
\end{align*}

We have that $P^{\pi^\alpha(\pi_{1-\mathrm{step}},\pi)})^{-1}\geq 0$ since it is a $\gamma$-discounted weighted sum of stochastic matrices. Furthermore,
\begin{align*}
v^{\pi} = T^{\pi}v^{\pi} \leq T v^{\pi},
\end{align*}
where the last inequality is strict at least in one component if $v^{\pi}\neq v^*$, i.e, if $\pi\neq \pi^*$.
\end{proof}

We now prove the result. The first relation holds almost by construction. We have that,
\begin{align}\label{eq: improvement strict or equal}
v^{\pi^\alpha(\pi_\alpha^*,\pi_0)} = \max_{\pi'} v^{\pi^\alpha(\pi',\pi_0)}\geq v^{\pi^\alpha(\pi_0,\pi_0)} = v^{\pi_0}
\end{align}
where the first relation is due to the definition of the $\alpha$-optimal value \eqref{eq:eps_optimization}, the second relation holds by definition and the third relation holds since 
\begin{align*}
\pi_\alpha(\pi_0,\pi_0)=(1-\epsilon)\pi_0+\epsilon \pi_0 = \pi_0.
\end{align*}
As long as $\pi_0\neq \pi^*$, the policy $\pi_{1-\mathrm{step}}\in \G(v^{\pi_0})$ acheives strict improvement in \eqref{eq: improvement strict or equal}. Meaning,
\begin{align*}
v^{\pi^\alpha(\pi_{1-\mathrm{step}},\pi_0)} \geq v^{\pi_0}.
\end{align*}
This means that the improvement in \eqref{eq: improvement strict or equal} is strict as long as $\pi_0\neq \pi^*$.If $\pi_0$ is not optimal we have that
\begin{align*}
 v^{\pi_0} \leq v^{\pi^\alpha(\pi_{1-\mathrm{step}},\pi_0)} \leq v^{\pi^\alpha(\pi_\alpha^*,\pi_0)}.
\end{align*}
The first relation is strict due to Lemma \ref{supp: lemma strict improvement first relation}, and the second relation holds by the definition of the $\alpha$-optimal policy.

We now prove the second relation of the lemma. Let $\beta \in [0,\alpha]$. Then,

\begin{align}
v^{\pi^\beta(\pi_\alpha^*,\pi_0)}- v^*_\alpha \nonumber=
 (I-\gamma P^{\pi^\beta(\pi_\alpha^*,\pi_0)})^{-1}(T^{\pi^\beta(\pi_\alpha^*,\pi_0)}v^*_\alpha-v^*_\alpha).
\end{align} \label{eq: supp improvement second rel}
We have that,
\begin{align*}
T^{\pi^\beta(\pi_\alpha^*,\pi_0)}v^*_\alpha-v^*_\alpha&=T^{\pi^\beta(\pi_\alpha^*,\pi_0)}v^*_\alpha-T_\alpha v^*_\alpha\\
&=(1-\beta)T^{\pi_\alpha^*}v^*_\alpha +\beta T^{\pi_0}v^*_\alpha - (1-\alpha)T v^*_\alpha - \alpha T^{\pi_0} v^*_\alpha\\
&=(\alpha-\beta) \left(Tv^*_\alpha - T^{\pi_0}v^*_\alpha \right),
\end{align*}
where in the last relation we used $T^{\pi_\alpha^*}v^*_\alpha = T v^*_\alpha$ (see Proposition \ref{prop: alpha surrogate mdp bellman}). Plugging into \eqref{eq: supp improvement second rel} yields,
\begin{align*}
v^{\pi^\beta(\pi_\alpha^*,\pi_0)}- v^*_\alpha
&=(\alpha-\beta) (I-\gamma P^{\pi^\beta(\pi_\alpha^*,\pi_0)})^{-1}\left(Tv^*_\alpha - T^{\pi_0}v^*_\alpha \right).
\end{align*}
We have that $(I-\gamma P^{\pi^\beta(\pi_\alpha^*,\pi_0)})^{-1}\geq 0$ since it is a $\gamma$-discounted sum of stochastic matrices, and $T v^*_\alpha \geq T^{\pi_0}v^*_\alpha$ with equality if and only if $\pi_0$ is optimal; if and only if $\pi_0$ is optimal $v^*_\alpha=v^*$ due to the first part of this proof.

\subsection{Counter example for monotonous improvement for the $\alpha$-optimal criterion}
\begin{figure}[t]
	\centering
	\resizebox{2.8in}{!}{
		\begin{tikzpicture}[->,>=stealth',shorten >=1pt,auto,node distance=2.8cm,
		semithick, state/.style={circle, draw, minimum size=1.1cm}]
		\tikzstyle{every state}=[thick]
		]
		
		\node[state] (S0) {\large $s_0$};
		\node[state] (S1) [below right of=S0] {\large $s_1$};
		\node[state] (S2) [below left of=S0] {\large $s_2$};
	
		\path
		(S0) edge  [bend left]   node[pos=0.1,above ]{ }         node [above] {\large $a_1,0$} (S1)
			edge  [bend right]   node[pos=0.1,below ]{ }         node [above] {\large $a_2,0$} (S2)
		(S1) edge  [loop right] node[pos=1,right]{} node {\large $a_1,0.8$} (S1)
	    (S2) edge  [loop left]   node[pos=0.1,above]{}        node {\large $a_2,0$} (S2)
	    (S2) edge  [loop right]   node[pos=0.1,above]{}        node {\large $a_1,1$} (S2);
		
		\end{tikzpicture}
	}
	\caption{Counter exmple for an MDP with no monotonous improvement for the $\alpha$-optimal criterion \ref{proposition: alpha optimal improvement}.}
	\label{fig: No Monotonous Improvement}
\end{figure}
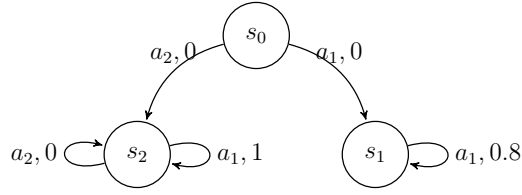

In this section, we give a counter example that proves that the improvement in Proposition~\ref{proposition: alpha optimal improvement} is not monotonous w.r.t $\beta$.
Let the MDP given in Figure~\ref{fig: No Monotonous Improvement} be a $\gamma$-discounted MDP for some $\gamma\in(0,1)$. Let $\pi_0$ be a deterministic policy which always chooses action $a_2$. For $\alpha=0.25$, It is easy to verify that $v^*_\alpha(s_1)=0.8$ and $v^*_\alpha(s_2)=(1-\alpha)=0.75$. Now, $q^*_\alpha(s_0,a_1)=\gamma (1-\alpha)v^*_\alpha(s_1)+\alpha v^*_\alpha(s_2)= 0.7875\gamma$, and 
$q^*_\alpha(s_0,a_2)=0.75\gamma$. Thus, the $\alpha$-optimal policy on $s_0$ is to choose $a_1$, and $v^*_\alpha(s_0)=0.7875\gamma$.

Now, we consider acting according to the mixture policy $\pi^\beta(\pi^*_\alpha,\pi_0)$ for some $\beta<\alpha$. For the greedy policy, i.e. $\beta=0$, we get that $v^{\pi^0(\pi^*_\alpha,\pi_0)}=v^{\pi^*_\alpha}=0.8\gamma$. For $\beta=0.1$, we get that $v^{\pi^{0.1}(\pi^*_\alpha,\pi_0)}=\gamma(0.9\cdot0.8 + 0.1\cdot(0.9\cdot1) )=0.81\gamma$. 
To conclude, as the lemma~\ref{proposition: alpha optimal improvement} suggests, we get improvement for both inspected $\beta$, i.e. $v^*_\alpha < v^{\pi^*_\alpha}$ and $v^*_\alpha < v^{\pi^{0.1}(\pi^*_\alpha,\pi_0)}$. However, the improvement does not increase monotonically as we decrease $\beta$, as $v^{\pi^*_\alpha} =0.8\gamma < 0.81\gamma = v^{\pi^{0.1}(\pi^*_\alpha,\pi_0)}$.


\section{Generalization of \cite{bertsekas1995neuro}[Proposition 6.1] for any policy class}\label{supp: generalized sensitivity bound}

In this section, we prove a generalization of \cite{bertsekas1995neuro}[Proposition 6.1] for any class of policies.
\begin{proposition}\label{prop: generalized sensitivity bound}
Let $\sigma$ a set of fixed parameters of some distribution class. Assume $\hat{v}_\sigma^*$ is an approximate $\sigma$-optimal value s.t. $\norm{v_\sigma^*-\hat{v}_\sigma^*}=\delta$ for some $\delta>0$. Then,
\begin{align*}
 \left\| v_{\sigma }^{*}-{{v}^{\hat{\pi }_{\sigma }^{*}}} \right\|\le \frac{\gamma \delta {{\left\| \pi _{\sigma }^{*}-\hat{\pi }_{\sigma }^{*} \right\|}_{TV}}}{1-\gamma }.
 \end{align*}
\begin{proof}
\begin{align*}
  & v_{\sigma }^{*}-v_{\sigma }^{\hat{\pi }_{\sigma }^{*}}={{T}_{\sigma }}v_{\sigma }^{*}-{{T}^{\hat{\pi }_{\sigma }^{*}}}{{v}^{\hat{\pi }_{\sigma }^{*}}}={{T}_{\sigma }}v_{\sigma }^{*}-{{T}_{\sigma }}\hat{v}_{\sigma }^{*}+{{T}_{\sigma }}\hat{v}_{\sigma }^{*}-{{T}^{\hat{\pi }_{\sigma }^{*}}}{{v}^{\hat{\pi }_{\sigma }^{*}}} \\ 
 & ={{T}_{\sigma }}v_{\sigma }^{*}-{{T}_{\sigma }}\hat{v}_{\sigma }^{*}+{{T}^{\hat{\pi }_{\sigma }^{*}}}v_{\sigma }^{*}-{{T}^{\hat{\pi }_{\sigma }^{*}}}v_{\sigma }^{*}+{{T}^{\hat{\pi }_{\sigma }^{*}}}\hat{v}_{\sigma }^{*}-{{T}^{\hat{\pi }_{\sigma }^{*}}}\hat{v}_{\sigma }^{*}+{{T}_{\sigma }}\hat{v}_{\sigma }^{*}-{{T}^{\hat{\pi }_{\sigma }^{*}}}{{v}^{\hat{\pi }_{\sigma }^{*}}} \\ 
 & =\left( {{T}_{\sigma }}v_{\sigma }^{*}-{{T}_{\sigma }}\hat{v}_{\sigma }^{*} \right)+\left( {{T}^{\hat{\pi }_{\sigma }^{*}}}\hat{v}_{\sigma }^{*}-{{T}^{\hat{\pi }_{\sigma }^{*}}}v_{\sigma }^{*} \right)+\left( {{T}^{\hat{\pi }_{\sigma }^{*}}}v_{\sigma }^{*}-{{T}^{\hat{\pi }_{\sigma }^{*}}}\hat{v}_{\sigma }^{*}+{{T}_{\sigma }}\hat{v}_{\sigma }^{*}-{{T}^{\hat{\pi }_{\sigma }^{*}}}{{v}^{\hat{\pi }_{\sigma }^{*}}} \right) \\ 
 & \overset{(a)}{\le} \left( {{T}^{\pi _{\sigma }^{*}}}v_{\sigma }^{*}-{{T}^{\pi _{\sigma }^{*}}}\hat{v}_{\sigma }^{*} \right)+\left( {{T}^{\hat{\pi }_{\sigma }^{*}}}\hat{v}_{\sigma }^{*}-{{T}^{\hat{\pi }_{\sigma }^{*}}}v_{\sigma }^{*} \right)+\left( {{T}^{\hat{\pi }_{\sigma }^{*}}}v_{\sigma }^{*}-{{T}^{\hat{\pi }_{\sigma }^{*}}}\hat{v}_{\sigma }^{*}+{{T}_{\sigma }}\hat{v}_{\sigma }^{*}-{{T}^{\hat{\pi }_{\sigma }^{*}}}{{v}^{\hat{\pi }_{\sigma }^{*}}} \right) \\ 
 & \overset{(b)}{=}\gamma {{P}^{\pi _{\sigma }^{*}}}\left( v_{\sigma }^{*}-\hat{v}_{\sigma }^{*} \right)-\gamma {{P}^{\hat{\pi }_{\sigma }^{*}}}\left( v_{\sigma }^{*}-\hat{v}_{\sigma }^{*} \right)+\left( {{T}^{\hat{\pi }_{\sigma }^{*}}}v_{\sigma }^{*}-{{T}_{\sigma }}\hat{v}_{\sigma }^{*}+{{T}_{\sigma }}\hat{v}_{\sigma }^{*}-{{T}^{\hat{\pi }_{\sigma }^{*}}}{{v}^{\hat{\pi }_{\sigma }^{*}}} \right) \\ 
 & =\gamma \left( {{P}^{\pi _{\sigma }^{*}}}-{{P}^{\hat{\pi }_{\sigma }^{*}}} \right)\left( v_{\sigma }^{*}-\hat{v}_{\sigma }^{*} \right)+\left( {{T}^{\hat{\pi }_{\sigma }^{*}}}v_{\sigma }^{*}-{{T}^{\hat{\pi }_{\sigma }^{*}}}{{v}^{\hat{\pi }_{\sigma }^{*}}} \right)
\end{align*}
Where (a) is due to the fact that for any $v$ and $\pi$, $T_\sigma^\pi \leq T_\sigma v$, and (b) is due to the definition of the $\sigma$-greedy operator.

Taking the max-norm,
\begin{align*}
  & \left| v_{\sigma }^{*}\left( s \right)-{{v}^{\hat{\pi }_{\sigma }^{*}}}\left( s \right) \right|\le \gamma \left| \left( \left( {{P}^{\pi _{\sigma }^{*}}}-{{P}^{\hat{\pi }_{\sigma }^{*}}} \right)\left( v_{\sigma }^{*}-\hat{v}_{\sigma }^{*} \right) \right)\left( s \right) \right|+\left| \left( {{T}^{\hat{\pi }_{\sigma }^{*}}}v_{\sigma }^{*}-{{T}^{\hat{\pi }_{\sigma }^{*}}}{{v}^{\hat{\pi }_{\sigma }^{*}}} \right)\left( s \right) \right| \\ 
 & \le \gamma \left| \sum\limits_{s',a}^{{}}{p\left( s|s',a \right)}\left( \pi _{\sigma }^{*}\left( a|s' \right)-\hat{\pi }_{\sigma }^{*}\left( a|s' \right) \right)\left( v_{\sigma }^{*}\left( s' \right)-\hat{v}_{\sigma }^{*}\left( s' \right) \right) \right|+\gamma \left\| v_{\sigma }^{*}-{{v}^{\hat{\pi }_{\sigma }^{*}}} \right\|= \\ 
 & \le \gamma {{\max }_{s'}}\left| \sum\limits_{a}^{{}}{\left( \pi _{\sigma }^{*}\left( a|s' \right)-\hat{\pi }_{\sigma }^{*}\left( a|s' \right) \right)}\left( v_{\sigma }^{*}\left( s' \right)-\hat{v}_{\sigma }^{*}\left( s' \right) \right) \right|+\gamma \left\| v_{\sigma }^{*}-{{v}^{\hat{\pi }_{\sigma }^{*}}} \right\| \\ 
 & \le \gamma \left\| v_{\sigma }^{*}-\hat{v}_{\sigma }^{*} \right\|{{\left\| \pi _{\sigma }^{*}-\hat{\pi }_{\sigma }^{*} \right\|}_{TV}}+\gamma \left\| v_{\sigma }^{*}-{{v}^{\hat{\pi }_{\sigma }^{*}}} \right\| \end{align*}
 Where the $\norm{\cdot}_{TV}$ accounts for the maximal total-variation distance over all states.
 Finally,
 \begin{align*}
 \left\| v_{\sigma }^{*}-{{v}^{\hat{\pi }_{\sigma }^{*}}} \right\|\le \frac{\gamma \delta {{\left\| \pi _{\sigma }^{*}-\hat{\pi }_{\sigma }^{*} \right\|}_{TV}}}{1-\gamma }.
 \end{align*}
 \end{proof}
\end{proposition}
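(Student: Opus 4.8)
The plan is to follow the classical argument behind \cite{bertsekas1995neuro}[Proposition 6.1], bounding the suboptimality of a value‑greedy policy, but carried out with the Bellman operators of the surrogate MDP and with one extra bit of bookkeeping that produces the total‑variation factor. I would use only the abstract structure already established for the surrogate MDP: its optimal Bellman operator can be written as $T_\sigma v = \max_{\pi} T^{\pi} v$ with the maximum over the perturbed‑policy class and $T^{\pi}$ the \emph{original} MDP's fixed‑policy operator (this is Proposition~\ref{prop: alpha surrogate mdp bellman} in the $\alpha$‑case and Proposition~\ref{prop:ContinuousContraction} in the Gaussian case); its fixed point is $v_\sigma^*$ and $\pi_\sigma^*$ lies in the corresponding greedy set, so $T_\sigma v_\sigma^* = T^{\pi_\sigma^*} v_\sigma^*$; $\hat\pi_\sigma^*$, being greedy w.r.t. $\hat v_\sigma^*$, satisfies $T^{\hat\pi_\sigma^*}\hat v_\sigma^* = T_\sigma \hat v_\sigma^* \geq T^{\pi}\hat v_\sigma^*$ for every $\pi$ in the class; and $v^{\hat\pi_\sigma^*} = T^{\hat\pi_\sigma^*} v^{\hat\pi_\sigma^*}$.

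First I would telescope $v_\sigma^* - v^{\hat\pi_\sigma^*} = T_\sigma v_\sigma^* - T^{\hat\pi_\sigma^*} v^{\hat\pi_\sigma^*}$ by inserting the terms $T_\sigma \hat v_\sigma^*$, $T^{\hat\pi_\sigma^*} v_\sigma^*$ and $T^{\hat\pi_\sigma^*}\hat v_\sigma^*$, and grouping into three blocks: $\big(T_\sigma v_\sigma^* - T_\sigma \hat v_\sigma^*\big)$, $\big(T^{\hat\pi_\sigma^*}\hat v_\sigma^* - T^{\hat\pi_\sigma^*} v_\sigma^*\big)$ and the residual $\big(T^{\hat\pi_\sigma^*} v_\sigma^* - T^{\hat\pi_\sigma^*}\hat v_\sigma^* + T_\sigma \hat v_\sigma^* - T^{\hat\pi_\sigma^*} v^{\hat\pi_\sigma^*}\big)$. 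For the first block I would use $T_\sigma v_\sigma^* = T^{\pi_\sigma^*} v_\sigma^*$ together with $T_\sigma \hat v_\sigma^* \geq T^{\pi_\sigma^*}\hat v_\sigma^*$ to get $T_\sigma v_\sigma^* - T_\sigma \hat v_\sigma^* \leq T^{\pi_\sigma^*} v_\sigma^* - T^{\pi_\sigma^*}\hat v_\sigma^* = \gamma P^{\pi_\sigma^*}\!\left(v_\sigma^* - \hat v_\sigma^*\right)$; the second block is exactly $\gamma P^{\hat\pi_\sigma^*}\!\left(\hat v_\sigma^* - v_\sigma^*\right)$; and in the residual the identity $T_\sigma \hat v_\sigma^* = T^{\hat\pi_\sigma^*}\hat v_\sigma^*$ cancels the $-T^{\hat\pi_\sigma^*}\hat v_\sigma^*$ term, leaving $T^{\hat\pi_\sigma^*} v_\sigma^* - T^{\hat\pi_\sigma^*} v^{\hat\pi_\sigma^*} = \gamma P^{\hat\pi_\sigma^*}\!\left(v_\sigma^* - v^{\hat\pi_\sigma^*}\right)$. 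Collecting the three pieces gives $v_\sigma^* - v^{\hat\pi_\sigma^*} \leq \gamma\left(P^{\pi_\sigma^*}-P^{\hat\pi_\sigma^*}\right)\!\left(v_\sigma^*-\hat v_\sigma^*\right) + \gamma P^{\hat\pi_\sigma^*}\!\left(v_\sigma^*-v^{\hat\pi_\sigma^*}\right)$.

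The crucial step is then to control $\left(P^{\pi_\sigma^*}-P^{\hat\pi_\sigma^*}\right)\!\left(v_\sigma^*-\hat v_\sigma^*\right)$ \emph{without} first splitting it by the triangle inequality (that would cost the weaker constant, a factor $2$ in place of $\norm{\pi_\sigma^*-\hat\pi_\sigma^*}_{TV}$). Expanding $P^{\pi}(s'\mid s)=\sum_a p(s'\mid s,a)\pi(a\mid s)$ and setting $g(s,a)\triangleq\sum_{s'}p(s'\mid s,a)\left(v_\sigma^*(s')-\hat v_\sigma^*(s')\right)$, which satisfies $|g(s,a)|\leq\norm{v_\sigma^*-\hat v_\sigma^*}=\delta$ since $p(\cdot\mid s,a)$ is a distribution, the $s$‑component of the term equals $\sum_a\left(\pi_\sigma^*(a\mid s)-\hat\pi_\sigma^*(a\mid s)\right)g(s,a)$, whose absolute value is at most $\delta$ times the $\ell_1$‑mass of the signed measure $\pi_\sigma^*(\cdot\mid s)-\hat\pi_\sigma^*(\cdot\mid s)$, hence at most $\delta\,\norm{\pi_\sigma^*-\hat\pi_\sigma^*}_{TV}$ with the total‑variation norm normalized consistently with the statement and taken as the supremum over states. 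Taking the max norm in the collected inequality and using $\norm{\gamma P^{\hat\pi_\sigma^*}\!\left(v_\sigma^*-v^{\hat\pi_\sigma^*}\right)}\leq\gamma\norm{v_\sigma^*-v^{\hat\pi_\sigma^*}}$ yields $\norm{v_\sigma^*-v^{\hat\pi_\sigma^*}}\leq\gamma\delta\,\norm{\pi_\sigma^*-\hat\pi_\sigma^*}_{TV}+\gamma\norm{v_\sigma^*-v^{\hat\pi_\sigma^*}}$, and rearranging gives the claimed bound $\frac{\gamma\delta\,\norm{\pi_\sigma^*-\hat\pi_\sigma^*}_{TV}}{1-\gamma}$. (For Theorem~\ref{theorem: gaussian tradeoff} one then bounds $\norm{\pi_\sigma^*-\hat\pi_\sigma^*}_{TV}$ either trivially or, for same‑variance Gaussian policies, in terms of $\norm{\mu_\sigma^*-\hat\mu_\sigma^*}_{\sigma^{-2}}$ via a Pinsker‑type inequality, whichever is smaller.)

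I expect the main obstacle to be the combinatorics of the telescoping in the second paragraph: the three inserted terms must be chosen so that exactly one block is amenable to the max‑property of $T_\sigma$, while the leftover residual — after invoking that $\hat\pi_\sigma^*$ is greedy w.r.t. $\hat v_\sigma^*$ — collapses to the self‑referential term $\gamma P^{\hat\pi_\sigma^*}(v_\sigma^*-v^{\hat\pi_\sigma^*})$ that can be absorbed on the left. After that, the only nontrivial idea is the refinement in the third paragraph, namely pulling the common kernel $p(\cdot\mid s,a)$ out before bounding, so that the residual depends on the policy discrepancy through its $\ell_1$‑mass rather than through a crude supremum of each policy separately.
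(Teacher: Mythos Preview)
Your proposal is correct and follows essentially the same route as the paper's own proof: the same telescoping of $v_\sigma^*-v^{\hat\pi_\sigma^*}$ through $T_\sigma \hat v_\sigma^*$, $T^{\hat\pi_\sigma^*} v_\sigma^*$ and $T^{\hat\pi_\sigma^*}\hat v_\sigma^*$, the same use of greediness of $\pi_\sigma^*$ w.r.t.\ $v_\sigma^*$ and of $\hat\pi_\sigma^*$ w.r.t.\ $\hat v_\sigma^*$ to collapse blocks, and the same Hölder-type bound on $\big(P^{\pi_\sigma^*}-P^{\hat\pi_\sigma^*}\big)(v_\sigma^*-\hat v_\sigma^*)$ via the state-wise $\ell_1$-mass of the policy difference. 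Your write-up is in fact a bit cleaner than the paper's (which has a few notational slips in the expansion), but the argument is the same.
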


Finally, this bound is a generalization of \cite{bertsekas1995neuro}[Proposition 6.1], for any class of distributions. Notice that the total variation distance is not bigger than $2$, which is the case of two different deterministic policies. This leads back to the familiar bound.

\section{Proof of Theorem~\ref{theorem:performance model free}: Bias-Error Sensitivity in the $\alpha$-greedy case}\label{supp: theorem gaussian tradeoff}\label{supp: theorem performance model free}

In order to prove the theorem, we first prove the following two propositions \ref{prop:biasBound},\ref{prop:epsOptBound}. Then, we plug the results in the following triangle inequality:
\begin{align*}
\norm{v^* - v^{\pi^\alpha(\hat{\pi}_\alpha^*,\pi_0)}} \leq \norm{v^* - v^*_\alpha} + \norm{v^*_\alpha - v^{\pi^\alpha(\hat{\pi}_\alpha^*,\pi_0)}}
\end{align*}

\begin{proposition}\label{prop:biasBound}
Let ${\forall s\in \mathcal{S}, \ \alpha(s)\in[0,1]}$, be a state-dependent function. Let $\pi^*_\alpha$ be the $\alpha$-optimal policy, and $L(s)$ the MDP Lipschitz constant, both relatively to $\pi_0$. Define ${B(\alpha) \triangleq \max_s \alpha(s) L(s)}$. The following bounds hold,
\begin{align*}
  \norm{v^*-v^{\pi_\alpha^*}} \le\norm{v^*-v^{\pi^{\alpha}(\pi_g,\pi_0)}} \le \frac{B(\alpha)}{1-\gamma},
\end{align*}
If $\forall s\in \mathcal{S},\ \alpha(s)=\alpha\in [0,1]$ then $B(\alpha)=\alpha L$ (see Definition \ref{defn: lipschitz constant}). Furthermore, this bound is tight.

\begin{proof}

We have that for any $s\in \mathcal{S}$,
\begin{align}
v^* - v^*_\alpha (s)
&=  (Tv^* - T_\alpha v^*)(s) + (T_\alpha v^* - T_\alpha v^*_\alpha ) (s) \nonumber\\
&\leq \norm{Tv^* - T_\alpha v^*} + \norm{T_\alpha v^* - T_\alpha v^*_\alpha } \nonumber\\
&\leq \norm{Tv^* - T_\alpha v^*} + \gamma \norm{ v^* - v^*_\alpha } \nonumber,
\end{align}\label{eq: biasProp LHS}
in the last relation we used the fact that $T_\alpha$ is a $\gamma$ contraction in the max-norm. Moreover, we have that for any $s\in \mathcal{S}$,
\begin{align}
Tv^*(s) - T_\alpha v^*(s) \nonumber
&=Tv^* - (1-\alpha(s))T v^* (s)  - \alpha(s)  T^{\pi_0} v^* (s) \nonumber\\
 &=\alpha(s)  \left( T v^*(s)  -  T^{\pi_0} v^*(s) \right)  \\
 &=\alpha(s)  \left( v^*(s)  -  T^{\pi_0} v^*(s) \right) = \alpha(s) L(s) .\nonumber
\end{align}\label{eq: supp bias less tight}
In the third relation we used the fact that $T v^* = v^*$ component-wise, since $v^*$ is the fixed-point of $T$. Thus, we see that,
\begin{align*}
\norm{Tv^* - T_\alpha v^*} = \max_s \alpha(s) L(s) = B(\alpha),
\end{align*}
and that $L(s)\geq 0$ since $v^*(s)  -  T^{\pi_0} v^*(s)\geq 0$. By taking the max-norm on \eqref{eq: biasProp LHS}, which is possible since it is positive, and simple algebraic manipulation we conclude the result.

We can continue and bound the above to get the bound in \eqref{eq: supp bias less tight}, which is less tight. We have that,
\begin{align}
|Tv^* - T^{\pi_0} v^*|(s) &=| T^{\pi^*}v^* - T_\alpha v^*|(s)  \label{eq: second bounds supp first}\\
&\leq\ \sum_{a} |\pi^*(a\mid s)-\pi_0(a\mid s)| \times \left| r(s,a) + \gamma \sum_{s'} P(s'\mid s,a)v^*(s') \right| \nonumber, 
\end{align} \label{eq: second bounds supp last}
where the first relation is by using the triangle inequality, and then use $|a\cdot b|\leq |a|\cdot|b|$. We further have that,
\begin{align*}
\left| r(s,a) + \gamma \sum_{s'} P(s'\mid s,a)v^*(s') \right| \leq \frac{R_{\mathrm{max}}}{1-\gamma}.
\end{align*}

Thus, continuing from \eqref{eq: second bounds supp last}, we can further bound \eqref{eq: second bounds supp first},
\begin{align*}
|Tv^* - T^{\pi_0} v^*|(s)&\leq \frac{R_{\mathrm{max}}}{1-\gamma}\sum_{a} |\pi^*(a\mid s)-\pi_0(a\mid s)|.
\end{align*}

Thus,
\begin{align*}
\alpha(s) (Tv^* - T^{\pi_0} v^*)(s) \leq \max\frac{\alpha(s) \norm{\pi^*-\pi_0}_{TV}(s) R_{\mathrm{max}}}{1-\gamma}
\end{align*}
where ${\norm{\pi^*-\pi_0}_{TV}(s) =  \sum_{a} |\pi^*(a\mid s)-\pi_0(a\mid s)|}$, is the total variation of $\pi^*$ and $\pi_0$ in state $s$.

Finally, the bound is proved tight by an example which attains it as described below:

For the MDP described in figure \ref{fig:bounds2MDP}, it is easy to see that for the uniform $\pi_0$:
\begin{align*}
v^*-v^{{\pi}^*_\alpha} = \frac{1}{1-\gamma} - \frac{1-\alpha/2}{1-\gamma}=\frac{\alpha/2}{1-\gamma}
\end{align*}
Next:
\begin{align*}
\frac{\alpha}{1-\gamma}\norm{v^*(s) - \sum_a \pi_0(a|s)q^*(s,a)}&=\frac{\alpha}{1-\gamma}\norm{\frac{1}{1-\gamma}-\frac{1/2}{1-\gamma}-\frac{\gamma/2}{1-\gamma}} =\frac{\alpha/2}{1-\gamma}
\end{align*}
\end{proof}
\end{proposition}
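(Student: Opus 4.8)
The plan is to prove the two inequalities separately — the left one from the improvement property and the right one from the surrogate Bellman decomposition — and then to record the constant-$\alpha$ specialization and exhibit a tight instance.

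First I would settle the left inequality $\norm{v^*-v^{\pi_\alpha^*}} \le \norm{v^*-v^{\pi^{\alpha}(\pi_g,\pi_0)}}$. Here $\pi_g$ is any greedy policy w.r.t. $v^*_\alpha$, which by Proposition \ref{prop: alpha surrogate mdp bellman}.3 is an $\alpha$-optimal policy, so $v^{\pi^\alpha(\pi_g,\pi_0)} = v^*_\alpha$. The key observation is the pointwise sandwich $v^*_\alpha \le v^{\pi_\alpha^*} \le v^*$: the right inequality holds because $v^*$ is the optimal value on $\mathcal{M}$, and the left inequality is exactly Proposition \ref{proposition: alpha optimal improvement} with $\beta=0$, since $\pi^0(\pi_\alpha^*,\pi_0)=\pi_\alpha^*$ and the mixture value $v^{\pi^\alpha(\pi_\alpha^*,\pi_0)}$ equals $v^*_\alpha$. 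Subtracting from $v^*$ gives $0 \le v^* - v^{\pi_\alpha^*} \le v^* - v^*_\alpha$ componentwise, and taking the max-norm of these nonnegative vectors yields the claim.

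Next I would prove the right inequality by bounding $\norm{v^*-v^*_\alpha}$ through the fixed-point equations. Writing $v^* = Tv^*$ and $v^*_\alpha = T_\alpha v^*_\alpha$, I decompose $v^* - v^*_\alpha = (Tv^* - T_\alpha v^*) + (T_\alpha v^* - T_\alpha v^*_\alpha)$. The second bracket is controlled by the $\gamma$-contraction of $T_\alpha$ (it is the optimal Bellman operator of $\mathcal{M}_\alpha$), giving $\norm{T_\alpha v^* - T_\alpha v^*_\alpha} \le \gamma\norm{v^* - v^*_\alpha}$. For the first bracket I invoke $T_\alpha = (1-\alpha)T + \alpha T^{\pi_0}$ from Proposition \ref{prop: alpha surrogate mdp bellman}.2, so componentwise $Tv^*(s) - T_\alpha v^*(s) = \alpha(s)(Tv^*(s) - T^{\pi_0}v^*(s))$; the crucial step is that $Tv^* = v^*$ collapses this to $\alpha(s)(v^*(s) - T^{\pi_0}v^*(s)) = \alpha(s)L(s)$ by Definition \ref{defn: lipschitz constant}, whence $\norm{Tv^* - T_\alpha v^*} = \max_s \alpha(s)L(s) = B(\alpha)$. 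Assembling gives the scalar inequality $\norm{v^* - v^*_\alpha} \le B(\alpha) + \gamma\norm{v^* - v^*_\alpha}$, and rearranging yields $\norm{v^* - v^*_\alpha} \le B(\alpha)/(1-\gamma)$. The constant case $\alpha(s)\equiv\alpha$ gives $B(\alpha)=\alpha L$ at once.

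Finally, for tightness I would exhibit a concrete small MDP with a uniform $\pi_0$, compute $v^*$ and $v^{\pi_\alpha^*}$ in closed form, and verify that $v^*-v^{\pi_\alpha^*} = \frac{\alpha/2}{1-\gamma}$, matching the bound exactly. The main obstacle I anticipate is the left inequality: one must correctly read $\pi_g$ and use the improvement Proposition \ref{proposition: alpha optimal improvement} to pass from the mixture value $v^*_\alpha$ up to the deterministic value $v^{\pi_\alpha^*}$, since without that monotonicity the left bound is not immediate. The remainder is a routine contraction argument, whose only delicate point is that $Tv^* = v^*$ is precisely what reduces the first bracket to the exact quantity $\alpha(s)L(s)$ rather than to a looser $R_{\max}/(1-\gamma)$-type estimate.
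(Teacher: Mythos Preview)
Your proof of the main bound $\norm{v^*-v^*_\alpha}\le B(\alpha)/(1-\gamma)$ is essentially identical to the paper's: the same decomposition $v^*-v^*_\alpha=(Tv^*-T_\alpha v^*)+(T_\alpha v^*-T_\alpha v^*_\alpha)$, the same use of the $\gamma$-contraction of $T_\alpha$, and the same exact evaluation $Tv^*-T_\alpha v^*=\alpha(s)L(s)$ via $Tv^*=v^*$. For the left inequality you supply an explicit argument through the improvement property (Proposition~\ref{proposition: alpha optimal improvement} with $\beta=0$), whereas the paper's own proof does not address that inequality separately and instead goes on to derive a looser total-variation-based upper bound on $L(s)$ that you omit; neither difference affects correctness.
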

\begin{figure}
	\centering
	\resizebox{1.8in}{!}{
		\begin{tikzpicture}[->,>=stealth',shorten >=1pt,auto,node distance=2.8cm,
		semithick, state/.style={circle, draw, minimum size=1.1cm}]
		\tikzstyle{every state}=[thick]
		]
		\node[state] (S0) {$s_0$};
		\path
		(S0) edge  [loop left] node[pos=0.8,above]{\Large $a_0$} node {\Large $0$} (S0)
      		edge  [loop right]  node[pos=0.1,above]{\Large $a_1$}         node {\Large $1$} (S0);
		\end{tikzpicture}
	}
	\caption{One State MDP that attains the bound in Proposition \ref{prop:biasBound}}
	\label{fig:bounds2MDP}
\end{figure}
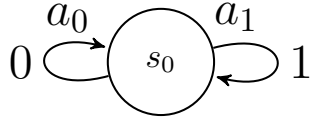

\begin{proposition}\label{prop:epsOptBound}
Let $\alpha\in [0,1]$. Assume $\hat{v}^*_\alpha$ is an approximate $\alpha$-optimal value s.t $\norm{v^*_\alpha-\hat{v}^*_\alpha}=\delta$ for some $\delta\geq 0$. Let $\pi_g$ be the greedy policy w.r.t. $v$,  $\hat{\pi}_\alpha^*\in \G(\hat{v}^*_\alpha
)$. Then
\begin{align*}
\norm{v^*_\alpha-v^{\pi^{\alpha}(\hat{\pi}_\alpha^*,\pi_0)}} \leq \frac{2(1-\alpha)\gamma\delta}{1-\gamma}
\end{align*}
Furthermore, there exists some $\delta_0>0$ such that if $\delta<\delta_0$, then $\hat{\pi}_\alpha^* = \pi_\alpha^*$, and this bound is tight.

\begin{proof}
First, notice that for any two $\alpha$-greedy policies, $\pi^\alpha(\pi_1,\pi_0),\pi^\alpha(\pi_2,\pi_0)$,
\begin{align*}
    \norm{\pi^\alpha(\pi_1,\pi_0)-\pi^\alpha(\pi_2,\pi_0)}_{TV}&=\norm{(1-\alpha)\pi_1 + \alpha\pi_0-(1-\alpha)\pi_2-\alpha \pi_0}_{TV} \\
    & = (1-\alpha) \norm{\pi_1 - \pi_2}_{TV} \\
    & \leq 2(1-\alpha)
\end{align*}
Where the last transition is due to the fact that for the total-
variation between distributions is always smaller than $2$, which is the case of two different deterministic policies.
Plugging in the result in Proposition~\ref{prop: generalized sensitivity bound}, we get the required bound.

Finally, we prove that this bound is tight (see that different MDP then in \cite{bertsekas1995neuro} is used). Observe at the MDP described in Figure \ref{fig:epsOptBound}. The policy $\pi^*_\alpha$ is to always choose action $a_1$.
Hence,
\begin{align*}
v^*_\alpha=\sum_{n=0}^{\infty}{\gamma^n \left[\gamma\delta(1-\frac{\alpha}{2}) - \gamma\delta\frac{\alpha}{2} \right]}=\frac{\gamma\delta(1-\alpha)}{1-\gamma}
\end{align*}
Now, given value estimation $\hat{v}^*_\alpha$, such that $\hat{v}^*_\alpha(s_0)=\delta,  \hat{v}^*_\alpha(s_1)=-\delta$, taking always $a_1$ is an $\alpha$-greedy policy with respect to $\hat{v}^*_\alpha$:
\begin{align*}
(1-\frac{\alpha}{2})(\gamma\delta+\gamma \hat{v}^*_\alpha(s_1)) + \frac{\alpha}{2}(-\gamma\delta + \gamma \hat{v}^*_\alpha(s_0)) = 0 = (1-\frac{\alpha}{2})(-\gamma\delta+\gamma \hat{v}^*_\alpha(s_0)) + \frac{\alpha}{2}(\gamma\delta + \gamma \hat{v}^*_\alpha(s_1)) 
\end{align*}
Hence, 
\begin{align*}
v^{\pi^\alpha(\hat{\pi}^*_\alpha,\pi_0)}=\sum_{n=0}^{\infty}{\gamma^n \left[-\gamma\delta(1-\frac{\alpha}{2}) + \gamma\delta\frac{\alpha}{2} \right]}=\frac{\gamma\delta(\alpha-1)}{1-\gamma} 
\end{align*}
Simple arithmetic show that this MDP attains the upper bound.
\end{proof}
\end{proposition}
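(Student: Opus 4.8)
The plan is to recast the statement as a sensitivity bound on the surrogate MDP $\mathcal{M}_\alpha$ and apply the generalized bound of Proposition~\ref{prop: generalized sensitivity bound}, reading its ``fixed distribution parameters'' as the pair $(\alpha,\pi_0)$ and its policy class as the family of $\alpha$-mixture policies $\pi^\alpha(\pi',\pi_0)$. The whole improvement over the classical bound \cite{bertsekas1995neuro}[Proposition 6.1] --- the extra factor $(1-\alpha)$ --- comes from one elementary observation: any two mixture policies agree on their $\pi_0$-component (of weight $\alpha$), so
\begin{align*}
\norm{\pi^\alpha(\pi_1,\pi_0)-\pi^\alpha(\pi_2,\pi_0)}_{TV}=(1-\alpha)\norm{\pi_1-\pi_2}_{TV}\le 2(1-\alpha),
\end{align*}
the last step because deterministic policies are at total-variation distance at most $2$.

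Concretely I would proceed as follows. First, translate both values into $\mathcal{M}_\alpha$: by Lemma~\ref{lemma: equivalence}, $v^{\pi^\alpha(\hat\pi^*_\alpha,\pi_0)}=v^{\hat\pi^*_\alpha}_\alpha$ and $v^*_\alpha=v^{\pi^\alpha(\pi^*_\alpha,\pi_0)}$, so it suffices to control $\norm{v^*_\alpha-v^{\hat\pi^*_\alpha}_\alpha}$. Second, match the operators via Proposition~\ref{prop: alpha surrogate mdp bellman}: part (1) gives $T^{\pi^\alpha(\pi,\pi_0)}=(1-\alpha)T^\pi+\alpha T^{\pi_0}=T^\pi_\alpha$, hence $T_\alpha=\max_\pi T^{\pi^\alpha(\pi,\pi_0)}$ and $T^{\pi^\alpha(\pi,\pi_0)}v\le T_\alpha v$ for all $\pi,v$; parts (2) and (3) give $T^{\pi^\alpha(\pi^*_\alpha,\pi_0)}v^*_\alpha=T_\alpha v^*_\alpha$ and, for any $\hat\pi^*_\alpha\in\G(\hat v^*_\alpha)$, $T^{\pi^\alpha(\hat\pi^*_\alpha,\pi_0)}\hat v^*_\alpha=T_\alpha \hat v^*_\alpha$. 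These are precisely the properties the proof of Proposition~\ref{prop: generalized sensitivity bound} relies on, so that proposition applies with $T_\alpha$ as the greedy operator, $v^*_\alpha$ as the optimal value, and $\pi^\alpha(\pi^*_\alpha,\pi_0),\pi^\alpha(\hat\pi^*_\alpha,\pi_0)$ as the optimal and approximately-greedy policies, yielding
\begin{align*}
\norm{v^*_\alpha-v^{\pi^\alpha(\hat\pi^*_\alpha,\pi_0)}}\le\frac{\gamma\delta\,\norm{\pi^\alpha(\pi^*_\alpha,\pi_0)-\pi^\alpha(\hat\pi^*_\alpha,\pi_0)}_{TV}}{1-\gamma}.
\end{align*}
Plugging in the total-variation estimate above gives the stated bound $\tfrac{2(1-\alpha)\gamma\delta}{1-\gamma}$.

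For the two ``furthermore'' claims: if $v^*_\alpha$ induces a unique greedy policy --- i.e.\ a positive gap $g$ between the largest and second-largest value of $q^*_\alpha(s,\cdot)$ at every state $s$ --- then since the one-step backup of $\hat v^*_\alpha$ in $\mathcal{M}_\alpha$ differs from $q^*_\alpha$ by at most $\gamma\delta$ in each entry, the per-state argmax is unchanged whenever $\delta<\delta_0:=g/(2\gamma)$, so $\hat\pi^*_\alpha=\pi^*_\alpha$ (and the left-hand side vanishes). Tightness I would establish by exhibiting a small (two-state) MDP with $\pi_0$ uniform, tuned so that $v^*_\alpha=\tfrac{\gamma\delta(1-\alpha)}{1-\gamma}$ and so that a $\delta$-perturbed estimate $\hat v^*_\alpha$ (one state at $+\delta$, the other at $-\delta$) makes \emph{both} actions $\alpha$-greedy; choosing $\hat\pi^*_\alpha$ to be the ``wrong'' deterministic action then gives $v^{\pi^\alpha(\hat\pi^*_\alpha,\pi_0)}=-\tfrac{\gamma\delta(1-\alpha)}{1-\gamma}$, so the difference equals the bound exactly. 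The MDP used for \cite{bertsekas1995neuro}[Proposition 6.1] will not serve, precisely because it lacks the $\pi_0$-mixture structure that produces the $(1-\alpha)$.

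I expect the main difficulty to be bookkeeping rather than conceptual: one must track scrupulously which Bellman operator acts on which MDP --- the fixed-policy operator $T^{\pi^\alpha(\cdot,\pi_0)}$ of the mixture policy on $\mathcal{M}$ versus the operators $T^{\cdot}_\alpha$ and $T_\alpha$ of $\mathcal{M}_\alpha$ --- since it is exactly in passing through the total-variation term that the $(1-\alpha)$ is captured, and lost if one is careless. The tightness construction must likewise be engineered from scratch so that the supremum is attained with equality and the degeneracy of $\G(\hat v^*_\alpha)$ is genuine.
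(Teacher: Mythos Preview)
Your proposal is correct and follows essentially the same route as the paper: both reduce to Proposition~\ref{prop: generalized sensitivity bound} after the key observation that the $\pi_0$-component cancels in the total-variation distance, giving the $(1-\alpha)$ factor; and both establish tightness via a two-state MDP with uniform $\pi_0$ and a $\pm\delta$ perturbation that renders both actions $\alpha$-greedy. Your write-up is in fact more thorough on two points the paper leaves implicit or unproved: you spell out via Lemma~\ref{lemma: equivalence} and Proposition~\ref{prop: alpha surrogate mdp bellman} exactly why the hypotheses of Proposition~\ref{prop: generalized sensitivity bound} are met in the $\alpha$-mixture setting, and you supply the gap argument for the existence of $\delta_0$, which the paper asserts but does not argue.
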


\begin{figure}
	\centering
	\resizebox{2.2in}{!}{
		\begin{tikzpicture}[->,>=stealth',shorten >=1pt,auto,node distance=2.8cm,
		semithick, state/.style={circle, draw, minimum size=1.1cm}]
		\tikzstyle{every state}=[thick]
		]
		
		\node[state] (S0) {$s_0$};
		\node[state] (S1) [right of=S0] {\large $s_1$};

		\path
		(S0) edge  [loop above] node[pos=0.05,left]{\Large $a_0$} node {\Large $-\gamma\delta$} (S0)
      		edge  [bend left]   node[pos=0.15,above ]{\Large $a_1$}         node [above] {\Large $\gamma\delta$} (S1)
		(S1) edge  [loop above] node[pos=1,right]{\Large $a_1$} node {\Large $\gamma\delta$} (S1)
		      edge  [bend left]   node[pos=0.05,below]{\Large $a_0$}         node [below] {\Large $-\gamma\delta$} (S0);
		
		\end{tikzpicture}
	}
	\caption{Two State MDP that attains the bound in Proposition \ref{prop:epsOptBound} over a uniform $\pi_0$.}
	\label{fig:epsOptBound}
\end{figure}
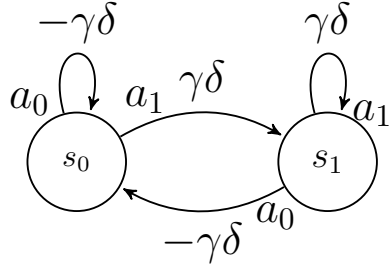

\section{Proof of Proposition \ref{prop:ContinuousContraction}}

In this section, we will prove Proposition~\ref{prop:ContinuousContraction}.
First, we define the sufficient conditions for an MDP on which Proposition~\ref{prop:ContinuousContraction} is true:

\begin{defn}\label{defn: bounded continuous MDP}
An MDP $\M=(\S,\A,P,R,\gamma)$ is a bounded continuous MDP if the following holds:
\begin{enumerate}
\item{$\A$ is a metric space, s.t. $\A=\R^{|\A|}$}
\item{ $\forall s\in\S$ and $a \in \A$, the state-wise reward function is positive, continuous, and bounded $r(s,a)$}
\item{$\forall s\in\S$, the state-wise reward function $r(s,a)$ is continuous in $a\in\A$}
\item{ $\forall s,s' \in \S$, the transition probability density function $p(s'|s,a)$ is continuous in $a\in\A$.}
\end{enumerate}
\end{defn}

Furthermore, we assume $\mathcal{S}$ is finite. Yet, we believe it is possible to extend our result to continuous space as well. This we leave for future work.
 
Next, we state again the definition the optimal policy with respect to the Gaussian noise:

\begin{align}\label{eq:continuous optimization}
&\mu_{\sigma}^* \in \arg\max_{\mu \in \tilde{\A}} {\E}^{\pi_{\mu,\sigma}} \left[\sum_{t=0}^{\infty} \gamma^t r(s_t,a_t) \right],
\end{align}
Where the optimization is restricted to $\tilde{\A}$, a compact subset of $\A$.

We are now state again our main theorem regarding the $\sigma$-optimal optimization criterion:

\begin{lemma}
Let $\M=(\S,\A,P,R,\gamma)$ be a bounded continuous MDP (see (\ref{defn: bounded continuous MDP})). Let $\N(\mu,\sigma)$ be the Gaussian measure with mean $\mu \in \mathbb{R}^n$ and $\sigma \geq 0$ and let $\tilde{\A}\subset\A$ be a compact metric space.
Then, the following claims hold:
\begin{enumerate}
\item $T_\sigma^{\mu} =  \E^{\pi\sim\pi_{\mu,\sigma}} T^\pi $, with fixed point $v^\mu_\sigma\!\!=\!\!v^{\pi_{\mu,\sigma}}$.
\item $T_\sigma \!=\! \max_{\mu\in\tilde{\A}} \! \E^{\pi\sim\pi_{\mu,\sigma}} T^\pi$, with fixed point $v^*_\sigma\!\!=\!\!v^{\pi_{\mu_\sigma^*,\sigma}}$.
\item A $\sigma$-optimal policy is an optimal policy of $\mathcal{M}_\sigma$ and is Gaussian w.r.t. $v^*_\alpha$, $\mu^*_\sigma \in \mathcal{N}_\sigma(v_\sigma^*)=\{\mu: T^{\pi_{\mu,\sigma}} v_\sigma^* = \max_{\mu} T^{\pi_{\mu,\sigma}} v_\sigma^* \}.$
\end{enumerate}
\begin{proof}

We define the surrogate MDP $\M_\sigma$ to have the following reward and dynamics,
\begin{align*}
&r_\sigma(s,a) = \int \mathcal{N}(a'\mid a,\sigma)r(s,a')da',\\
&p_\sigma(s'\mid s,a) = \int \mathcal{N}(a'\mid a,\sigma)p(s'\mid s,a')da'.
\end{align*}

Notice that 
\begin{align*}
&\sum_{s'} p_\sigma(s'\mid s,a) = \int \mathcal{N}(a'\mid a,\sigma)\sum_{s'} p(s'\mid s,a')da'\\
&=\int \mathcal{N}(a'\mid a,\sigma)da'=1.
\end{align*}
First, we show that the surrogate MDP $\M_\sigma$ is equivalent to a Gaussian policy on $\M$. More specifically, we show that the fixed policy bellman operator for a deterministic policy on $\M_\sigma$ is equivalent to the bellman operator of a Gaussian policy on $\M$. Then, we show similar relation for the bellman optimality operator.
\begin{lemma}\label{lem: equivalence of gaussian surrogate MDP}
The following claims hold:
\begin{enumerate}
\item The fixed-policy bellman operator on $\M_\sigma$, $T^\mu_\sigma$ and $T^{\pi_{\mu,\sigma}}$ are equivalent.
\item The bellman operator on $\M_\sigma$, $T_\sigma$ and $\max_\mu T^{\pi_{\mu,\sigma}}$ are equivalent.

\end{enumerate}
\begin{proof}
\begin{align*}
T^\mu_\sigma v & = r^\mu_\sigma + \gamma p^\mu_\sigma v \\
& = {\E}^{\pi\sim\pi_{\mu,\sigma}} {r^\pi} +   {\E}^{\pi\sim\pi_{\mu,\sigma}} {p^\pi} v  \\
& = {\E}^{\pi\sim\pi_{\mu,\sigma}} {r^\pi + \gamma p^\pi v} \\
& = {\E}^{\pi\sim\pi_{\mu,\sigma}} T^\pi
\end{align*}
The second relation holds directly from taking the maximum over both sides.
\end{proof}
\end{lemma}

By Lemma~\ref{lem: equivalence of gaussian surrogate MDP}, the connection between operators is stated for both (1) and (2). By the definition of $\M_\sigma$, for any Gaussian policy with mean $\mu$, $\pi_{\mu,\sigma}$, it holds that $v_\sigma^\mu=v_{\mu,\sigma}$.

Next, we prove the second relation. Again, we start by proving the following Lemma:

\begin{lemma}\label{lem: existence of sigma optimal Gaussian policy}
There exists a $\sigma$-optimal Gaussian policy
\begin{proof}
The functions $r_\sigma (s,a)$ and $p_\sigma (s'|s,a)$ are defined as the expectation of $r(s,\cdot)$ and $p(s'|s,\cdot)$ on the Gaussian measure with mean $a$ respectively. For every $s,s'\in\S$, define the integrand $g(a,\mu)=\phi(a|\mu,\sigma)f(s',s,a)$, where $f(s',s,a)$ represents $r(s,a)$ or $p(s'|s,a)$. The derivative of $\phi_\mu(a|\mu,\sigma)$ exists $\forall \mu \in \R^{|\A|}$. Thus, (a) $g_\mu(a,\mu)$ exists $\forall \mu \in \R^{|\A|}$. 
Next, For all $s,s' \in \S$, $r(s,a)$ and $p(s'|s,a)$ are continuous and bounded in $a$. $\forall \mu$, the Gaussian function is lebesgue-integrable function of $a$. Thus,  (b) $\forall \mu, g(a,\mu)$ is a Lebesgue-integrable function of $a$.
Now,  there exist $c>0$, such that, $| \phi_\mu | \leq c |a-\mu|\phi(a|\mu,\sigma)$. Furthermore, $f(s',s,a)$ is bounded. Hence, there exists $C>0$, such that, $|g_\mu(a,\mu)|\leq C |a-\mu| \phi(a|\mu,\sigma) \triangleq h(a,\mu)$. Then, $\forall \mu$, we can take an open ball of radius $r$, $B_r (\mu)$. Define, $ t(a)=\max_{x \in B_r (\mu)} {h(a,x)}$. $t$ is integrable for every $a \in \A$ by construction. In other words, (c) there is an integrable function $t:A \rightarrow \R$ such that $|g_\mu (a,\mu)| \leq t(a)$ for all $\mu \in B_r (\mu)$.

Finally, From (a),(b) and (c), by the Dominated convergence theorem, Leibniz integral rule applies, which means that $r_\sigma(s,a)$ and $p_\sigma(s'|s,a)$ are differentiable in $a\in \A$, and thus continuous in $a\in \A$, for every $s,s' \in \S$.


Now, (1) let $\M_{\sigma}$ be the surrogate MDP, and assume the state space is discrete. (2) 
For all $s,s' \in \S$, $r_\sigma (s,a)$ and $p_\sigma (s'|s,a)$ are continuous in $a$.
(3) By the definition of the optimality criterion, we consider only actions $a\in\A$. Hence, the action space of $\M_\sigma$ is compact.

Then, by theorem [6.2.10] in \cite{puterman1994markov}, there exist an optimal deterministic policy for the surrogate MDP, $\M_\sigma$.

By the definition of the $\M_\sigma$ and Lemma~\ref{lem: equivalence of gaussian surrogate MDP}, a deterministic policy $\mu$ in $\M_\sigma$ is equivalent to a Gaussian policy $\pi_{\mu,\sigma}$ on $\M$. Denote the optimal deterministic policy on the surrogate MDP as $\mu_\sigma^*$. Thus, the policy $\pi_{{\mu_\sigma^*},\sigma}$ is an $\sigma$-optimal Gaussian policy on $\M$.

\end{proof}
\end{lemma}

Finally, we show that solving the surrogate MDP is equivalent to solving \eqref{eq:continuous optimization} 
$T_\sigma$ is the greedy bellman operator on the surrogate MDP. Therefore, it is a $\gamma$-contraction. Thus, (a) by the Banach fixed point theorem and Theorem [6.2.2] in \cite{puterman1994markov}, $v_\sigma^*$ is the unique solution to the optimality equation, $T_\sigma v_\sigma^* = v_\sigma^*$. (b) By Lemma~\ref{lem: existence of sigma optimal Gaussian policy}, there exists a deterministic optimal policy. Combining (a) and (b), we get that the greedy policy w.r.t. $v_\sigma^*$, $\mu_\sigma^*$, is an optimal policy in the surrogate MDP. By transforming back to the original MDP we get that $\pi_\sigma^* = \pi_{\mu_\sigma^*,\sigma}$:
\begin{align*}
&\mu_\sigma^* \in \{\mu: T_\sigma^\mu v_\sigma^* = T_\sigma v_\sigma^*\}\\
&=\{\mu: {\E}^{\pi\sim\pi_{\mu,\sigma}} T^\pi v_\sigma^* = \max_{\mu} {\E}^{\pi\sim\pi_{\mu,\sigma}} T^\pi v_\sigma^* \}\\
&=\mathcal{N}_\sigma(v_\sigma^*).
\end{align*} 
%
\end{proof}
\end{lemma}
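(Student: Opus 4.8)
The plan is to realize \eqref{eq:continuous eps_optimization} as an ordinary MDP problem on the surrogate MDP $\M_\sigma$ of \eqref{eq: surrogate gaussian MDP reward and dynamics}: first identify its one-step operators with the Gaussian-averaged operators of $\M$ appearing in (1)--(2) by a direct computation, then derive (3) from the existence of an optimal deterministic mean policy on $\M_\sigma$ together with the standard fact that such a policy is greedy with respect to the optimal value. Throughout I would freely use Lemma \ref{lemma: equivalence continuous}, which already tells me that the value of a Gaussian policy $\pi_{\mu,\sigma}$ on $\M$ equals the value of the deterministic mean policy $\mu$ on $\M_\sigma$.

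For (1), I would expand the fixed-policy Bellman operator of $\M_\sigma$ at a deterministic mean $\mu$, $T^\mu_\sigma v = r_\sigma(\cdot,\mu(\cdot)) + \gamma P_\sigma(\cdot\mid\cdot,\mu(\cdot))\,v$, and substitute \eqref{eq: surrogate gaussian MDP reward and dynamics}: then $r_\sigma(s,\mu(s)) = \int_{\A}\mathcal{N}(a';\mu(s),\sigma)\,r(s,a')\,da' = \E_{a'\sim\pi_{\mu,\sigma}(\cdot\mid s)}\,r(s,a')$ and likewise $P_\sigma(s'\mid s,\mu(s)) = \E_{a'\sim\pi_{\mu,\sigma}(\cdot\mid s)}\,P(s'\mid s,a')$, so that $T^\mu_\sigma v = r^{\pi_{\mu,\sigma}} + \gamma P^{\pi_{\mu,\sigma}}v = T^{\pi_{\mu,\sigma}}v$; averaging the constant-action operator $a'\mapsto (r(\cdot,a') + \gamma P(\cdot\mid\cdot,a')v)$ over $a'\sim\pi_{\mu,\sigma}$ reproduces exactly $\E^{\pi\sim\pi_{\mu,\sigma}}T^\pi v$, which is the stated identity, and its fixed point is $v^{\pi_{\mu,\sigma}} = v^\mu_\sigma$. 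Statement (2) is then obtained by taking the pointwise maximum over $\mu\in\tilde\A$ of both sides of (1), since $T_\sigma v = \max_{\mu\in\tilde\A}T^\mu_\sigma v$ is by definition the optimal Bellman operator of $\M_\sigma$; its unique fixed point is $v^*_\sigma$, which equals $v^{\pi_{\mu^*_\sigma,\sigma}}$ again by Lemma \ref{lemma: equivalence continuous}.

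For (3): as a $\gamma$-discounted MDP, $\M_\sigma$ has $T_\sigma$ a $\gamma$-contraction with unique fixed point $v^*_\sigma$, and any optimal policy of $\M_\sigma$ must lie in $\{\mu: T^\mu_\sigma v^*_\sigma = T_\sigma v^*_\sigma\}$; rewriting this set through (1)--(2) gives $\{\mu: T^{\pi_{\mu,\sigma}}v^*_\sigma = \max_{\mu'}T^{\pi_{\mu',\sigma}}v^*_\sigma\}$, the claimed greedy characterization, and transporting back by Lemma \ref{lemma: equivalence continuous} shows such a $\mu^*_\sigma$ attains $\max_\mu v^{\pi_{\mu,\sigma}}$, i.e.\ solves \eqref{eq:continuous eps_optimization}.

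The hard part will be showing that $\M_\sigma$ actually admits an optimal deterministic policy, which is what lets the greedy argument above go through and which is why the hypotheses (bounded, continuous $r$ and $P$, compact $\tilde\A$, finite $\S$) are needed. I would invoke a classical existence theorem for MDPs with compact action sets \cite{puterman1994markov}, for which it suffices that $\tilde\A$ be compact and that $a\mapsto r_\sigma(s,a)$ and $a\mapsto P_\sigma(s'\mid s,a)$ be continuous for all $s,s'$. Since each is a convolution of a bounded function ($r(s,\cdot)$ or $P(s'\mid s,\cdot)$) with the Gaussian kernel, I would establish continuity---indeed, smoothness in the mean $a$---by differentiating under the integral sign: the Gaussian density is $C^\infty$ in its mean, its mean-derivative is dominated by an integrable envelope of the form $c\,|a-\mu|\,\mathcal{N}(a;\mu,\sigma)$, and $r,P$ are bounded, so the dominated-convergence / Leibniz hypotheses hold. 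The delicate points are making this domination genuinely uniform over a small ball around each $\mu$ and handling the componentwise vector variance $\sigma\in\R^{|\A|}_+$ in the estimate; finiteness of $\S$ keeps the outer bookkeeping routine.
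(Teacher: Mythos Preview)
Your proposal is correct and mirrors the paper's proof essentially step for step: the paper too identifies $T^\mu_\sigma$ with $T^{\pi_{\mu,\sigma}}$ by direct expansion of the surrogate reward and transition kernel, takes the pointwise maximum for (2), and derives (3) from the $\gamma$-contraction of $T_\sigma$ together with existence of an optimal deterministic policy on $\M_\sigma$ via Puterman's Theorem~6.2.10, the latter justified by showing $r_\sigma(s,\cdot)$ and $p_\sigma(s'\mid s,\cdot)$ are continuous (in fact differentiable) through a dominated-convergence / Leibniz argument with exactly the envelope $c\,|a-\mu|\,\mathcal{N}(a;\mu,\sigma)$ you describe, made uniform over a small ball around each $\mu$. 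The only cosmetic difference is that you lean on Lemma~\ref{lemma: equivalence continuous} for the value identification, whereas the paper re-derives that equivalence inline.
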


\subsection{MDP with bounded action space}

In this section we explain how to apply the $\sigma$-optimal criterion to an MDP with bounded action space. Let $\M$ be a bounded continuous MDP with a compact action-space $\A$. Proposition~\ref{prop:ContinuousContraction} demands the action space to be defined on the support of the Gaussian measure. Thus, we need to formalize how the Gaussian noise which is defined over $\R^{|\A|}$ operates on the bounded action set $\A$. Intuitively, we choose to project any action chosen outside the action set $a \notin \A$ onto the action set boundary. Formally, the noise operates on the extended MDP, $\M_{ext}$, as defined here.
\begin{defn}\label{defn: Gaussian Noise over bounded action space}
For a bounded continuous MDP $\M=(\S,\A,P,R,\gamma)$, we define the extended MDP, $\M_{ext}$, with action space $\A_{ext}=\R^{|\A|}$, such that:
\begin{enumerate}
\item{ $R_{ext}(s,a)= R(s,\mathcal{P}_{\A}(a))$, for all $s\in\S$.}
\item{$P_{ext}(s,a)= P(s,\mathcal{P}_{\A}(a))$, for all $s,s'\in \S$ }
\end{enumerate}
Where, $\mathcal{P}_{\A} (a)$ is the orthogonal projection of the action $a$ onto the set $\A$
\end{defn}

The MDP $\M_{ext}$ is a bounded continuous MDP, with action space $\R^{|\A|}$. Therefore, by \ref{prop:ContinuousContraction}, it is possible to find the optimal policy w.r.t. the $\sigma$-optimal criterion, over any bounded action space. Finally, most naturally, one can apply the criterion to the original action space $\A$.

\section{No Improvement in Continuous Control}\label{supp: prop NoImprovementContinuous}

We give here the proof, the improvement is not always guaranteed in the continuous case.

\begin{proposition}\label{Supp: prop NoImprovementContinuous}
Let $0\leq\sigma'<\sigma$ and let $\mu^*$ be the $\sigma$-optimal policy. There exists an MDP such that ${v^{{\pi}_{\mu^*,\sigma}}>v^{{\pi}_{\mu^*,\sigma'}}}$. Decreasing the stochasticity can hurt the performance of the agent, and improvement is not guaranteed.
\end{proposition}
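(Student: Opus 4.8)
The plan is to produce, for the given $0\le\sigma'<\sigma$, a single-state MDP on which decreasing the noise strictly lowers the value; since the value is then a scalar, ``strictly lower'' coincides with $v^{\pi_{\mu^{*}_\sigma,\sigma}}\nleq v^{\pi_{\mu^{*}_\sigma,\sigma'}}$. Take $\mathcal{S}=\{s\}$ with the trivial self-loop, a large compact action interval of $\mathbb{R}$ containing $0$, $\gamma\in(0,1)$, and a two-bump reward
\[
r(s,a)=e^{-(a-d)^{2}/(2w^{2})}+e^{-(a+d)^{2}/(2w^{2})},
\]
with bump separation $d>0$ and width $w>0$ to be fixed (as functions of $\sigma,\sigma'$) at the very end. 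This is a bounded continuous MDP in the sense of Definition~\ref{defn: bounded continuous MDP}. With one state, $v^{\pi}=r^{\pi}/(1-\gamma)$ for every policy, so by the definition of $r_\sigma$ in \eqref{eq: surrogate gaussian MDP reward and dynamics} together with Lemma~\ref{lemma: equivalence continuous}, the value of a Gaussian policy of mean $\mu$ and variance $\sigma^{2}$ is $v^{\pi_{\mu,\sigma}}=r_\sigma(\mu)/(1-\gamma)$.

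First I would compute $r_\sigma$ via the Gaussian convolution identity: smoothing the two bumps of width $w$ by a Gaussian of width $\sigma$ only widens them, so with $\tau:=\sqrt{\sigma^{2}+w^{2}}$,
\[
r_\sigma(\mu)=\frac{w}{\tau}\Bigl(e^{-(\mu-d)^{2}/(2\tau^{2})}+e^{-(\mu+d)^{2}/(2\tau^{2})}\Bigr).
\]
Up to the positive factor $w/\tau$ this is an equal-weight mixture of two Gaussians with common standard deviation $\tau$ and means $\pm d$. I would then impose $d<\sigma$, which forces $d<\sigma<\tau$, so the gap $2d$ between the means is strictly below $2\tau$; by the classical fact that such a mixture is unimodal whenever the gap between the means is at most twice the standard deviation, $r_\sigma$ is unimodal with unique maximum at $\mu=0$, and $r_\sigma(\mu)\to0$ as $|\mu|\to\infty$, so its maximum over the compact action set is attained at $0$. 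Hence $\mu^{*}_\sigma=0$ and $v^{\pi_{\mu^{*}_\sigma,\sigma}}=\frac{2w}{(1-\gamma)\tau}e^{-d^{2}/(2\tau^{2})}$.

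Next I would evaluate this same mean $0$ under the smaller noise $\sigma'$: writing $\rho:=\sqrt{(\sigma')^{2}+w^{2}}$ (so $\rho=w$ and the value is the deterministic $\frac{2}{1-\gamma}e^{-d^{2}/(2w^{2})}$ when $\sigma'=0$), the same formula gives $v^{\pi_{\mu^{*}_\sigma,\sigma'}}=\frac{2w}{(1-\gamma)\rho}e^{-d^{2}/(2\rho^{2})}$, hence
\[
\frac{v^{\pi_{\mu^{*}_\sigma,\sigma}}}{v^{\pi_{\mu^{*}_\sigma,\sigma'}}}=\frac{\rho}{\tau}\exp\!\Bigl(\tfrac{d^{2}}{2}\bigl(\tfrac{1}{\rho^{2}}-\tfrac{1}{\tau^{2}}\bigr)\Bigr),
\]
whose exponent is positive since $\rho<\tau$. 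Letting $w\to0^{+}$ with $d$ fixed, this ratio tends to $\frac{\sigma'}{\sigma}\exp\bigl(\tfrac{d^{2}(\sigma^{2}-(\sigma')^{2})}{2\sigma^{2}(\sigma')^{2}}\bigr)$ when $\sigma'>0$ (and to $+\infty$ when $\sigma'=0$), which is $>1$ precisely when $d^{2}>\frac{2\sigma^{2}(\sigma')^{2}\ln(\sigma/\sigma')}{\sigma^{2}-(\sigma')^{2}}$. This threshold is strictly below $\sigma^{2}$: with $t:=\sigma'/\sigma\in[0,1)$, the function $h(t)=1-t^{2}+2t^{2}\ln t$ satisfies $h(1)=0$ and $h'(t)=4t\ln t<0$ on $(0,1)$, hence $h(t)>0$. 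So I would fix any $d$ with that threshold $<d^{2}<\sigma^{2}$ and then pick $w>0$ small enough that the displayed ratio exceeds $1$; automatically $d<\sigma<\tau$, so indeed $\mu^{*}_\sigma=0$, and $v^{\pi_{\mu^{*}_\sigma,\sigma}}>v^{\pi_{\mu^{*}_\sigma,\sigma'}}$, which is the claim.

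The one non-mechanical step, and the main obstacle, is justifying $\mu^{*}_\sigma=0$; this rests on the unimodality of a symmetric equal-variance two-Gaussian mixture when the mean separation does not exceed twice the standard deviation, and keeping $d<\sigma<\tau$ stays strictly inside the unimodal regime. The closed form for $r_\sigma$ (a Gaussian convolution of Gaussians) and the inequality $h>0$ are routine. Intuitively this is the continuous analogue of cliff-walking: all the reward sits in two thin bumps far from the origin, so with enough noise the best mean is the midpoint between them, from which the perturbations reach both bumps; playing that mean with little or no noise misses the reward entirely.
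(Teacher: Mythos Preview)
Your proof is correct and uses the same construction as the paper: a single-state MDP with a symmetric two-Gaussian-bump reward, so that smoothing by noise of level $\sigma$ makes the optimal mean $\mu^{*}_\sigma=0$, while the unsmoothed (or less-smoothed) reward at $0$ is strictly smaller. The paper only instantiates this for the specific pair $\sigma=1$, $\sigma'=0$ (bump centers $\pm1$, width $w^{2}=1/2$) and appeals to numerical bounds (``lower bounded by $0.23$'', ``upper bounded by $0.21$''), asserting without argument that the maximum of $r^{\pi}$ is attained at $\mu=0$. Your version is more complete: by parameterizing the bump separation $d$ and width $w$ you cover every pair $0\le\sigma'<\sigma$ as the proposition is actually stated, and you justify $\mu^{*}_\sigma=0$ via the standard unimodality criterion for an equal-weight two-Gaussian mixture ($d\le\tau$). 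The trade-off is length: the paper's argument is a three-line numerical check of one instance, whereas yours proves the claim at the level of generality in which it is phrased.
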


\begin{proof}
Let $\mathcal{M}$ be a one-state MDP, with the following reward: 
$r(u)=\frac{1}{2} \frac{1}{\sqrt{\pi}} e^{-(u-1)^2}+\frac{1}{2} \frac{1}{\sqrt{\pi}} e^{-(u+1)^2}$.
The expected reward under a Gaussian policy with $\mu$ and $\sigma=1$ is:
$r^\pi = \frac{1}{2} \frac{1}{\sqrt{3\pi}} e^{-(\mu-1)^2/3}+\frac{1}{2} \frac{1}{\sqrt{3\pi}} e^{-(\mu+1)^2/3}$. It is easy to calculate that the maximum of $r^\pi$ is attained when $\mu=0$ and its value lower bounded by $0.23$. Hence, the $\sigma$-optimal policy with $\sigma=1$ is $\pi(u|s)=\mathcal{N}(0,1)$. However, acting greedily w.r.t the mean of the $\sigma$-optimal, i.e., acting always with $u=0$ can be upper bounded by $0.21$. Thus, $r^{\pi_{\sigma}^*}>r^{\pi^{\mu_\sigma},0}$
\end{proof}

An illustration of such a case is given in figure~\ref{fig: NoImprovementIllustration}.

\begin{figure}[t]
\centering
	\includegraphics[width=0.4\textwidth]{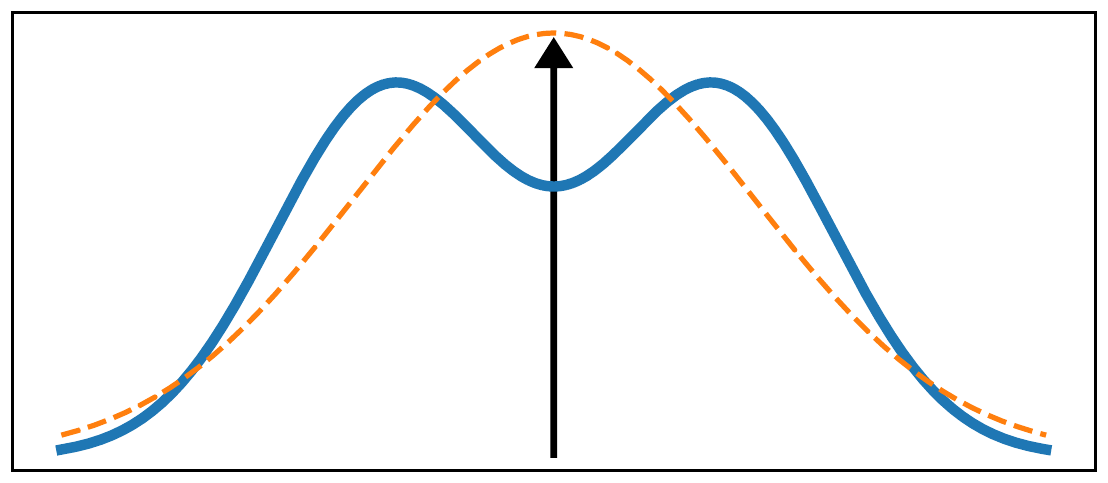} 
\caption{Illustration of a typical case where there is no improvement: (Blue) The state-action value function as a function of the action taken. (Orange) The $\sigma$-optimal policy is with $\mu_\sigma^*=0$ due to the smoothing effect of the Gaussian policy. (Black) A deterministic policy around the $\mu_\sigma^*$. It can be easily seen that decreasing the noise degrades the performance of the agent.}
\end{figure}\label{fig: NoImprovementIllustration}%

While in the general case there is no improvement, it is easy to verify that a sufficient condition for improvement is that the state-wise variance of the ${q}^{\pi _{\sigma }^{*}}$  w.r.t. every smaller noise level, $\tilde{\sigma} < \sigma$, is less than the noise level itself:
\begin{align*}
\frac{{\E}_{a\sim\pi_{{{\mu }_{\sigma }^*},\tilde{\sigma }}(\cdot\mid s)}\left[ {{\left( a-{{\mu }^*_{\sigma }}\left( s \right) \right)}^{2}}{{q}^{\pi _{\sigma }^{*}}}\left( s,a \right) \right]}{{\E}_{a\sim\pi_{{{\mu }_{\sigma }^*},\tilde{\sigma }}(\cdot \mid s)} {{q}^{\pi _{\sigma }^{*}}}\left( s,a \right)} \le {{\tilde{\sigma }}^{2}}.
\end{align*}

\section{Proof of Theorem~\ref{theorem: gaussian tradeoff}: Bias-Error Sensitivity in the Gaussian case}\label{supp: theorem gaussian tradeoff}

In this section we prove a bias-error sensitivity result for the Gaussian noise case, similarly to \ref{theorem:performance model free}. Theorem~\ref{theorem: gaussian tradeoff} exhibits a Bias-Sensitivity trade-off w.r.t. the noise parameter $\sigma$. When $\sigma$ grows, the bias increases in $\norm{\sigma}_1$, but the sensitivity term decreases. In the limit where $\sigma$ goes to infinity, the approximation error tend to zero. In the other limit, where the noise reduces to zero, we return to the case of a greedy optimal policy. Indeed, as the bound shows, we get an unbiased solution, and the sensitivity term reduces to the classical bound of \citet{bertsekas1995neuro}. Unsurprisingly, we get a better sensitivity bound only when there is a sufficient overlap between the two policies.




In order to prove the theorem, we will first prove two propositions: A bias proposition \ref{prop: gaussian bias} and a sensitivity proposition \ref{prop: gaussian sensitivity}. Then, we plug the results in the following triangle inequality:
\begin{align*}
\norm{v^* - v^{\hat{\mu},\sigma}} \leq \norm{v^* - v^*_\sigma} + \norm{v^*_\sigma - v^{\hat{\mu},\sigma}}
\end{align*}

First, we derive the bias proposition,
\begin{proposition}\label{prop: gaussian bias}
Let $\sigma\geq 0$ and let $\pi_\sigma^*$ be the $\sigma$-optimal policy. Assume an MDP $\M$ is Lipschitz, i.e., there exists $L_r\geq0$ and $L_p\geq 0$, such that, $\forall s,s'\in \mathcal{S}$ and $\forall a_1,a_2 \in \mathcal{A}$, $\left|r(s,a_1)-r(s,a_1)\right|<L_r\norm{a_1-a_2}_1$ and $\left|p(s'|s,a_1)-p(s'|s,a_1)\right|<L_p\norm{a_1-a_2}_1$. Then, the following holds,
\begin{align*}
\norm{v^*-v_\sigma^*} \leq \sqrt{\frac{2}{\pi}}\frac{ \left(1-\gamma\right)L_r + \gamma L_p R_{max}}{(1-\gamma)^2} \sigma
\end{align*}
\begin{proof}

\begin{align*}
\left\| {{v}^{*}}-v_{\sigma }^{*} \right\|&=\left\| {{v}^{*}}-{T_\sigma}v_{\sigma }^{*} \right\| \\
&\le \left\| {{v}^{*}}-{T_\sigma}{{v}^{*}} \right\|+\left\| {T_\sigma}{{v}^{*}}-{T_\sigma}v_{\sigma }^{*} \right\| \\
& \le \left\| {{v}^{*}}-{T_\sigma}{{v}^{*}} \right\|+\gamma \left\| {{v}^{*}}-v_{\sigma }^{*} \right\|
\end{align*}
Where the inequality is due to the fact that $T_\sigma$ is a $\gamma$-contraction.
Simple algebra gives $\norm{{{v}^{*}}-v_{\sigma }^{*}} \leq \frac{\norm{v^*-T_\sigma v^*}}{1-\gamma}$

Next, we bound the nominator:
\begin{align*}
v^*(s)-(T_\sigma v^*)(s) & = (T^*v^*)(s)-(T_\sigma v^*)(s) \\
& =  \max_a r(s,a) + \gamma \sum_{s' \in \S} p(s'|s,a) v^*(s') - \max_\mu \int {\N(a|\mu,\sigma)\left[ r(s,a) + \gamma \sum_{s' \in \S} {p(s'|s,a) v^*(s')}\right]da} \\
& \leq r(s,a^*) + \gamma \sum_{s' \in \S} p(s'|s,a^*) v^*(s') - \int {\N(a|a^*,\sigma)\left[ r(s,a) + \gamma \sum_{s' \in \S} {p(s'|s,a)v^*(s')}\right]da} \\
& = \int {\N(a|a^*,\sigma)\left[ \left(r(s,a^*) - r(s,a)\right) + \gamma \sum_{s' \in \S} {\left(p(s'|s,a^*) - p(s'|s,a)\right) v^*(s')}\right]da} \\
& \leq \int {\N(a|a^*,\sigma)\left[ \left(r(s,a^*) - r(s,a)\right) + \gamma \sum_{s' \in \S} {\left|p(s'|s,a^*) - p(s'|s,a)\right| v^*(s')}\right]da} \\
& \leq \int {\N(a|a^*,\sigma)\left[ \left(r(s,a^*) - r(s,a)\right) +\frac{\gamma R_{max}}{1-\gamma} \sum_{s' \in \S} {\left|p(s'|s,a^*) - p(s'|s,a)\right|} \right]da} \\
& \leq \int {\N(a|a^*,\sigma)\left[ L_r \norm{a^*-a}_1 + \gamma \norm{p(\cdot \mid s,a^*)-p(\cdot \mid s,a)}_{TV} \frac{R_{max}}{1-\gamma} \right]da} \\
& \leq \int {\N(a|a^*,\sigma)\left[ L_r \norm{a^*-a}_1 + \gamma L_p \norm{a^*-a}_1 \frac{R_{max}}{1-\gamma} \right]da} \\
& = \left( L_r  + \gamma L_p \frac{R_{max}}{1-\gamma} \right) \int {\N(a|a^*,\sigma)\norm{a^*-a}_1da} \\ 
& = \left( L_r  + \gamma L_p \frac{R_{max}}{1-\gamma} \right) \sqrt{\frac{2}{\pi}} \norm{\sigma}_1
 \end{align*}

Where the first transition is due to $a^*\in \arg\max r(s,a) + \gamma \sum_{s' \in \S} p(s'|s,a) v^*(s')$, and the last is due to the absolute first moment of the Gaussian distribution.

We get,
\begin{align*}
\norm{v^*-T_\sigma v^*} \leq  \sqrt{\frac{2}{\pi}}\left( L_r  + \gamma L_p \frac{R_{max}}{1-\gamma} \right) \norm{\sigma}_1
\end{align*}
Finally, combining the two results gives:
\begin{align*}
\norm{{{v}^{*}}-v_{\sigma }^{*}} \leq \sqrt{\frac{2}{\pi}}\frac{(1-\gamma)L_r  + \gamma L_p R_{max}}{\left(1-\gamma\right)^2} \norm{\sigma}_1
\end{align*}
\end{proof}
\end{proposition}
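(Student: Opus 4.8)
The plan is to reduce the bias $\norm{v^*-v_\sigma^*}$ to a single one-step Bellman residual and then to bound that residual using the two Lipschitz assumptions together with the first absolute moment of a Gaussian. I would start from the fact that $v_\sigma^*$ is the fixed point of the surrogate optimal operator $T_\sigma$ (Proposition~\ref{prop:ContinuousContraction}) and that $v^*=Tv^*$. Writing $\norm{v^*-v_\sigma^*}=\norm{v^*-T_\sigma v_\sigma^*}$ and inserting $T_\sigma v^*$ gives
\begin{align*}
\norm{v^*-v_\sigma^*}\le \norm{v^*-T_\sigma v^*}+\norm{T_\sigma v^*-T_\sigma v_\sigma^*}\le \norm{v^*-T_\sigma v^*}+\gamma\norm{v^*-v_\sigma^*},
\end{align*}
where the last step uses that $T_\sigma$ is a $\gamma$-contraction in the max-norm. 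Rearranging isolates the bias as $\norm{v^*-v_\sigma^*}\le \norm{v^*-T_\sigma v^*}/(1-\gamma)$, so the entire problem collapses to bounding the residual $\norm{v^*-T_\sigma v^*}$.

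For the residual I would fix a state $s$, let $a^*$ attain the maximum in $v^*(s)=Tv^*(s)=\max_a\{r(s,a)+\gamma\sum_{s'}p(s'\mid s,a)v^*(s')\}$, and then lower-bound $T_\sigma v^*(s)$ by plugging in the specific (generally suboptimal) Gaussian mean $\mu=a^*$ instead of taking the maximum over $\mu$. Subtracting yields
\begin{align*}
v^*(s)-T_\sigma v^*(s)\le \int \mathcal{N}(a\mid a^*,\sigma)\Big[\bigl(r(s,a^*)-r(s,a)\bigr)+\gamma\sum_{s'}\bigl(p(s'\mid s,a^*)-p(s'\mid s,a)\bigr)v^*(s')\Big]\,da.
\end{align*}
I would then pass the transition difference through absolute values, bound $v^*(s')\le R_{\max}/(1-\gamma)$, and invoke Definition~\ref{defn: lipschitz continuous}: $r(s,a^*)-r(s,a)\le L_r\norm{a^*-a}_1$ and $\sum_{s'}|p(s'\mid s,a^*)-p(s'\mid s,a)|=\norm{p(\cdot\mid s,a^*)-p(\cdot\mid s,a)}_{TV}\le L_p\norm{a^*-a}_1$. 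The integrand then collapses to $\bigl(L_r+\gamma L_p R_{\max}/(1-\gamma)\bigr)\norm{a^*-a}_1$, which is a constant times the displacement.

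The final computation, and the source of the $\sqrt{2/\pi}$ constant, is the Gaussian first absolute moment: for $a\sim\mathcal{N}(a^*,\sigma)$ with independent coordinates, $\E\norm{a-a^*}_1=\sum_i\E|a_i-a_i^*|=\sqrt{2/\pi}\,\norm{\sigma}_1$. Substituting gives $\norm{v^*-T_\sigma v^*}\le\sqrt{2/\pi}\,(L_r+\gamma L_p R_{\max}/(1-\gamma))\norm{\sigma}_1$, and dividing by $(1-\gamma)$ and clearing the inner denominator reproduces exactly $\sqrt{2/\pi}\,[(1-\gamma)L_r+\gamma L_p R_{\max}]\norm{\sigma}_1/(1-\gamma)^2$, i.e. the stated bound with the Lipschitz constant $\mathcal{L}$ in the numerator. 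I expect the only delicate bookkeeping to be the transition term — justifying the sign-to-absolute-value passage and the uniform estimate $v^*\le R_{\max}/(1-\gamma)$; the possibility that sampled actions leave a bounded action set is not an obstacle, since it is absorbed by the projection/extended-MDP convention of Definition~\ref{defn: Gaussian Noise over bounded action space}. The reduction to the Bellman residual is clean because of the contraction, so there is no fixed-point subtlety to handle there.
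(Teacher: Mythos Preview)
Your proposal is correct and follows essentially the same argument as the paper: the contraction of $T_\sigma$ reduces the bias to the one-step residual $\norm{v^*-T_\sigma v^*}$, the residual is bounded by choosing the suboptimal mean $\mu=a^*$ in $T_\sigma$, and the Lipschitz assumptions together with $v^*\le R_{\max}/(1-\gamma)$ and the Gaussian first absolute moment $\E\norm{a-a^*}_1=\sqrt{2/\pi}\,\norm{\sigma}_1$ finish the estimate. The paper's proof is step-for-step the same, including the passage from the signed transition difference to its absolute value.
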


Finally, we prove the following sensitivity proposition using:
\begin{proposition}\label{prop: gaussian sensitivity}
Let $\sigma \in {\R}^{|\A|}_{+}$. Assume $\hat{v}_\sigma^*$ is an approximate $\sigma$-optimal value s.t. $\norm{v_\sigma^*-\hat{v}_\sigma^*}=\delta$ for some $\delta>0$. Let ${\mu}_\sigma^*,\hat{\mu}_\sigma^* \in {\R}^{\A}$ be the greedy mean policy w.r.t. ${v}_\sigma^*$ and $\hat{v}_\sigma^*$ respectively. Then,
\begin{align*}
\norm{v^*_\sigma-v^{\pi_{\hat{\mu}_\sigma^*,\sigma}}} \leq  \frac{1}{2} \frac{\gamma\delta 
\min \{\norm{{\mu}_\sigma^*-\hat{\mu}_\sigma^*}_{\sigma^{-2}} ,4\}}{1-\gamma},
\end{align*}
where $\norm{\cdot}_{\sigma^{-2}}$ is the $\sigma^{-2}$-weighted euclidean norm.
\begin{proof}
First, notice that the total variation distance is not bigger than $2$, which is the case of two different deterministic policies, as seen in \cite{bertsekas1995neuro}[Proposition 6.1].
Next, the Kullback-Leibler divergence between two Gaussian distributions with the same variance $\sigma$ is $\frac{1}{2}\norm{\mu_\sigma^*-\hat{\mu}_\sigma^*}^2_{\sigma^{-2}}$, where $\norm{\cdot}_{\sigma^{-2}}$ is the $\sigma^{-2}$-weighted euclidean norm.
Finally, by using Pinsker's inequality to bound the total variation distance, and plugging in the closed form of the Kullback-Leibler divergence, one gets the required result.
\end{proof}
\end{proposition}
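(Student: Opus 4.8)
The plan is to combine the generalized sensitivity bound of Proposition~\ref{prop: generalized sensitivity bound} with a closed-form estimate of the total-variation distance between two Gaussian policies that share the same variance. Since $\hat{\mu}_\sigma^*$ is by construction the greedy mean policy with respect to $\hat v_\sigma^*$, part~2 of Proposition~\ref{prop:ContinuousContraction} shows that $\pi_{\hat{\mu}_\sigma^*,\sigma}$ is $\sigma$-greedy w.r.t. $\hat v_\sigma^*$ (i.e. $T^{\pi_{\hat{\mu}_\sigma^*,\sigma}}\hat v_\sigma^*=T_\sigma \hat v_\sigma^*$), so Proposition~\ref{prop: generalized sensitivity bound} applies with the class of Gaussian policies of fixed variance $\sigma^2$ and yields
\begin{align*}
\norm{v^*_\sigma - v^{\pi_{\hat{\mu}_\sigma^*,\sigma}}} \le \frac{\gamma\delta\,\norm{\pi_{\mu_\sigma^*,\sigma}-\pi_{\hat{\mu}_\sigma^*,\sigma}}_{TV}}{1-\gamma}.
\end{align*}
It then only remains to bound $\norm{\pi_{\mu_\sigma^*,\sigma}-\pi_{\hat{\mu}_\sigma^*,\sigma}}_{TV}$ by $\min\{\norm{\mu_\sigma^*-\hat{\mu}_\sigma^*}_{\sigma^{-2}},2\}$, which equals $\tfrac12\min\{\norm{\mu_\sigma^*-\hat{\mu}_\sigma^*}_{\sigma^{-2}},4\}$, and plugging this in gives the stated inequality.

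For the total-variation estimate I would use two bounds and keep the smaller one. First, trivially, the total-variation distance between any two probability measures is at most $2$ in the normalization used throughout the paper (where $\norm{\pi-\pi'}_{TV}(s)=\sum_a|\pi(a\mid s)-\pi'(a\mid s)|$), attained by disjointly supported distributions; this produces the constant $2$, hence the saturation value $4$ inside the minimum. Second, by Pinsker's inequality (in this normalization, $\norm{\pi_1-\pi_2}_{TV}\le\sqrt{2\,D_{\mathrm{KL}}(\pi_1\,\|\,\pi_2)}$), applied to $\pi_1=\mathcal N(\mu_\sigma^*,\Sigma)$, $\pi_2=\mathcal N(\hat{\mu}_\sigma^*,\Sigma)$ with $\Sigma=\mathrm{diag}(\sigma_1^2,\dots,\sigma_{|\A|}^2)$: since the KL divergence between two Gaussians with identical covariance $\Sigma$ is $\tfrac12(\mu_\sigma^*-\hat{\mu}_\sigma^*)^\top\Sigma^{-1}(\mu_\sigma^*-\hat{\mu}_\sigma^*)=\tfrac12\norm{\mu_\sigma^*-\hat{\mu}_\sigma^*}_{\sigma^{-2}}^2$ (the log-det and trace terms cancel), this simplifies to $\norm{\pi_{\mu_\sigma^*,\sigma}-\pi_{\hat{\mu}_\sigma^*,\sigma}}_{TV}\le\norm{\mu_\sigma^*-\hat{\mu}_\sigma^*}_{\sigma^{-2}}$. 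Taking the minimum of the two bounds closes the argument.

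The main obstacle I expect is bookkeeping rather than anything conceptual: one must be consistent about the factor of $2$ in the total-variation normalization throughout (the $[0,2]$ convention in Proposition~\ref{prop: generalized sensitivity bound} versus the $[0,1]$ convention in the textbook statement of Pinsker's inequality), so that the constants emerging from Pinsker's inequality and the Gaussian KL formula match exactly the $\tfrac12$ and the $4$ in the statement; one should also check that the anisotropic (diagonal) covariance case indeed yields precisely the $\sigma^{-2}$-weighted Euclidean norm. No contraction or fixed-point reasoning is needed beyond what is already packaged inside Proposition~\ref{prop: generalized sensitivity bound}.
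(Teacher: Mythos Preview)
Your approach is exactly the paper's: invoke Proposition~\ref{prop: generalized sensitivity bound} and then control $\norm{\pi_{\mu_\sigma^*,\sigma}-\pi_{\hat\mu_\sigma^*,\sigma}}_{TV}$ via the trivial bound $2$ and Pinsker's inequality combined with the closed-form KL between equal-covariance Gaussians.

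There is, however, an arithmetic slip in your last step. You assert that $\min\{x,2\}$ ``equals'' $\tfrac12\min\{x,4\}$ for $x=\norm{\mu_\sigma^*-\hat\mu_\sigma^*}_{\sigma^{-2}}$, but $\tfrac12\min\{x,4\}=\min\{x/2,2\}$, which differs from $\min\{x,2\}$ whenever $x<4$. Your Pinsker computation is correct and gives $\mathrm{TV}\le\sqrt{2\cdot\tfrac12 x^2}=x$, not $x/2$; hence what your argument actually establishes is
\[
\norm{v_\sigma^*-v^{\pi_{\hat\mu_\sigma^*,\sigma}}}\le\frac{\gamma\delta\,\min\{x,2\}}{1-\gamma},
\]
which is weaker than the displayed bound $\tfrac12\gamma\delta\min\{x,4\}/(1-\gamma)$ by a factor of $2$ in the non-saturated regime. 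The paper's own proof invokes precisely the same three ingredients (TV $\le 2$, Gaussian KL $=\tfrac12 x^2$, Pinsker) and therefore exhibits the same gap between the argument and the stated constant; the extra $\tfrac12$ in the statement does not follow from Pinsker and the Gaussian KL formula alone.
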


\section{Supplementary material for Section~\ref{sec: algorithms}}\label{supp: algorithms}


In this section we give the proofs for the algorithms proposed in Section~\ref{sec: alg alpha optimal}.

The proof of Lemma \ref{lemma: q alpha q pi mixture} is given as follows:
\begin{proof}
By using the definition of $T^{Eq}_\alpha$, and due to $v^*_\alpha = \max_a q^*_\alpha(\cdot,a)$, we have that,
\begin{align*}
q^*_\alpha(s,a) &= T^{Eq}_\alpha q^*_\alpha(s,a) \\
&= r_\alpha(s,a)+ \gamma \sum_{s'}P_\alpha (s'\mid s,a)\max_{a'}q^*_\alpha(s',a')\\
&= (1-\alpha)\left(r(s,a)+ \gamma \sum_{s'}P (s'\mid s,a)v^*_\alpha(s')\right)
+\alpha\sum_a \pi(a'\mid s)\left(r(s,a')+ \gamma \sum_{s'}P (s'\mid s,a')v^*_\alpha(s')\right)\\
&= (1-\alpha)q^{\pi^\alpha(\pi^*_\alpha,\pi_0)}(s,a) + \alpha\sum_a \pi_0(a'\mid s)q^{\pi^\alpha(\pi^*_\alpha,\pi_0)}(s,a'),
\end{align*}
where in the last relation we used \eqref{eq: def q pi mixture}.
\end{proof}

We now prove the following lemma:

\begin{lemma}\label{lemma: T EQ fixed point}
The operator $T^{Eq}_\alpha$ is a $\gamma$-contraction, and its fixed point is $q^{\pi^\alpha(\pi^*_\alpha,\pi_0)}$

\begin{proof}
It is easy to verify this operator is a $\gamma$-contraction using standard arguments \cite{bertsekas1995neuro}. We prove that the fixed point of $T^{Eq}_\alpha $ is $q^{\pi^\alpha(\pi_\alpha^*,\pi_0)}$. First, by using the max operator w.r.t. the action on the result in Lemma \ref{lemma: q alpha q pi mixture}, we get
\begin{align}
v^*_\alpha = (1-\alpha)\max_a q^{\pi^\alpha(\pi^*_\alpha,\pi_0)}(\cdot,a) + \alpha \Pi_0 q^{\pi^\alpha(\pi^*_\alpha,\pi_0)}. \label{eq: v alpha star and q mixture optimal}
\end{align}

Consider the definition of $q^{\pi^\alpha(\pi_\alpha^*,\pi_0)}$ \eqref{eq: def q pi mixture}. We have that,
\begin{align*}
q^{\pi^\alpha(\pi^*_\alpha,\pi_0)}(s,a) &= r(s,a) + \gamma \sum_{s'} P(s'\mid s,a)v^*_\alpha(s') \\
&= r(s,a)
+ \gamma(1-\alpha) \sum_{s'} P(s'\mid s,a) \max_{a'} q^{\pi^\alpha(\pi^*_\alpha,\pi_0)}(s',a') \\ 
& \quad+ \gamma\alpha\sum_{s',a'} P(s'\mid s,a)\pi_0(a'\mid s')q^{\pi^\alpha(\pi^*_\alpha,\pi_0)}(s',a')\\
&= T^{Eq}_\alpha q^{\pi^\alpha(\pi^*_\alpha,\pi_0)}(s,a),
\end{align*}
where the first relation holds by plugging \eqref{eq: v alpha star and q mixture optimal} and the third relation holds by identifying the operator $T^{Eq}_\alpha $.
\end{proof}
\end{lemma}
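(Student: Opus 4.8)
The plan is to establish the two assertions separately: first that $T^{Eq}_\alpha$ is a $\gamma$-contraction in the max-norm (so that by Banach's theorem it has a unique fixed point), and then that $q^{\pi^\alpha(\pi^*_\alpha,\pi_0)}$ solves $q = T^{Eq}_\alpha q$, which by uniqueness pins it down as that fixed point.

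For the contraction, I would take arbitrary $q_1,q_2$ and form the difference $T^{Eq}_\alpha q_1 - T^{Eq}_\alpha q_2$ using the explicit form in \eqref{eq: expected fix point}. The reward terms cancel, leaving a $\gamma(1-\alpha)$-weighted term in $\max_{a'}q_1(s',a') - \max_{a'}q_2(s',a')$ and a $\gamma\alpha$-weighted term in $\sum_{a'}\pi_0(a'\mid s')\big(q_1(s',a')-q_2(s',a')\big)$. Using $|\max_{a'}q_1 - \max_{a'}q_2|\le \norm{q_1-q_2}$, $|\sum_{a'}\pi_0(a'\mid s')(q_1-q_2)|\le \norm{q_1-q_2}$, and $\sum_{s'}P(s'\mid s,a)=1$, the two terms combine to $\gamma(1-\alpha)\norm{q_1-q_2}+\gamma\alpha\norm{q_1-q_2}=\gamma\norm{q_1-q_2}$. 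This is the standard estimate and I anticipate no difficulty here; Banach's fixed point theorem then yields existence and uniqueness.

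For the fixed-point identity, the key intermediate step is the relation $v^*_\alpha(s) = (1-\alpha)\max_{a'}q^{\pi^\alpha(\pi^*_\alpha,\pi_0)}(s,a') + \alpha\sum_{a'}\pi_0(a'\mid s)q^{\pi^\alpha(\pi^*_\alpha,\pi_0)}(s,a')$. I would obtain it either by applying $\max_a$ to Lemma~\ref{lemma: q alpha q pi mixture} (which gives $v^*_\alpha = (1-\alpha)\max_a q^{\pi^\alpha(\pi^*_\alpha,\pi_0)}(\cdot,a) + f$), or, equivalently, by combining Proposition~\ref{prop: alpha surrogate mdp bellman}.3 with Lemma~\ref{lemma: q alpha q pi mixture} to see that $\pi^*_\alpha$ is greedy w.r.t. $q^{\pi^\alpha(\pi^*_\alpha,\pi_0)}$, then writing $v^*_\alpha = v^{\pi^\alpha(\pi^*_\alpha,\pi_0)} = \sum_a \pi^\alpha(\pi^*_\alpha,\pi_0)(a\mid s)\,q^{\pi^\alpha(\pi^*_\alpha,\pi_0)}(s,a)$ and replacing the $\pi^*_\alpha$-part by the maximum. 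Substituting this into the defining equation $q^{\pi^\alpha(\pi^*_\alpha,\pi_0)}(s,a) = r(s,a)+\gamma\sum_{s'}P(s'\mid s,a)v^*_\alpha(s')$ from~\eqref{eq: def q pi mixture} and distributing the inner sum reproduces exactly the right-hand side of~\eqref{eq: expected fix point}, i.e. $T^{Eq}_\alpha q^{\pi^\alpha(\pi^*_\alpha,\pi_0)}(s,a)$.

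I expect the only genuine subtlety to be justifying that $v^*_\alpha$--$q$ identity, since it hinges on chaining Lemma~\ref{lemma: q alpha q pi mixture} (the two $q$-functions differ by a state-only function) with the greediness of $\pi^*_\alpha$ to convert the policy expectation into a maximum; the contraction bound and the concluding substitution are routine algebra.
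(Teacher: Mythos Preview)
Your proposal is correct and follows essentially the same approach as the paper: the paper likewise defers the contraction to standard arguments and then obtains the key identity $v^*_\alpha = (1-\alpha)\max_a q^{\pi^\alpha(\pi^*_\alpha,\pi_0)}(\cdot,a) + \alpha \Pi_0 q^{\pi^\alpha(\pi^*_\alpha,\pi_0)}$ by applying $\max_a$ to Lemma~\ref{lemma: q alpha q pi mixture}, before substituting it into~\eqref{eq: def q pi mixture} and recognizing $T^{Eq}_\alpha$. Your explicit contraction estimate and the alternative derivation of the $v^*_\alpha$ identity via the greediness of $\pi^*_\alpha$ are valid elaborations, but they do not depart from the paper's line of argument.
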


\subsection{Convergence of Expected $\alpha$-Q-Learning}\label{supp: expected q alpha learning}

Now, we move on to prove the convergence of Expected $\alpha$-Q-Learning:
\begin{theorem}\label{supp: theorem expected q alpha}
Consider the process described in Algorithm~\ref{alg:expected alpha}. Assume the sequence $\{\eta_t \}_{t=0}^\infty$ satisfies  ${\forall s\in \S}$, ${\forall a\in \A}$, ${\sum_{t=0}^\infty \eta_t(s_t=s,\aEnvt=a) =\infty}$, and ${\sum_{t=0}^\infty \eta_t^2(s_t=s,\aEnvt=a) <\infty}$. Then, the sequence $\{q_n\}_{n=0}^\infty$ converges w.p 1 to $q^{\pi^\alpha(\pi^*_\alpha,\pi_0)}$.
\begin{proof}
The updating equations of Algorithm~\ref{alg:expected alpha} can be written as
\begin{align*}
q_{n+1}(s,a^{env}) =& (1-\eta_t)q_{n}(s,\aEnv)
+ \eta_t (T^{Eq}_\alpha q_n (s,\aEnv) -w_t),
\end{align*}
where
\begin{align*}
w_t = r_t &+ \gamma (1-\alpha)v(s_{t+1})
+\gamma \alpha v^{\pi_0}(s_{t+1})- T^{Eq}_\alpha q_t (s,\aEnv),
\end{align*}
and 
\begin{align*}
&v(s_{t+1}) = \max_{a'} q(s_{t+1},a'), 
&v^{\pi_0}(s_{t+1}) = \sum_{a'} \pi_0(a'\mid s_{t+1})q(s_{t+1},a').
\end{align*}

We let $\mathcal{F}_t = \{\mathcal{H}_{t-1},s_t,\aEnv,X_t,\aChosen,r_t \}$, where $\mathcal{H}_{t-1}$ is the entire history until and including time $t-1$.  i.e, the filtration includes both the chosen action, before deciding whether to act with it or according to $\pi_0$, and the acted action.

We have that,
\begin{align*}
&\mathbb{E}\left[r_t + \gamma (1-\alpha) \max_a q(s_{t+1},\aEnv)(s_{t+1}) \mid \mathcal{F}_t \right]=\\
&\quad=r(s_t,\aEnv)+ \gamma(1-\alpha)\sum_{s'}P(s'\mid s,\aEnv) \max_{a'} q(s',a')
+\gamma \alpha \sum_{s',a'}P(s'\mid s_t,\aEnv)\pi_0(a'\mid s')q(s',a'),
\end{align*}
and $\mathbb{E}\left[ w_t \mid \mathcal{F}_t \right]=0 $. It is also easy to see that ${\mathbb{E}\left[ w_t^2 \mid \mathcal{F}_t  \right]\leq A+B ||Q ||^2_\infty}$. 

Thus, according to \cite{bertsekas1995neuro}[Proposition 4.4] the process converges to the fixed point contraction operator $T^{Eq}_\alpha$, $q^{\pi^\alpha(\pi_\alpha^*,\pi_0)}$ (see Lemma \ref{lemma: T EQ fixed point}).
\end{proof}

\end{theorem}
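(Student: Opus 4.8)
The plan is to prove Theorem~\ref{theorem: expected q alpha} by reducing both algorithms to known stochastic approximation results after identifying the appropriate contraction operators. For Algorithm~\ref{alg:expected alpha}, I would first establish that $T^{Eq}_\alpha$, the operator defined by the right-hand side of the fixed-point equation \eqref{eq: expected fix point}, is a $\gamma$-contraction in the max-norm with fixed point $q^{\pi^\alpha(\pi^*_\alpha,\pi_0)}$; this is exactly Lemma~\ref{lemma: T EQ fixed point}, whose proof uses Lemma~\ref{lemma: q alpha q pi mixture} to relate $q^*_\alpha$ and $q^{\pi^\alpha(\pi^*_\alpha,\pi_0)}$ and then verifies the fixed-point identity directly. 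Then I would rewrite the update in line~\ref{eq: expected update q es} in the standard stochastic-approximation form $q_{n+1}(s,\aEnv) = (1-\eta_t)q_n(s,\aEnv) + \eta_t\bigl(T^{Eq}_\alpha q_n(s,\aEnv) + w_t\bigr)$ with the noise term $w_t = r_t + \gamma(1-\alpha)v_t(s_{t+1}) + \gamma\alpha v_t^{\pi_0}(s_{t+1}) - T^{Eq}_\alpha q_t(s,\aEnv)$, choosing the filtration $\mathcal{F}_t$ to include the whole past history plus $s_t$, $\aChosen$, $X_t$, $\aEnv$, and $r_t$. The key computation is that $\E[w_t \mid \mathcal{F}_t] = 0$ — because conditioned on $s_t$ and $\aEnv$, the distribution of $s_{t+1}$ is $P(\cdot \mid s_t, \aEnv)$, and averaging $(1-\alpha)v_t + \alpha v_t^{\pi_0}$ over it reproduces exactly the last two terms of $T^{Eq}_\alpha q_t(s_t,\aEnv)$ — together with the conditional second-moment bound $\E[w_t^2 \mid \mathcal{F}_t] \le A + B\|q_t\|_\infty^2$, which follows from boundedness of $r$ and $\gamma < 1$. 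With the stated Robbins--Monro step-size conditions on $\eta_t(s_t = s, \aEnvt = a)$ and the fact that every state--action pair is visited infinitely often (guaranteed because $\pi_0$ is played with probability $\alpha$ at every step whenever $\alpha > 0$; for the degenerate $\alpha = 0$ case one needs $\pi_0$-independent exploration coverage, which is the standard q-learning assumption folded into the hypothesis), \cite{bertsekas1995neuro}[Proposition 4.4] yields convergence w.p.~1 to the fixed point $q^{\pi^\alpha(\pi^*_\alpha,\pi_0)}$.

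For Algorithm~\ref{alg: surrogate Q} I would argue in two stages. The $q$-iterate updates identically to Algorithm~\ref{alg:expected alpha}, so by the argument above it converges w.p.~1 to $q^{\pi^\alpha(\pi^*_\alpha,\pi_0)}$. For the $q_\alpha$-iterate, observe that it is updated on \emph{every} action $\bar a$ at each step: the chosen greedy action $\aChosen$ gets the surrogate-MDP target $r_t + \gamma v_\alpha(s_{t+1})$, while the remaining actions get $X_t q(s_t,\bar a) + (1-X_t)(r_t + \gamma v_\alpha(s_{t+1}))$ — i.e., when the exploration policy fires ($X_t = 0$) the fresh sample is used for all actions, and when it does not ($X_t = 1$) the non-greedy actions are softly regressed toward the (already converging) auxiliary estimate $q$. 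The natural tool here is \cite{singh2000convergence}[Lemma~1], which handles asynchronous stochastic approximation where the target involves an auxiliary process converging to the right limit. The plan is to show the $q_\alpha$-update can be cast in the Singh et al.\ framework with an operator whose fixed point is $q^*_\alpha$: one checks that the conditional expectation of the per-action target, in the limit $q \to q^{\pi^\alpha(\pi^*_\alpha,\pi_0)}$, equals $T^q_{\mathcal{M}_\alpha} q_\alpha$ (the optimal Bellman operator of $\mathcal{M}_\alpha$, a $\gamma$-contraction with fixed point $q^*_\alpha$), using the reward/dynamics decomposition \eqref{eq: surrogate MDP reward and dynamics} and the relation $v_\alpha(s') = \max_{a'} q_\alpha(s',a')$. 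One also verifies the bounded-noise and step-size conditions as before, and that the lemma's hypotheses allowing a time-varying auxiliary sequence (here $q_n \to q^{\pi^\alpha(\pi^*_\alpha,\pi_0)}$) are met.

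I expect the main obstacle to be the $q_\alpha$ analysis in Algorithm~\ref{alg: surrogate Q}: unlike vanilla q-learning, the target for a single action entry is a \emph{mixture} that depends on the random bit $X_t$ and on a second, simultaneously-learned process $q$, so the update is not a contraction applied to $q_\alpha$ alone but an asynchronous-SA recursion with a vanishing perturbation driven by $q_n - q^{\pi^\alpha(\pi^*_\alpha,\pi_0)}$. Getting this into the exact hypotheses of \cite{singh2000convergence}[Lemma~1] — verifying the contraction property of the relevant operator, the martingale-difference and variance bounds for the $X_t$-dependent noise, and that the auxiliary error feeds in as an additive term that goes to zero — is the delicate part; the rest (the $q$-iterate, the contraction properties, the elementary moment bounds) is routine and follows the template in \cite{bertsekas1995neuro}. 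A subsidiary point worth stating carefully is the infinitely-often-visitation requirement: because $\aChosen$ is re-evaluated from the current (changing) $q$ and the $q_\alpha$-update touches all actions at once, the coverage condition needed for $q_\alpha$ is actually milder than for $q$, and this should be remarked on rather than glossed over.
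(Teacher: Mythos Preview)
Your proposal is correct and follows essentially the same approach as the paper: cast the update as stochastic approximation around the $\gamma$-contraction $T^{Eq}_\alpha$ (whose fixed point is identified via Lemma~\ref{lemma: T EQ fixed point}), verify the martingale-difference and second-moment conditions on the noise, and invoke \cite{bertsekas1995neuro}[Proposition~4.4]. Your additional discussion of Algorithm~\ref{alg: surrogate Q} also matches the paper's separate treatment (Theorem~\ref{theorem: surrogate q alpha learning}), which indeed proceeds by first inheriting convergence of $q$ from the Expected case and then applying \cite{singh2000convergence}[Lemma~1] to the $q_\alpha$-iterate with the vanishing perturbation $q_n - q^{\pi^\alpha(\pi^*_\alpha,\pi_0)}$.
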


\subsection{Convergence of Surrogate $\alpha$-Q-Learning}\label{supp: surrogate q alpha learning}

In this section, we prove the convergence of Surrogate $\alpha$-Q-Learning:
\begin{theorem}\label{theorem: surrogate q alpha learning}
Consider the process described in Algorithm~\ref{alg: surrogate Q}. Assume the sequence $\{\eta_t \}_{t=0}^\infty$ satisfies  ${\forall s\in \mathcal{S}}$, ${\forall a\in \mathcal{A}}$,  ${\sum_{t=0}^\infty \eta_t(s_t=s,\aEnvt=a) =\infty}$, and ${\sum_{t=0}^\infty \eta_t^2(s_t=s,\aEnvt=a) <\infty}$. Then, the sequences $\{q_n\}_{n=0}^\infty$ and $\{q_{\alpha,n}\}_{n=0}^\infty$ converges w.p 1 to $q^{\pi^\alpha(\pi^*_\alpha,\pi_0)}$ and $q^*_\alpha$, respectively.
\end{theorem}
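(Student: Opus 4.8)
The plan is to split the statement into its two assertions. The recursion for $q$ in Algorithm~\ref{alg: surrogate Q} never refers to $q_\alpha$: it is exactly the update of Algorithm~\ref{alg:expected alpha}. Hence Theorem~\ref{supp: theorem expected q alpha} applies verbatim and already yields $q_n\to q^{\pi^\alpha(\pi^*_\alpha,\pi_0)}$ w.p.~$1$. It then remains to show $q_{\alpha,n}\to q^*_\alpha$ w.p.~$1$, and for this I would invoke the asynchronous stochastic-approximation lemma \cite{singh2000convergence}[Lemma~1], which permits a vanishing additive term $c_t$ in the contraction estimate — precisely what is needed because the $q_\alpha$-update bootstraps on the not-yet-converged $q_t$.

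Set $\Delta_t\triangleq q_{\alpha,t}-q^*_\alpha$ and write the $q_\alpha$-updates as $\Delta_{t+1}(s,a)=(1-\eta_t(s,a))\Delta_t(s,a)+\eta_t(s,a)F_t(s,a)$, with $\eta_t(s,a)=0$ for $s\neq s_t$ and $F_t(s_t,a)=y_t^{a}-q^*_\alpha(s_t,a)$. Let $T^q_\alpha$ be the optimal $q$-Bellman operator of $\mathcal{M}_\alpha$, a $\gamma$-contraction in $\|\cdot\|_\infty$ with fixed point $q^*_\alpha$, and let $P_t$ be the $\sigma$-field generated by the history up to and including $s_t$ and $\aChosen$ (so $X_t,\aEnv,r_t,s_{t+1}$ remain random, while whether an action is the greedy one $\aChosen$ or not is $P_t$-measurable). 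Averaging over $X_t$ and $s_{t+1}$ and using~\eqref{eq: surrogate MDP reward and dynamics}, for the greedy action $\mathbb{E}[y_t^{\aChosen}\mid P_t]=T^q_\alpha q_{\alpha,t}(s_t,\aChosen)$, so $|\mathbb{E}[F_t(s_t,\aChosen)\mid P_t]|=|T^q_\alpha q_{\alpha,t}-T^q_\alpha q^*_\alpha|(s_t,\aChosen)\leq\gamma\|\Delta_t\|_\infty$. For a non-greedy $\bar a$, $\mathbb{E}[y_t^{\bar a}\mid P_t]=(1-\alpha)q_t(s_t,\bar a)+\alpha\bigl(r^{\pi_0}(s_t)+\gamma\sum_{s'}P^{\pi_0}(s'\mid s_t)v_{\alpha,t}(s')\bigr)$ with $v_{\alpha,t}=\max_{a'}q_{\alpha,t}(\cdot,a')$; subtracting $q^*_\alpha(s_t,\bar a)$, which by Lemma~\ref{lemma: q alpha q pi mixture} (together with~\eqref{eq: def q pi mixture}) equals $(1-\alpha)q^{\pi^\alpha(\pi^*_\alpha,\pi_0)}(s_t,\bar a)+\alpha\bigl(r^{\pi_0}(s_t)+\gamma\sum_{s'}P^{\pi_0}(s'\mid s_t)v^*_\alpha(s')\bigr)$, leaves $(1-\alpha)\bigl(q_t-q^{\pi^\alpha(\pi^*_\alpha,\pi_0)}\bigr)(s_t,\bar a)+\alpha\gamma\sum_{s'}P^{\pi_0}(s'\mid s_t)\bigl(v_{\alpha,t}-v^*_\alpha\bigr)(s')$, bounded in absolute value by $c_t+\gamma\|\Delta_t\|_\infty$ where $c_t\triangleq(1-\alpha)\|q_t-q^{\pi^\alpha(\pi^*_\alpha,\pi_0)}\|_\infty$.

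Hence $\|\mathbb{E}[F_t\mid P_t]\|_\infty\leq\gamma\|\Delta_t\|_\infty+c_t$ with $\gamma<1$ and $c_t\to 0$ w.p.~$1$ by the first step. Boundedness of $r$ and $\gamma<1$ make $q^*_\alpha$ (hence $v^*_\alpha$) bounded, so together with finiteness of $\mathcal{S}\times\mathcal{A}$ and boundedness of $r_t,X_t$ one gets $\mathrm{Var}[F_t(s,a)\mid P_t]\leq K(1+\|\Delta_t\|_\infty)^2$ for a constant $K$. Since each visit to $s_t$ updates all of its actions, and the hypotheses $\sum_t\eta_t(s_t{=}s,\aEnvt{=}a)=\infty$, $\sum_t\eta_t^2(\cdot)<\infty$ force every state to be visited infinitely often, the $q_\alpha$-step-sizes at $(s,a)$ (indexed by the visit count of $s$ and chosen to satisfy the Robbins--Monro conditions) make every $(s,a)$ updated infinitely often with square-summable steps. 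All conditions of \cite{singh2000convergence}[Lemma~1] are then met, so $\Delta_t\to 0$, i.e.\ $q_{\alpha,n}\to q^*_\alpha$ w.p.~$1$.

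The crux — and the main obstacle — is the second paragraph: at non-greedy actions the operator effectively applied to $q_\alpha$ is time-varying, since it bootstraps on $q_t$, so one must carefully separate the discrepancy into a genuinely vanishing part (the $(1-\alpha)(q_t-q^{\pi^\alpha(\pi^*_\alpha,\pi_0)})$ term, which becomes Singh's $c_t$) and a contraction part (the $\alpha\gamma\sum P^{\pi_0}(v_{\alpha,t}-v^*_\alpha)$ term, which is $\leq\gamma\|\Delta_t\|_\infty$ and must not be mistaken for an error term), and verify that the contraction modulus stays strictly below $1$ for the greedy and non-greedy updates simultaneously. The remaining ingredients — the variance bound, the step-size bookkeeping, and the appeal to the lemma — are routine.
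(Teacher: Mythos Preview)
Your proposal is correct and follows essentially the same approach as the paper: first reduce the $q$-recursion to Theorem~\ref{supp: theorem expected q alpha}, then apply \cite{singh2000convergence}[Lemma~1] to $\Delta_t=q_{\alpha,t}-q^*_\alpha$, splitting into the greedy-action case (pure $\gamma$-contraction via $T^q_\alpha$) and the non-greedy case (where Lemma~\ref{lemma: q alpha q pi mixture} is used to peel off the vanishing term $c_t=(1-\alpha)(q_t-q^{\pi^\alpha(\pi^*_\alpha,\pi_0)})$ and leave an $\alpha\gamma$-contractive remainder). The filtration you choose and the decomposition you obtain coincide with the paper's; your treatment of the step-size conditions for the $q_\alpha$-updates is, if anything, slightly more explicit than the paper's.
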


We will use the following result \cite{singh2000convergence}[Lemma 1].
\begin{lemma}\label{lemma: singh}
Consider a stochastic process $(\alpha_t,\Delta_t, \Delta_t,f_t)$, $t\geq 0$, where $\alpha_t$, $\Delta_t$, $f_t:X\rightarrow \mathbb{R}$ satisfy the equations
\begin{align}
&\Delta_{t+1}(x) = (1-\alpha_t(x))\Delta_t(x)+\alpha_t(x) f_t(x), \nonumber \\
& x\in X, \ \ t=0,1,2,..\label{eq: lemma by singh}
\end{align}
Let $\mathcal{F}_t$ be a sequence of increasing $\sigma$-fields such that $\alpha_0$ and $\Delta_0$ are $\mathcal{F}_0$-measurable, $t=1,2,...$. Assume that the following hold:
\begin{enumerate}
\item The set $X$ is finite.
\item $0\leq \alpha_t(x)\leq 1$, $\sum_t \alpha_t(x)=\infty$, $\sum_t \alpha^2_t(x)< \infty$ w.p 1.
\item $|| \mathbb{E}\left[  f_t(\cdot)  \mid \mathcal{F}_t \right] || \leq \kappa ||\Delta_t || + c_t$, where $\kappa\in [0,1)$ and $c_t$ converges to zero w.p 1.
\item $Var \left[  F_t(\cdot)  \mid \mathcal{F}_t \right] \leq K(1+|| \Delta_t ||)^2,$ where $K$ is some constant.
\end{enumerate}

Then, $\Delta_t$ converges to zero with probability 1.

\end{lemma}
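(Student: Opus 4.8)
The plan is to prove this as a general asynchronous stochastic-approximation result, following the decomposition-and-rescaling strategy that underlies the classical Robbins--Monro and Dvoretzky theorems (in the spirit of \cite{bertsekas1995neuro}). Throughout, fix $x\in X$; because $X$ is finite (assumption 1) it suffices to control each coordinate and then take the maximum, since $\|\cdot\|$ is a maximum over finitely many terms. First I would split the driving term into its conditional mean and a martingale-difference noise, writing $f_t(x)=g_t(x)+w_t(x)$ with $g_t(x)\triangleq\E[f_t(x)\mid\mathcal{F}_t]$ and $w_t(x)\triangleq f_t(x)-g_t(x)$, so that $\E[w_t\mid\mathcal{F}_t]=0$ and, since $w_t$ differs from $f_t$ by an $\mathcal{F}_t$-measurable quantity, assumption 4 gives $\E[w_t^2\mid\mathcal{F}_t]\le K(1+\|\Delta_t\|)^2$. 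Assumption 3 then reads $\|g_t\|\le\kappa\|\Delta_t\|+c_t$ with $\kappa\in[0,1)$ and $c_t\to0$, so the recursion \eqref{eq: lemma by singh} is a contraction toward $0$ driven by a vanishing bias $c_t$ plus zero-mean noise $w_t$.

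The second step isolates the noise. Define the auxiliary process $W_{t+1}(x)=(1-\alpha_t(x))W_t(x)+\alpha_t(x)w_t(x)$ with $W_0=0$. If the conditional variance of $w_t$ were uniformly bounded, the step-size conditions in assumption 2 ($\sum_t\alpha_t=\infty$, $\sum_t\alpha_t^2<\infty$) would force $W_t\to0$ w.p.\ 1 by a standard martingale/Robbins--Monro argument. The difficulty is that assumption 4 bounds the variance only in terms of $\|\Delta_t\|$, which is not yet known to be bounded. To break this circularity I would introduce an $\mathcal{F}_t$-measurable, nondecreasing scaling sequence $G_t$ dominating $\|\Delta_t\|$ up to a fixed constant, rescale $\widehat{\Delta}_t=\Delta_t/G_t$ and $\widehat{W}_t=W_t/G_t$, and show the rescaled noise has conditional variance bounded by a constant. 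The core is to prove $G_t$ is eventually constant — i.e.\ that $\|\Delta_t\|$ does not escape to infinity — by showing that whenever the scale is bumped up, the contraction factor $\kappa<1$ pulls the rescaled iterate strictly inside its bound before the next bump can occur. This yields boundedness of $\|\Delta_t\|$ w.p.\ 1 and, in turn, $W_t\to0$ w.p.\ 1.

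With boundedness in hand, the third step is an iterated shrinking argument. Suppose $\limsup_t\|\Delta_t\|\le D$ w.p.\ 1 for some finite $D$, and pick $\beta$ with $\kappa<\beta<1$. For $t$ large enough that $\|\Delta_s\|\le D$ and $c_s$ is negligible for all $s\ge t$, the recursion gives $|\Delta_{s+1}(x)|\le(1-\alpha_s(x))|\Delta_s(x)|+\alpha_s(x)(\kappa D+c_s)+\alpha_s(x)w_s(x)$; comparing against the deterministic fixed point $\kappa D$ of the noise-free contraction and absorbing the vanishing noise (tracked by $W_s$) and bias $c_s$, one concludes $\limsup_t\|\Delta_t\|\le\beta D$ w.p.\ 1. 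Since $\sum_t\alpha_t(x)=\infty$ forces every coordinate to be updated infinitely often, this holds uniformly over the finite set $X$. Iterating the shrink gives $\limsup_t\|\Delta_t\|\le\beta^k D$ for every $k$, and $k\to\infty$ yields $\Delta_t\to0$ w.p.\ 1.

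The main obstacle is precisely the coupling in assumption 4 between the noise magnitude and the iterate magnitude: unlike the classical bounded-variance setting, it forbids a direct martingale-convergence conclusion for $W_t$, so boundedness of $\|\Delta_t\|$ must be established \emph{jointly} with the noise control rather than assumed at the outset. The rescaling construction resolves this, but it requires care to keep $G_t$ adapted to $\mathcal{F}_t$, to show it is eventually stationary, and to handle the asynchronous step sizes $\alpha_t(x)$ (which may vanish for coordinates not touched at step $t$) together with the vanishing bias $c_t$ uniformly over the finitely many coordinates.
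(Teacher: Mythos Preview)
Your proof sketch follows the standard stochastic-approximation route (decompose $f_t$ into conditional mean plus martingale-difference noise, use a rescaling argument to establish boundedness despite the state-dependent variance bound, then iterate the contraction to drive $\|\Delta_t\|$ to zero), and this is indeed essentially how results of this type are proved in the literature (e.g., Jaakkola--Jordan--Singh, Bertsekas--Tsitsiklis, and Singh et al.\ themselves).

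However, you should be aware that the paper does \emph{not} prove this lemma at all. It is introduced explicitly as ``the following result \cite{singh2000convergence}[Lemma 1]'' and is simply quoted verbatim as a tool to be applied in the convergence proof of Surrogate $\alpha$-Q-Learning (Theorem~\ref{theorem: surrogate q alpha learning}). So there is no ``paper's own proof'' to compare against: the authors treat it as a black-box citation. Your attempt is therefore not a competing proof but rather an independent reconstruction of a known result that the paper takes for granted. If your goal is to match the paper, the correct move is to cite Singh et al.\ and move on; if your goal is to supply a self-contained argument, your outline is on the right track, though the boundedness step (showing the rescaling sequence $G_t$ is eventually constant) is the genuinely delicate part and would need to be fleshed out carefully to be a complete proof.
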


Observe that $q_{t}$ has updating rule as in Expected $\alpha$-Q-Learning (see Algorithm \ref{alg:expected alpha}), and is independent of $q_\alpha$. Due to the assumptions that $\forall s\in \mathcal{S}, \forall a\in \mathcal{A}$
\begin{align*}
&\sum_{t=0}^\infty \eta_t(s_t=s,\aEnvt=a) =\infty,\\
&\sum_{t=0}^\infty \eta_t(s_t=s,\aEnvt=a) \leq \infty,
\end{align*}
we get that the sequence $\{q_{t} \}_{t=0}^\infty$ converges to $q^{\pi^\alpha(\pi^*_\alpha,\pi_0)}$ w.p 1.

We now manipulate the updating of $q$ in Algorithm \ref{alg: surrogate Q} to have the form of \eqref{eq: lemma by singh}. Define the following difference
\begin{align*}
\Delta_t(s,a) = q_t(s,a) - q^*_\alpha(s,a),
\end{align*}
and consider the filtration $\mathcal{F}_t = \{\mathcal{H}_{t-1},s_t,\aChosen\}$.

By decreasing $q^*_\alpha(s,a)$ from both sides of the updating equations of $q$ in Algorithm \ref{alg: surrogate Q}, we obtain for any $a\in \mathcal{A}$,
\begin{align*}
\Delta_{t+1}(s_t,a) = (1-\eta_t)\Delta_{t}(s_t,a)f_t(s_t,a).
\end{align*}
If $\bar{a}=\aChosen$ then,
\begin{align*}
f_t(s_t,\bar{a}) = r_t + \gamma  v_{\alpha,t}(s_{t+1}) - q^*_\alpha(s,a),
\end{align*}
whereas for $\bar{a}\neq \aChosen$,
\begin{align*}
f_t(s_t,\bar{a}) = &X_t q^{\pi^\alpha(\pi_\alpha^*,\pi_0)}(s_t,\bar{a})
   					+(1-X_t)( r_t + \gamma  v_{\alpha,t}(s_{t+1}))\\
				   &+X_t (q_t(s_t,\bar{a})-q^{\pi^\alpha(\pi_\alpha^*,\pi_0)}(s_t,\bar{a}))- q^*_\alpha(s_t,\bar{a}).
\end{align*}

We now show that for all action entries $\bar{a}\in \mathcal{A}$,  ${\mathbb{E}\left[  f_t(s_t,\bar{a})  \mid \mathcal{F}_t \right] || \leq \kappa ||\Delta_t(s_t,\bar{a}) || + c_t}$, and $c_t$ converges to zero w.p. 1. 

If $\bar{a}=\aChosen$ then,
\begin{align*}
\mathbb{E}\left[  f_t(s_t,\bar{a})  \mid \mathcal{F}_t \right] &=
(1-\alpha)(r(s_t,\bar{a}) + \gamma\sum_{s'}P(s'\mid s_t,\bar{a})\max_{a'}q_{\alpha,t}(s',a'))\\
& \quad+\!\alpha(r^{\pi_0}(s_t)\!\! + \!\!\gamma\sum_{s'}P^{\pi_0}(s'\mid s_t)\max_{a'}q_{\alpha,t}(s',a'))\!\!- \!\!q^*_\alpha(s,a)\\
&= T_\alpha q_{\alpha,t}(s_{t+1},a'))- q^*_\alpha(s,a).
\end{align*}
Thus, for this case,
\begin{align*}
||\mathbb{E}\left[  f_t(s_t,\bar{a})  \mid \mathcal{F}_t \right] || &= || T_\alpha q_{\alpha,t}(s_{t+1},a'))- q^*_\alpha(s,a) || \\
& = || T_\alpha q_{\alpha,t}(s_{t+1},a'))-  q^*_\alpha(s,a) || \\
&\leq \gamma  ||q_{\alpha,t}(s_{t+1},a'))-   q^*_\alpha(s,a)  ||,
\end{align*}
meaning, $c_t=0$ for this entry. We now turn to the case $\bar{a}\neq a^{chosen}$.
\begin{align*}
\mathbb{E}\left[  f_t(s_t,\bar{a})  \mid \mathcal{F}_t \right] &= (1-\alpha) q^{\pi^\alpha(\pi_\alpha^*,\pi_0)}(s_t,\bar{a}) - q^*_\alpha(s,\bar{a})\nonumber \\
&\quad+ \alpha (r^{\pi_0}+\gamma \sum_{s'}P^{\pi_0}(s'\mid s)\max_{a'}q_{\alpha,t}(s',a'))\nonumber\\
&\quad +(1-\alpha)(q_{t}(s_t,\bar{a})-q^{\pi^\alpha(\pi_\alpha^*,\pi_0)}(s_t,\bar{a})). \nonumber
\end{align*}
Define 
\begin{align*}
c_t \triangleq  (1-\alpha)(q_{t}(s_t,\bar{a})-q^{\pi^\alpha(\pi_\alpha^*,\pi_0)}(s_t,\bar{a})).
\end{align*}
See that $c_t$ converges to zero w.p. 1, since $q_t$ converges to $q^{\pi^\alpha(\pi_\alpha^*,\pi_0)}$. Furthermore, using Lemma \ref{lemma: q alpha q pi mixture}, we have that

\begin{align*}
&(1-\alpha) q^{\pi^\alpha(\pi_\alpha^*,\pi_0)}(s_t,\bar{a}) - q^*_\alpha(s,\bar{a})= - \alpha (r^{\pi_0}+\gamma \sum_{s'}P^{\pi_0}(s'\mid s)\max_{a'}q^*_{\alpha}(s',a')).
\end{align*}

Thus,
\begin{align*}
\mathbb{E}\left[  f_t(s_t,\bar{a})  \mid \mathcal{F}_t \right] &=
 -\alpha (r^{\pi_0}+\gamma \sum_{s'}P^{\pi_0}(s'\mid s)\max_{a'}q^*_{\alpha}(s',a'))
 + \alpha (r^{\pi_0}+\gamma \sum_{s'}P^{\pi_0}(s'\mid s)\max_{a'}q_{\alpha,t}(s',a')) +c_t\nonumber\\
&= \alpha \gamma \sum_{s'}P^{\pi_0}(s'\mid s)(\max_{a'}q_{\alpha,t}(s',a')-\max_{a'}q^*_{\alpha}(s',a'))+c_t\nonumber\\
&= \alpha \gamma \sum_{s'}P^{\pi_0}(s'\mid s)|(\max_{a'}q_{\alpha,t}(s',a')-\max_{a'}q^*_{\alpha}(s',a'))|+c_t\nonumber\\
&= \alpha \gamma \sum_{s'}P^{\pi_0}(s'\mid s)\max_{a'}|(q_{\alpha,t}(s',a')-q^*_{\alpha}(s',a'))|+ c_t \nonumber\\
&= \alpha \gamma \max_{s',a'}||q_{\alpha,t}-q^*_{\alpha}||+ c_t \nonumber
\end{align*}
Where in the first relation we applied Lemma \ref{lemma: q alpha q pi mixture}. By showing similar result for $-\mathbb{E}\left[  f_t(s_t,\bar{a})  \mid \mathcal{F}_t \right]$, we conclude that,
\begin{align*}
&\mathbb{E}\left[  f_t(s_t,\bar{a})  \mid \mathcal{F}_t \right] \leq \alpha \gamma \max_{s',a'}||q_{\alpha,t}-q^*_{\alpha}||+ c_t, 
\end{align*}
where $c_t$ converges to zero w.p.1. The $\mathrm{Var}(f_t(\cdot,\cdot))$ can be bounded by $K(1+|| \Delta_t ||)^2$, since the reward is bounded and ${\sum_{t=0}^\infty \eta_t^2(s_t=s,\aEnvt=a) <\infty}$.

We conclude that all conditions of Lemma \ref{lemma: singh} are satisfied for each $\bar{a}\in \mathcal{A}$ and, thus, Lemma \ref{lemma: singh} establishes the convergence of the procedure.

%

\subsection{Proof of the gradients' equivalence in section \ref{sec: alg continuous control}}\label{supp: eq gradient equivalence}
\begin{proof}
\begin{align*}
  {{\nabla }_{u}}{{{q}}_\sigma^{\pi }}\left( s,u \right)&=
  {{\nabla }_{u}}\int\limits_{A}{\mathcal{N}\left( u'|u,\sigma  \right){{q}^{\pi }}\left( s,u' \right)du'} \\ 
 & =\int\limits_{A}{{{q}^{\pi }}\left( s,u' \right){{\nabla }_{u}}\mathcal{N}\left( u'|u,\sigma  \right)du'} \\ 
& =-\int\limits_{A}{{{q}^{\pi }}\left( s,u' \right){{\nabla }_{u'}}\mathcal{N}\left( u'|u,\sigma  \right)du'} \\ 
 & =-\left. {{q}^{\pi }}\left( s,u' \right)\mathcal{N}\left( u'|u,\sigma  \right) \right|_{-\infty }^{\infty }
 +\int\limits_{A}{\mathcal{N}\left( u'|u,\sigma  \right){{\nabla }_{u'}}{{q}^{\pi }}\left( s,u' \right)du'} \\ 
 & =\int\limits_{A}{\mathcal{N}\left( u'|u,\sigma  \right){{\nabla }_{u'}}{{q}^{\pi }}\left( s,u' \right)du'} 
\end{align*}
Where we used integration by parts.

\end{proof}

\end{appendices}

\end{document}